\def\eqref#1{equation~\ref{#1}}
\def\1{\bm{1}}
\DeclareMathAlphabet{\mathsfit}{\encodingdefault}{\sfdefault}{m}{sl}
\SetMathAlphabet{\mathsfit}{bold}{\encodingdefault}{\sfdefault}{bx}{n}
\newcommand{\E}{\mathbb{E}}
\newcommand{\R}{\mathbb{R}}
\DeclareMathOperator*{\argmax}{arg\,max}
\newtheorem{theorem}{Theorem}
\newtheorem{corollary}{Corollary}
\newtheorem{lemma}{Lemma}
\newtheorem{remark}{Remark}
\newtheorem{definition}{Definition}
\newtheorem{assumption}{Assumption}
\newtheorem{algorithm}{Algorithm}
\title{Knowledge Removal in Sampling-based \\ Bayesian Inference}
\author{Shaopeng Fu$^{1}$\footnotemark[1] \ , Fengxiang He$^{2,1}$\footnotemark[1] \ \& Dacheng Tao$^1$ \\
$^1$The University of Sydney, Australia, \quad
$^2$JD Explore Academy, China \\
\texttt{shfu7008@uni.sydney.edu.au, fengxiang.f.he@gmail.com,} \\
\texttt{dacheng.tao@sydney.edu.au} \\
}
\begin{document}

\maketitle

\renewcommand{\thefootnote}{\fnsymbol{footnote}}
\footnotetext[1]{The authors contributed equally.}

\begin{abstract}
The right to be forgotten has been legislated in many countries, but its enforcement in the AI industry would cause unbearable costs. When single data deletion requests come, companies may need to delete the whole models learned with massive resources. Existing works propose methods to remove knowledge learned from data for explicitly parameterized models, which however are not appliable to the sampling-based Bayesian inference, {\it i.e.}, Markov chain Monte Carlo (MCMC), as MCMC can only infer implicit distributions. In this paper, we propose the first machine unlearning algorithm for MCMC. We first convert the MCMC unlearning problem into an explicit optimization problem. Based on this problem conversion, an {\it MCMC influence function} is designed to provably characterize the learned knowledge from data, which then delivers the MCMC unlearning algorithm. Theoretical analysis shows that MCMC unlearning would not compromise the generalizability of the MCMC models. Experiments on Gaussian mixture models and Bayesian neural networks confirm the effectiveness of the proposed algorithm.
The code is available at \url{https://github.com/fshp971/mcmc-unlearning}.
\end{abstract}

\section{Introduction}

``The right to be forgotten'' refers to the right of individuals to request data controllers such as tech giants to delete the data collected from them.
It has been recognized in many countries through legislation, including the European Union's General Data Protection Regulation (2016) and the California Consumer Privacy Act (2018).
In the field of machine learning, legal experts suggest that companies may need to delete and re-train the machine learning models to meet the legal requirements of the right to be forgotten \citep{li2018humans,garg2020formalizing}.
However, training a machine learning model would usually cost massive amounts of resources, including data, energy, and time.
Thus, the enforcement of the right to be forgotten in the AI industry may result in unbearable costs.



Toward this end, recent works 
\citep{cao2015towards,guo2020certified}
propose {\it machine unlearning algorithms} to efficiently characterize and remove the knowledge learned from specific data.
To enable knowledge removal analysis, existing unlearning approaches usually require the machine learning models to be explicitly parameterized, which however do not cover an important family of machine learning algorithm, the Markov chain Monte Carlo (MCMC).
MCMC is a sampling-based Bayesian inference method that aims to draw samples along a Markov chain to approximate a target posterior distribution
\citep{hastings1970monte,geman1984stochastic,welling2011bayesian}.
Distributions inferred by MCMC are usually implicitly parameterized, which makes it difficult to directly apply existing unlearning algorithms to the MCMC unlearning problem.



In this paper, we propose an MCMC unlearning algorithm to realize the right to be forgotten in the sampling-based Bayesian inference for the first time.
The key of tackling the MCMC unlearning problem is to design methods to directly analyze the implicit parameter of the learned distribution.
To this end, we propose to employ the learned implicit distribution to approximate the target posterior under the measure of KL-divergence, which then converts the MCMC unlearning problem to an optimization problem with an explicit distribution parameter for knowledge removal analysis.
Based on this conversion, an {\it MCMC influence function} is designed \citep{huber2004robust,koh2017understanding} to characterize the knowledge learned from single data samples.
The new influence function then delivers the MCMC unlearning algorithm, which removes data knowledge by directly subtracting the corresponding influence from the learned distribution.

Theoretical analyses are conducted for the proposed MCMC unlearning algorithm.
For the knowledge removal analysis, we prove that the proposed algorithm can provably realize a $\varepsilon$-knowledge removal, which is a new notation defined for evaluating the machine unlearning performance in Bayesian inference.
For the generalization analysis, a PAC-Bayesian generalization upper bound \citep{mcallester1998some,mcallester1999pac,mcallester2003pac,he2019control} is established for the distribution processed by the MCMC unlearning algorithm.
The upper bound shows that the difference introduced by MCMC unlearning in the generalization upper bound is no larger than the order of $\mathcal O(\sqrt{|S'|}/N)$.
This result demonstrates that the proposed algorithm would not compromise the generalization ability of the learned distribution.



In summary, our work has four main contributions:
(1) We enable the knowledge removal analysis in MCMC by converting the MCMC unlearning problem to an optimization problem.
(2) Based on this problem conversion, we design an MCMC influence function to characterize the knowledge learned from data, which then delivers the MCMC unlearning algorithm.
(3) We conduct theoretical analyses and prove that the MCMC unlearning can realize an $\varepsilon$-knowledge removal and has little impact on the generalization ability of the learned distribution.
{\color{black} (4) We conduct comprehensive experiments to verify the effectiveness of the proposed algorithm, which include Gaussian mixture models for clustering on synthetic data, and Bayesian neural networks for classification on real-world dataset.}
The results suggest that the MCMC unlearning can effectively remove the knowledge learned from the requested data while retaining the information learned from other data intact.

\section{Related Works}

\textbf{Markov chain Monte Carlo.}
MCMC is a Bayesian inference method that would draw samples along a Markov chain, in which the drawn samples are proved to converge to that from a targeted distribution \citep{hastings1970monte}.
Although many improvements have been made such as Hamiltonian Monte Carlo (HMC) \citep{duane1987hybrid,neal2011mcmc}, Gibbs sampling \citep{geman1984stochastic,george1993variable} and slice sampling \citep{neal2003slice,walker2007sampling}, MCMC still suffers from huge computational cost when inferring on large-scale dataset, \textit{e.g.} in computer vision and image processing tasks~\citep{NEURIPS2020_9a118833,9229197}, since each sampling step in MCMC requires computation over the whole dataset.

To tackle this issue, \citet{welling2011bayesian} introduce a scalable MCMC sampler named stochastic gradient Langevin dynamics (SGLD) that by adding a proper noise to a standard stochastic gradient optimization algorithm \citep{robbins1951stochastic}, which can help the update iterations converge to samples from the true posterior distribution.
Since that, a variety of stochastic gradient MCMC samplers (SG-MCMC) have been proposed, including stochastic gradient HMC (SGHMC) \citep{chen2014stochastic}, \citet{patterson2013stochastic}, \citet{ahn2012bayesian}, \citet{ding2014bayesian}, and \citet{zhang2020cyclical}.
\citet{ma2015complete} further design a complete framework for constructing SG-MCMC.

\textbf{Machine unlearning.}
Machine unlearning aims to remove the knowledge learned from specific data in efficient manners \citep{cao2015towards}.
Existing approaches can be roughly divided into {\it exact methods} and {\it approximate methods}.
Exact methods aim to recover the model trained without the removed data exactly and are usually realized via reducing the retraining cost.
For example, \citet{ginart2019making} design an efficient unlearning method for $k$-means clustering via model quantization.
\citet{bourtoule2021machine} propose a SISA framework that would only retrain the affected part of the model during data deletion.
Other advances include \citet{ullah2021machine} and \citet{brophy2021machine}.

On the other hand, approximate methods aim to modify the trained model to approximate that trained only on the remaining data.
A series of works \citep{guo2020certified,golatkar2020eternal,golatkar2021mixed} are designed via characterizing the influence of data on the trained model with influence functions \citep{cook1982residuals, huber2004robust, koh2017understanding}.
Other approaches include \citet{baumhauer2020machine} and \citet{izzo2021approximate}.
Some works have studied machine unlearning in Bayesian inference \citep{nguyen2020variational,gong2021bayesian}.
However, these works only focus on variational inference, an optimization-based Bayesian inference method, and could not be applied to sampling-based Bayesian inference, MCMC.
In contrast, our approach is the first machine unlearning method for MCMC, with a theoretical knowledge removal guarantee.


\section{Preliminaries}




Suppose $p(z|\theta)$ is a parameterized distribution over the sample space $\mathcal{Z}$, where $\theta \in \Theta \subset \R^d$ is the parameter with a prior distribution $p(\theta)$.
Suppose $S = \{z_1, z_2, \cdots, z_N\}$ is a dataset consists of $N$ samples, where each sample $z_i \in \mathcal{Z}$ is drawn from the parameterized distribution $p(z|\theta)$, {\it i.e.}, $z_i \sim p(z|\theta)$.
Bayesian inference aims to infer the posterior distribution $p(\theta|S)$ of the parameter $\theta$,
\begin{equation*}
    p(\theta|S) = \frac{p(\theta) \prod_{i=1}^N p(z_i|\theta)}{\int p(\theta) \prod_{i=1}^N p(z_i|\theta) \mathrm{d}\theta}.
\end{equation*}
For simplicity, for the given dataset $S$, we let $p_S$ denotes the true posterior distribution $p(\theta|S)$, and $\hat p_S$ denotes the posterior distribution that inferred by Bayesian inference.

Usually, the denominator of the posterior $p_S$ has no closed-form solution, which barriers directly calculating the posterior distribution.
Thereby, researchers have designed methods to approximate the target posterior.
A canonical approach is the sampling-based Bayesian inference method, Markov chain Monte Carlo (MCMC) \citep{hastings1970monte}.

MCMC infers a posterior distribution via first constructs a Markov chain and then draws samples according to the state of the chain.
When the target posterior is the stationary distribution of the Markov chain, it is proved that the drawn samples will converge to that from the target posterior.
However, MCMC methods usually suffer from scalable issues on large-scale datasets.

To this end, stochastic gradient MCMC (SG-MCMC) \citep{ma2015complete} leverage mini-batch estimation \citep{robbins1951stochastic} to enable scalable sampling.
In SG-MCMC, the posterior distribution $p(\theta|S)$ can be rewritten as $p(\theta|S) \propto \exp(-U(\theta))$, where $U(\theta)$ is the potential function defined as
$U(\theta) = - \sum_{z_i \in S} \log p(z_i | \theta) - \log p(\theta)$.
In each sampling step, drawing samples with SG-MCMC requires one to estimate the potential function $U(\theta)$ with mini-batch data $\tilde S \subset S$ as follows,
\begin{gather*}
\tilde U(\theta) = - \frac{|S|}{|\tilde S|} \sum_{z_i \in \tilde S} \log p(z_i | \theta) - \log p(\theta).
\end{gather*}

Typical SG-MCMC methods include SGLD and SGHMC.
Stochastic gradient Langevin dynamics (SGLD) \citep{welling2011bayesian} inserts Gaussian noise to stochastic gradients.
In each sampling step, SGLD updates the posterior sample $\theta_t$ as follows,
\begin{gather*}
    \theta_{t+1} = \theta_t - \eta_t \nabla_\theta \tilde U(\theta_t) + 2\sqrt{\eta_t} W,
\end{gather*}
where $W$ is drawn from the standard Gaussian distribution $\mathcal{N}(0,I)$ and $\eta_t > 0$ is the step size.

To improve the sampling efficiency of SGLD, stochastic gradient Hamiltonian Monte Carlo (SGHMC) \citep{chen2014stochastic} inserts a momentum $v_t$ into the posterior sample update as below,
\begin{align*}
&\theta_{t+1} = \theta_{t} + v_{t}, \\
&v_{t+1} = (1-\alpha)v_t -\eta_t \nabla_\theta \tilde{U}(\theta_t) + \sqrt{2 \alpha \eta_t} W,
\end{align*}
where $W$ is also drawn from the Gaussian distribution $\mathcal{N}(0,I)$, $\eta_t > 0$ is the step size, and $\alpha \in (0,1)$ is the momentum factor.
The initial momentum term $v_0$ is drawn from the Gaussian distribution $\mathcal{N}(0,\eta_0 I)$.

To ensure the drawn samples converge to the targeted posterior, the step size $\eta_t$ for both SGLD and SGHMC needs to satisfy \citep{welling2011bayesian}: (1) $\sum_{t=1}^\infty \eta_t = \infty$; and (2) $\sum_{t=1}^\infty \eta_t^2 < \infty$.
A typical step size schedule is $\eta_t=a(b+t)^{-r}$, where $a,b>0$ and $r \in (0.5,1]$.

\section{MCMC Unlearning Algorithm}
This section presents the main results of this paper.
We first define a new notion, $\varepsilon$-knowledge removal, to assess the machine unlearning performance in Bayesian inference.
Then, an MCMC unlearning algorithm is designed with knowledge removal and generalizability guarantees.

\subsection{$\varepsilon$-Knowledge Removal for Bayesian Inference}
Suppose a client requests to remove her/his data $S' \subset S$ from the whole training dataset $S$.
An unlearning algorithm $\mathcal A$ for Bayesian inference aims to remove the knowledge learned from the requested data $S'$ from the inferred distribution $\hat p_{S}$ as follows,
\begin{gather*}
\hat p^{-S'}_S = \mathcal A(\hat p_S, S'),
\end{gather*}
where $\hat p^{-S'}_S$ is named as the {\it processed distribution}.

In order to meet the regulation requirements, the following notion is defined to quantify the machine unlearning performance in Bayesian inference.
\begin{definition}[$\varepsilon$-knowledge removal]
For the distribution $\hat p_S$ that inferred on dataset $S$ via Bayesian inference and any subset $S' \subset S$, we call algorithm $\mathcal{A}$ performs $\varepsilon$-knowledge removal, if
\begin{gather*}
\mathrm{KL}(\hat p_{S}^{-S'} \| \hat p_{S-S'}) \leq \varepsilon,
\end{gather*}
where $\hat p^{-S'}_S = \mathcal A(\hat p_S, S')$ is the processed distribution.
\label{def:knowledge-rem}
\end{definition}

\begin{remark}
Intuitively, a smaller $\varepsilon$ indicates the algorithm $\mathcal{A}$ has better unlearning performance.
\end{remark}

In practice, one usually can only obtain an inferred distribution $\hat p_S(\cdot|\omega)$, subjects to a random seed $\omega$.
In this case, the overall inferred distribution is
$\hat p_S = \int \hat p_S(\cdot|\omega) p(\omega) \mathrm{d}\omega$, which suggests that the KL-divergence in Definition \ref{def:knowledge-rem} can then be upper-bounded as below,
\begin{align*}
\mathrm{KL}(\hat p_{S}^{-S'} \| \hat p_{S-S'})
&= \mathrm{KL}(\E_{\omega} \hat p_{S}^{-S'}(\cdot|\omega) \| \E_{\omega} \hat p_{S-S'}(\cdot|\omega))
= \int_{\theta} \E_{\omega} \hat p_{S}^{-S'}(\theta|\omega) \log{\frac{\E_{\omega} \hat p_{S}^{-S'}(\theta|\omega)}{\E_{\omega} \hat p_{S-S'}(\theta|\omega)}} \mathrm{d}\theta \\
&\leq \int \E_{\omega} \left[ \hat p_S^{-S'}(\theta|\omega) \log \frac{\hat p_S^{-S'}(\theta|\omega)}{\hat p_{S-S'}(\theta|\omega)} \right] \mathrm{d}\theta
= \E_{\omega} \mathrm{KL}(\hat p_S^{-S'} \| \hat p_{S-S'} | \omega).
\end{align*}
Motivated by the above inequality, the following {\it knowledge removal estimator} is further defined to efficiently estimate the knowledge removal performance in practice.

\begin{definition}[knowledge removal estimator]
\label{def:knowledge_rem_estimator}
Suppose $\omega_1, \cdots, \omega_M$ is $M$ i.i.d. random seed in Bayesian inference.
Then, the knowledge removal estimator $\hat \varepsilon_M$ is defined as below,
\begin{gather*}
\hat \varepsilon_M = \frac{1}{M} \sum_{m=1}^M \mathrm{KL}(\hat p_S^{-S'} \| \hat p_{S-S'} | \omega_m),
\end{gather*}
where $\hat p_S^{-S'}(\cdot|\omega_m) = \mathcal{A}(\hat p_S(\cdot|\omega_m), S')$ is the processed distribution for the $m$-th random seed $\omega_m$.
\end{definition}

\begin{remark}
Our experiments have employed this knowledge removal estimator $\hat \varepsilon_M$ to estimate the $\varepsilon$ value in the $\varepsilon$-knowledge removal guarantee.
See Section \ref{sec:exp_bnn} for details.
\end{remark}

\begin{remark}
This estimator is similar to the ``Local Forgetting Bound'' defined in \citet{golatkar2020eternal}.
The difference is that the bound in \citet{golatkar2020eternal} aims to estimate the randomness introduced by some ``readout functions'', while $\hat\varepsilon_M$ aims to quantify the $\varepsilon$-knowledge removal guarantee by estimating the difference between the original and processed distributions.
\end{remark}

\subsection{Methodology}
\label{sec:method}

This section presents the MCMC unlearning algorithm.
To make the analysis easier, we assume that one can obtain the exact targeted posterior distribution via MCMC.
In other words, for a posterior $\hat p_S$ that is inferred by MCMC, we assume that $\hat p_S = p_S$.


Developing unlearning algorithm for MCMC is challenging, since there is no explicitly parameterized distribution for knowledge removal analysis.
To tackle this challenge, we re-formulate the MCMC unlearning problem as minimizing the following KL-divergence,
\begin{gather}
    \min_{\delta}\mathrm{KL}(p_S(\cdot - \delta) \| p_{S- S'}),
    \label{eq:unlearn_optim}
\end{gather}
where $p_S$ is the distribution already learned via MCMC, $\delta$ is the distribution shifting scale, $S'$ is the dataset that is requested to be deleted, and $p_{S-S'}$ is the targeted posterior that one aims to obtain.
The idea behinds Eq. (\ref{eq:unlearn_optim}) is to approximate the targeted posterior $p_{S-S'}$ via directly shifting the currently learned distribution $p_S$.
In this case, one no longer needs to know the exact implicit distribution parameter to perform machine unlearning, but only the explicit shifting scale $\delta$ is enough.





So far, the MCMC unlearning problem has been converted to an optimization problem defined as Eq. (\ref{eq:unlearn_optim}).
We then design MCMC unlearning algorithm based on this problem conversion.
Let $\delta^{-S'}_S$ denotes any local minimizer of Eq. (\ref{eq:unlearn_optim}), and the goal of MCMC unlearning is to calculate the optimal shifting scale $\delta^{-S'}_S$.
To achieve that goal, we first expand the KL-divergence in Eq. (\ref{eq:unlearn_optim}) as follows,
\begin{align}
&\mathrm{KL}(p_S(\cdot-\delta) \| p_{S-S'})
= \mathrm{KL}(p_S \| p_{S-S'}(\cdot+\delta)) \nonumber \\
&\quad =
\E_{\theta\sim p_S}\left[\log p(\theta|S) \right] + \log p(S-S')
+\E_{\theta\sim p_S} \left[ -\log p(S,\theta+\delta) + \log p(S'|\theta+\delta)\right].
\label{eq:unlearn_optim_expand}
\end{align}
Through Eq. (\ref{eq:unlearn_optim_expand}), one can find that when removing dataset $S'$ via optimizing Eq. (\ref{eq:unlearn_optim}), an additional term $\E_{\theta\sim p_S} \log p(S'|\theta+\delta)$ is introduced to force the distribution $p_S(\cdot-\delta)$ approaching the target posterior $p_{S-S'}$.
This motivate us to follow existing works \citep{koh2017understanding,guo2020certified} to conduct influence analysis on Eq. (\ref{eq:unlearn_optim}) based on the introduced term $\E_{\theta\sim p_S} \log p(S'|\theta+\delta)$.

Specifically, we define a new function $F_{-S',\tau}(\delta,S)=\E_{\theta\sim p_S}[-\log p(S,\theta+\delta) - \tau \cdot \log p(S'|\theta+\delta)]$, where $\tau\in[-1,0]$.
Suppose there exists a function
$\hat\delta(\tau)$ with domain $[-1,0]$ such that $\hat\delta(0)=0$, and $\hat\delta(\tau)$ is a local minimizer of $F_{-S',\tau}(\delta,S)$.
Then, one can let the target shifting scale $\delta^{-S'}_S$ be $\hat\delta(-1)$, and approximate it by the first-order approximation:
$\hat\delta(-1) \approx \hat\delta(0) - {\partial \hat\delta(0) / \partial \tau}$,
which means
$\delta^{-S'}_S \approx - {\partial \hat\delta(0) / \partial \tau}$.
Following a standard derivation, one can calculate the first derivative ${\partial \hat\delta(0) / \partial \tau}$ as the following defined {\it MCMC influence function} $\mathcal{I}(S')$.
Thus, the optimal $\delta^{-S'}_S$ is then approximated as $\delta^{-S'}_S \approx -\mathcal{I}(S')$.
For the detailed influence analysis, see Appendix \ref{app:influence_proof}.

\begin{definition}[MCMC influence function]
\label{def:mcmc_if}
The MCMC influence function of dataset $S^\prime \subset S$ is defined to be
\begin{gather*}
    \mathcal{I}(S')
    = -\left(\E_{\theta\sim p_S}\nabla_\theta^2\log p(\theta,S)\right)^{-1}
    \sum_{z \in S'}
    \left(\E_{\theta\sim p_S} \nabla_\theta \log p(z|\theta)\right)^T.
\end{gather*}
\end{definition}

The MCMC influence function is defined to estimate the influence of data $S'$ on the learned distribution $p_S$, as $\delta^{-S'}_S \approx -\mathcal{I}(S')$.
The computational cost of the MCMC influence function would be high. An efficient calculation method is given in Appendix \ref{app:inf_implement}.



Finally, based on Definition \ref{def:mcmc_if}, the {\it MCMC unlearning algorithm} is designed as follows.

\begin{algorithm}[MCMC unlearning]
\label{algo:mcmc_unlearn}
Suppose one have drawn a series of samples $\{\theta_1, \cdots, \theta_T\}$ from the posterior $p_S$ via MCMC.
Then, MCMC unlearning algorithm removes the learned knowledge of dataset $S'$ from each drawn sample $\theta_i$ as follows,
\begin{gather*}
\theta'_i \leftarrow \theta_i - \mathcal{I}(S'),
\end{gather*}
where $\mathcal{I}(S')$ is the MCMC influence function for dataset $S' \subset S$.
\end{algorithm}

\begin{remark}
Algorithm \ref{algo:mcmc_unlearn} is equivalent to the following unlearning process,
\begin{gather*}
p^{-S'}_S(\cdot) = \mathcal{A}(p_S,S') = p_S(\cdot + \mathcal{I}(S')) \approx p_S(\cdot - \delta^{-S'}_S).
\end{gather*}
\end{remark}

\begin{remark}
Due to the inherent geometric difference between the distribution functions $p_S$ and $p_{S-S'}$, it is impossible for Algorithm \ref{algo:mcmc_unlearn} to directly shift $p_S$ to exactly recover $p_{S-S'}$.
However, the experiments (see Section \ref{sec:experiments}) show that the designed algorithm is good enough for removing learned knowledge from MCMC distributions.
\end{remark}


\subsection{Knowledge Removal Analysis}
\label{sec:knowledge_removal}

This section studies the $\varepsilon$-knowledge removal for the proposed MCMC unlearning algorithm.
We first introduces several assumptions (Assumptions 1-4) for the knowledge removal analysis.
\begin{assumption}
\label{ass:lipz_fun}
Function $-\E_{\theta\sim p_S} \log p(\theta+\delta)$ and $-\E_{\theta\sim p_S} \log p(z|\theta+\delta)$ are $L_1$-Lipschitz continuous on the support set $\mathrm{supp}(\delta)$.
\end{assumption}
The Lipschitzness of $-\E_{\theta\sim p_S} \log p(\theta+\delta)$ can be realized by choosing appropriate prior distribution for $\theta$, for example, let $p(\theta)$ be a Laplace distribution.
Besides, the Lipschitzness of $-\E_{\theta\sim p_S} \log p(z|\theta+\delta)$ can usually be achieved by setting $p(z|\theta+\delta)$ to be a neural network with bounded domain in practice.

\begin{assumption}
\label{ass:lip_grad_hess}
There exists a neighborhood $V_2 \subset \mathrm{supp}(\delta)$ of the origin point such that for any dataset $S$ and any sample point $z \in \mathcal{Z}$, the followings hold:
\begin{itemize}

\item (Smoothness).
$\nabla_\delta[ -\E_{\theta\sim p_S} \log p(\theta+\delta) ]$
and $\nabla_\delta[ -\E_{\theta\sim p_S}\log p(z|\theta+\delta) ]$
are $L_2$-Lipschitz continuous on the space $V_2$.

\item (Lipschitz Hessian).
$\nabla^2_\delta[ -\E_{\theta\sim p_S} \log p(\theta+\delta) ]$
and $\nabla^2_\delta[ -\E_{\theta\sim p_S}\log p(z|\theta+\delta) ]$
are $L_3$-Lipschitz continuous on the space $V_2$.
\end{itemize}
\end{assumption}

Assumption \ref{ass:lip_grad_hess} assumes the local smoothness and Hessian Lipschitzness for the involved functions, which are common in literature for analyzing Newton methods \citep{nesterov2006cubic,carmon2018accelerated,zhou2018stochastic}.




\begin{assumption}[strong convexity]
\label{ass:strong_convex}
There exists a neighborhood $V_3 \subset \mathrm{supp}(\delta)$ of the origin point such that for any dataset $S$ and any sample point $z\in \mathcal{Z}$,
$-\E_{\theta\sim p_S}\log p(\theta+\delta)$
and $-\E_{\theta\sim p_S}\log p(z|\theta+\delta)$
are $\mu$-strongly convex on $V_3$.
\end{assumption}

The influence analysis and the defined MCMC influence function require the Hessian matrices of
$-\E_{\theta\sim p_S}\log p(\theta+\delta)$
and $-\E_{\theta\sim p_S}\log p(z|\theta+\delta)$
to be invertible.
Assumption \ref{ass:strong_convex} ensures the invertibilities of these Hessian matrices.

\begin{assumption}
\label{ass:local_mini}
Suppose Assumptions 1-3 hold.
Then there exists a sub-neighborhood $V \in V_2 \cap V_3$ of the origin point such that for any dataset $S$ and any real $\tau \in [-1,0]$, the function
$-\E_{\theta\sim p_S}\log p(\theta+\delta|S) - \tau \cdot \E_{\theta\sim p_S}\log p(S'|\theta+\delta)$
has at least one local minimizer falls in $V$.
\end{assumption}

Assumption \ref{ass:local_mini} assumes the target optimal solution $\delta^{-S'}_S$ of Eq. (\ref{eq:unlearn_optim}) exists in the subspace that we are interested in, which would make the influence analysis being meaningful.

Under Assumptions 1-4, we prove that the MCMC influence function defined in Definition \ref{def:mcmc_if} can rigorously estimate the influence of data on the shifting scale $\delta$.
\begin{theorem}
\label{thm:influence}
Under Assumptions 1-4, we have that
\begin{align*}
    &\delta_S^{-S'}
    \ \ = - \mathcal{I}(S') + \mathcal O\left( \frac{|S'|^2}{N^2} \right).
\end{align*}
\end{theorem}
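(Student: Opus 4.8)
The plan is to carry out a formal Taylor expansion of the implicit minimizer path $\hat\delta(\tau)$ around $\tau = 0$, justify the expansion via the implicit function theorem, and then control the remainder using the quantitative smoothness and strong-convexity assumptions. Recall from the construction that $F_{-S',\tau}(\delta,S) = \E_{\theta\sim p_S}[-\log p(S,\theta+\delta) - \tau\cdot\log p(S'|\theta+\delta)]$ and that $\hat\delta(\tau)$ is defined as a local minimizer of $F_{-S',\tau}$ with $\hat\delta(0) = 0$. The target is $\delta^{-S'}_S = \hat\delta(-1)$.

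First I would set up the stationarity condition $\nabla_\delta F_{-S',\tau}(\hat\delta(\tau),S) = 0$ for all $\tau\in[-1,0]$. Under Assumptions 2--4 the map $(\delta,\tau)\mapsto \nabla_\delta F_{-S',\tau}(\delta,S)$ is $C^1$ near the origin with a Hessian $\nabla^2_\delta F$ that is $\mu$-invertible on $V$ (Assumption 3, using that $-\E_{\theta\sim p_S}\log p(S,\theta+\delta) = -\E_{\theta\sim p_S}\log p(\theta+\delta) - \sum_{z\in S}\E_{\theta\sim p_S}\log p(z|\theta+\delta)$ is a sum of $\mu$-strongly convex functions, hence $\mu$-strongly convex). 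The implicit function theorem then gives differentiability of $\hat\delta(\tau)$ and, by implicit differentiation of the stationarity condition, the first derivative
\begin{gather*}
\frac{\partial\hat\delta(0)}{\partial\tau} = \left(\E_{\theta\sim p_S}\nabla^2_\theta\log p(\theta,S)\right)^{-1}\sum_{z\in S'}\left(\E_{\theta\sim p_S}\nabla_\theta\log p(z|\theta)\right)^T = \mathcal{I}(S').
\end{gather*}
Then a second-order Taylor expansion with Lagrange remainder gives $\hat\delta(-1) = \hat\delta(0) - \partial_\tau\hat\delta(0) + \tfrac12\partial^2_\tau\hat\delta(\xi)$ for some $\xi\in[-1,0]$, so $\delta^{-S'}_S = -\mathcal{I}(S') + \tfrac12\partial^2_\tau\hat\delta(\xi)$.

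The crux is bounding $\partial^2_\tau\hat\delta(\xi)$ by $\mathcal{O}(|S'|^2/N^2)$. Differentiating the stationarity condition twice in $\tau$, one obtains an expression for $\partial^2_\tau\hat\delta$ involving $(\nabla^2_\delta F)^{-1}$ (bounded by $1/\mu$ via Assumption 3), the cross-derivative $\partial_\tau\nabla_\delta F$ which equals $-\sum_{z\in S'}\nabla_\delta[\E_{\theta\sim p_S}\log p(z|\theta+\delta)]$ and is therefore $\mathcal{O}(|S'|)$ (bounded per-sample by $L_1$ from Assumption 1), the Hessian-Lipschitz constant $L_3$ of $\nabla^2_\delta F$ (Assumption 2), and $\partial_\tau\hat\delta$ itself. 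The key scaling observation is that $\nabla^2_\delta F$ contains the full-data term $-\E_{\theta\sim p_S}\nabla^2_\delta\log p(\theta,S)$, which is a sum over all $N$ samples and hence scales like $N$; so $(\nabla^2_\delta F)^{-1}$ scales like $1/N$, making $\partial_\tau\hat\delta = (\nabla^2_\delta F)^{-1}\cdot\mathcal{O}(|S'|) = \mathcal{O}(|S'|/N)$. Feeding this back, $\partial^2_\tau\hat\delta$ picks up two factors of $\partial_\tau\hat\delta$ (one from the chain rule through $\delta$, one from the Hessian variation), yielding $\mathcal{O}(|S'|^2/N^2)$.

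The main obstacle I anticipate is making the $1/N$ scaling of the inverse Hessian rigorous and uniform: one needs $\E_{\theta\sim p_S}\nabla^2_\delta\log p(\theta,S)$ to have smallest eigenvalue growing linearly in $N$ (not merely bounded below by $\mu$, which is what Assumption 3 literally gives for a single-sample term), and this should follow from additivity of the potential together with the per-sample strong convexity in Assumption 3 — but care is needed because the prior term does not scale with $N$, and one must also verify that the neighborhood $V$ from Assumption 4 does not shrink with $N$ in a way that breaks the expansion. A clean way to handle this is to work with the normalized objective $\frac1N F_{-S',\tau}$, apply the implicit function theorem there, and then rescale; the remainder bookkeeping is then routine given Assumptions 1--2 but must be done carefully to land exactly at order $|S'|^2/N^2$ rather than a weaker rate.
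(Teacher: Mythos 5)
Your proposal is correct and follows essentially the same route as the paper's proof: interpolate via $F_{-S',\tau}$, use strong convexity and the implicit function theorem to get a differentiable minimizer path $\hat\delta(\tau)$, identify $\partial_\tau\hat\delta(0)$ with the influence function, and bound the Lagrange remainder $\tfrac12\partial^2_\tau\hat\delta(\xi)$ by $\mathcal{O}(|S'|^2/N^2)$ by working with the $1/N$-normalized objective so that the inverse Hessian is $\mathcal{O}(1)$, $\partial_\tau\hat\delta=\mathcal{O}(|S'|/N)$, and the second derivative picks up this factor quadratically — exactly the paper's Lemmas on the first- and second-derivative norms. The only slip is a sign in your displayed first derivative, which should read $\partial_\tau\hat\delta(0) = -\bigl(\E_{\theta\sim p_S}\nabla_\theta^2\log p(\theta,S)\bigr)^{-1}\sum_{z\in S'}\bigl(\E_{\theta\sim p_S}\nabla_\theta\log p(z|\theta)\bigr)^T$ to equal $\mathcal{I}(S')$ as defined; your final conclusion is nonetheless stated correctly.
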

The proof of Theorem \ref{thm:influence} is presented in Appendix \ref{app:influence_proof}.





Based on Theorem \ref{thm:influence}, a knowledge removal guarantee is proved for the MCMC unlearning algorithm.
\begin{theorem}
\label{thm:knowledge_removal}
Suppose Assumptions 1-4 hold.
Then, the MCMC unlearning algorithm $\mathcal{A}(p_S, S') = p_S(\cdot + \mathcal{I}(S'))$ performs $\varepsilon^{-S'}_S$-knowledge removal, where
\begin{align*}
\varepsilon^{-S'}_S
\color{red}
&= \mathrm{KL}(p_S(\cdot - \delta^{-S'}_S) \| p_{S-S'})
+ \mathcal{O}\left(\frac{|S'|^4}{N^3}\right),
\end{align*}
and $\delta^{-S'}_S$ is a local minimizer of the KL-divergence $\mathrm{KL}(p_S(\cdot-\delta) \| p_{S-S'})$.
\end{theorem}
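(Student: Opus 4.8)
The plan is to bound $\mathrm{KL}(p_S^{-S'} \| p_{S-S'})$ directly, where $p_S^{-S'}(\cdot) = p_S(\cdot + \mathcal{I}(S'))$ is the processed distribution produced by Algorithm \ref{algo:mcmc_unlearn}. The starting point is the obvious decomposition $p_S^{-S'}(\cdot) = p_S(\cdot - (-\mathcal{I}(S')))$, so that $p_S^{-S'}$ is exactly the shifted distribution $p_S(\cdot - \delta)$ evaluated at $\delta = -\mathcal{I}(S')$, while the quantity appearing in the theorem statement involves the \emph{true} minimizer $\delta^{-S'}_S$ of $\mathrm{KL}(p_S(\cdot - \delta) \| p_{S-S'})$. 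By Theorem \ref{thm:influence} these two shifts differ by $-\mathcal{I}(S') - \delta^{-S'}_S = \mathcal{O}(|S'|^2/N^2)$. So the core of the argument is a perturbation estimate: how much does $\mathrm{KL}(p_S(\cdot - \delta) \| p_{S-S'})$ change when $\delta$ is perturbed from $\delta^{-S'}_S$ by an amount of order $|S'|^2/N^2$?

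First I would set $g(\delta) := \mathrm{KL}(p_S(\cdot - \delta) \| p_{S-S'})$ and Taylor-expand $g$ around its local minimizer $\delta^{-S'}_S$. Since $\delta^{-S'}_S$ is a local minimizer, the first-order term vanishes, and using the expansion in Eq. (\ref{eq:unlearn_optim_expand}) together with Assumption \ref{ass:lip_grad_hess} (Lipschitz Hessian of the relevant log-densities, which controls the Hessian of $g$ near the origin) and Assumption \ref{ass:local_mini} (so that $\delta^{-S'}_S$ indeed lies in the neighborhood $V$ where these bounds hold), we get
\begin{gather*}
g(-\mathcal{I}(S')) = g(\delta^{-S'}_S) + \tfrac{1}{2}\,(\Delta)^T \nabla^2 g(\xi)\,\Delta,
\end{gather*}
where $\Delta = -\mathcal{I}(S') - \delta^{-S'}_S = \mathcal{O}(|S'|^2/N^2)$ and $\xi$ lies on the segment between the two shifts. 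The Hessian $\nabla^2 g$ is bounded in operator norm on $V$ by a constant depending on $L_1, L_2, L_3$ (and the strong-convexity constant $\mu$ from Assumption \ref{ass:strong_convex}, which also guarantees the Hessian inverse inside $\mathcal{I}(S')$ is well-defined and bounded, so that $\mathcal{I}(S')$ itself is $\mathcal{O}(|S'|/N)$ and hence stays in $V$ for $N$ large). Therefore the quadratic remainder is of order $\|\Delta\|^2 = \mathcal{O}(|S'|^4/N^4)$ — actually this already beats the claimed $\mathcal{O}(|S'|^4/N^3)$, so I would be careful here: the $N^3$ rather than $N^4$ presumably comes from the fact that the $\mathcal{O}(|S'|^2/N^2)$ error in Theorem \ref{thm:influence} is itself a first-order-approximation error whose constant may carry an extra $N$, or from coarsening the bound on the cross term; I would reconcile this by re-examining the derivation of Theorem \ref{thm:influence} in Appendix \ref{app:influence_proof} and taking the weaker but safer $N^3$ bound. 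Either way, $g(-\mathcal{I}(S')) = g(\delta^{-S'}_S) + \mathcal{O}(|S'|^4/N^3)$, and since $g(\delta^{-S'}_S) = \mathrm{KL}(p_S(\cdot - \delta^{-S'}_S) \| p_{S-S'})$, identifying $\varepsilon^{-S'}_S := g(-\mathcal{I}(S'))$ gives exactly the stated bound; that $\mathcal{A}$ performs $\varepsilon^{-S'}_S$-knowledge removal is then immediate from Definition \ref{def:knowledge-rem}.

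The main obstacle is controlling $\nabla^2 g$ uniformly on a neighborhood large enough to contain both $-\mathcal{I}(S')$ and $\delta^{-S'}_S$. This requires (i) showing $\|\mathcal{I}(S')\|$ and $\|\delta^{-S'}_S\|$ are both $\mathcal{O}(|S'|/N)$ — for $\mathcal{I}(S')$ this follows from Assumption \ref{ass:strong_convex} bounding $(\E\nabla^2\log p(\theta,S))^{-1}$ by $1/\mu$ times something of order $1/N$ and from Assumption \ref{ass:lipz_fun} bounding each gradient term, while for $\delta^{-S'}_S$ it follows from Theorem \ref{thm:influence}; and (ii) expressing $\nabla^2_\delta g$ via Eq. (\ref{eq:unlearn_optim_expand}) as (essentially) $\nabla^2_\delta[-\E_{\theta\sim p_S}\log p(S,\theta+\delta)] + \nabla^2_\delta[\E_{\theta\sim p_S}\log p(S'|\theta+\delta)]$ and bounding it using the smoothness part of Assumption \ref{ass:lip_grad_hess} — note $g$ has no term in $\delta$ other than through these two, since the first two summands of Eq. (\ref{eq:unlearn_optim_expand}) are $\delta$-independent. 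A secondary subtlety is ensuring the KL-divergences are finite and the interchange of expectation and differentiation is justified on $V$, which is where Assumptions \ref{ass:lipz_fun}–\ref{ass:lip_grad_hess} do their work; I would handle this with dominated convergence using the Lipschitz bounds as the dominating envelope.
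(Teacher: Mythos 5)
Your overall strategy is the same as the paper's: write the processed distribution as $p_S(\cdot-\delta)$ evaluated at $\delta=-\mathcal{I}(S')$, Taylor-expand $g(\delta)=\mathrm{KL}(p_S(\cdot-\delta)\|p_{S-S'})$ around the local minimizer $\delta^{-S'}_S$, kill the first-order term by optimality, and bound the Lagrange remainder using $\|\mathcal{I}(S')+\delta^{-S'}_S\|_2=\mathcal{O}(|S'|^2/N^2)$ from Theorem \ref{thm:influence}. The gap is in your bound on $\nabla^2 g$. You assert that it is bounded on $V$ by a constant depending on $L_1,L_2,L_3$, conclude the remainder is $\mathcal{O}(|S'|^4/N^4)$, and then, puzzled that this ``beats'' the stated rate, speculate that the missing factor of $N$ hides either in the constant of Theorem \ref{thm:influence} or in a cross term. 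Neither is the case. The $\delta$-dependent part of $g$ is $\E_{\theta\sim p_S}[-\log p(S,\theta+\delta)+\log p(S'|\theta+\delta)]$, which in the appendix notation equals $F(\delta,S)-\sum_{z\in S'}h(\delta,z)=\sum_{z\in S-S'}h(\delta,z)+f(\delta)$, a sum of $N-|S'|+1$ terms. Assumption \ref{ass:lip_grad_hess} controls the Hessian of each summand \emph{individually} (each contributes on the order of $L_2$), so the operator norm of $\nabla^2 g$ on $V$ scales as $\mathcal{O}(N)$, not $\mathcal{O}(1)$: the per-datum smoothness hypothesis gives no dataset-size-independent bound for the summed potential. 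Multiplying $\mathcal{O}(N)$ by $\|\mathcal{I}(S')+\delta^{-S'}_S\|_2^2=\mathcal{O}(|S'|^4/N^4)$ yields exactly the $\mathcal{O}(|S'|^4/N^3)$ in the statement, and this is precisely the computation in the paper's proof. So $N^3$ is not a ``weaker but safer'' coarsening of a true $N^4$ rate; it is the correct order produced by this argument, and your claimed $N^4$ rests on an unsupported Hessian bound.

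Aside from this, the remaining ingredients you list (first-order optimality at $\delta^{-S'}_S$ via Assumption \ref{ass:local_mini}, $\|\mathcal{I}(S')\|=\mathcal{O}(|S'|/N)$ so both shifts lie in the relevant neighborhood, and the identification of $\varepsilon^{-S'}_S$ with $g(-\mathcal{I}(S'))$ via Definition \ref{def:knowledge-rem}) match the paper's argument. Repairing the single Hessian estimate turns your proposal into the paper's proof.
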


\textbf{Proof sketch.}
We expand $\mathrm{KL}(p_S(\cdot+\mathcal{I}(S')) \| p_{S-S'})$ into Taylor series up to $2$-th order around the point $\delta = \delta^{-S'}_S$.
By applying the first-order optimality condition, the first-order term in the Taylor series becomes $0$.
By leveraging the approximation error given in Theorem \ref{thm:influence}, the second-order term is proved to be no larger than the order of $\mathcal{O}(|S'|^4/N^3)$.
The proof is presented in Appendix \ref{app:knowledge_removal_proof}.

\begin{remark}
As explained in the previous section, Algorithm \ref{algo:mcmc_unlearn} could not completely remove the learned knowledge from the inferred distribution due to the inherent geometric difference between the distribution $p_S$ and $p_{S-S'}$.
Nevertheless, Theorem \ref{thm:knowledge_removal} demonstrates that the Algorithm \ref{algo:mcmc_unlearn} can help the KL-divergence in Eq. (\ref{eq:unlearn_optim}) approach its local minimum.
\end{remark}

\subsection{Generalization Analysis}
\label{sec:generalization}

This section studies the impact of the proposed MCMC unlearning algorithm on the generalization ability of the inferred distribution.
The analysis is conducted based on the PAC-Bayes theory \citep{mcallester1999pac,mcallester2003pac,he2019control}, a framework for analyzing the generalization ability of stochastic machine learning models \citep{mohri2012foundations, he2020recent}.

Specifically, suppose $Q$ is the parameter distribution of a stochastic model.
Suppose $S$ is the training dataset.
Then, the {\it expected risk} $\mathcal{R}(Q)$ and {\it empirical risk} $\mathcal{\hat R}_S(Q)$ of $Q$ are defined to be
\begin{gather*}
    \mathcal{R}(Q) = \mathop{\E}_{\theta \sim Q} \mathop{\E}_{z} \ell(\theta, z),
    \quad
    \mathcal{\hat R}_S(Q) = \mathop{\E}_{\theta \sim Q} \frac{1}{N}\sum_{i=1}^N\ell(\theta, z_i),
\end{gather*}
where $\theta$ is the model parameter drawn from the distribution $Q$ and $\ell$ is a loss function ranging in $[0,1]$.
The difference of the expected risk and the empirical risk is the generalization error. Its magnitude characterizes the generalization ability of the stochastic model.

Then, a generalization bound for the distribution processed by Algorithm \ref{algo:mcmc_unlearn} is proved as below.

\begin{theorem}
\label{thm:generalization}
Suppose Assumptions 1-4 hold.
Let $p^{-S'}_S$ denotes the distribution processed by Algorithm \ref{algo:mcmc_unlearn}.
Then, for any $\delta \in (0,1)$, with probability at least $1-\delta$, the following inequality holds,
\begin{align*}
&\mathcal{R}(p^{-S'}_S) \leq \mathcal{\hat R}_S(p^{-S'}_S)
+ \sqrt{\frac{\E_{\theta\sim p_S} [\log p_S(\theta) + \|\theta\|_1] + \|\mathcal{I}(S')\|_1 + \log 2 + \log\frac{1}{\delta}+ \log N + 2}{2N-1}},
\end{align*}
where $\|\mathcal{I}(S')\|_1 \leq \mathcal{O}(|S'|/N)$.
\end{theorem}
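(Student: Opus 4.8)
The plan is to invoke a McAllester-type PAC-Bayes inequality with $p_S^{-S'}$ as the posterior distribution and the model's Laplace prior $p(\theta)\propto\exp(-\|\theta\|_1)$ as the (data-independent) PAC-Bayes prior, and then to control the resulting KL-divergence by exploiting the fact that, by Algorithm \ref{algo:mcmc_unlearn}, $p_S^{-S'}$ is merely a rigid translate of $p_S$: $p_S^{-S'}(\cdot)=p_S(\cdot+\mathcal{I}(S'))$. With the standard PAC-Bayes bound this gives, with probability at least $1-\delta$,
\[
\mathcal{R}(p_S^{-S'}) \le \mathcal{\hat R}_S(p_S^{-S'}) + \sqrt{\frac{\mathrm{KL}(p_S^{-S'}\|p) + \log\tfrac{1}{\delta} + \log N + 2}{2N-1}},
\]
so it remains to bound $\mathrm{KL}(p_S^{-S'}\|p)$ and $\|\mathcal{I}(S')\|_1$.

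For the KL term I would expand it and apply the change of variables $\theta\mapsto\theta'=\theta+\mathcal{I}(S')$, under which $\theta\sim p_S^{-S'}$ becomes $\theta'\sim p_S$ (the shift is deterministic given $S,S'$, so the Jacobian is $1$ and absolute continuity w.r.t.\ the full-support Laplace prior is preserved):
\[
\mathrm{KL}(p_S^{-S'}\|p) = \E_{\theta\sim p_S^{-S'}}\!\big[\log p_S(\theta+\mathcal{I}(S')) + \|\theta\|_1\big] + \log 2 = \E_{\theta'\sim p_S}\!\big[\log p_S(\theta') + \|\theta'-\mathcal{I}(S')\|_1\big] + \log 2.
\]
The triangle inequality $\|\theta'-\mathcal{I}(S')\|_1\le\|\theta'\|_1+\|\mathcal{I}(S')\|_1$ then yields
\[
\mathrm{KL}(p_S^{-S'}\|p) \le \E_{\theta\sim p_S}\!\big[\log p_S(\theta) + \|\theta\|_1\big] + \|\mathcal{I}(S')\|_1 + \log 2,
\]
which is exactly the numerator in the statement; substituting back into the PAC-Bayes inequality proves the displayed bound.

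For the final claim $\|\mathcal{I}(S')\|_1\le\mathcal{O}(|S'|/N)$, I would use Definition \ref{def:mcmc_if} together with Assumptions \ref{ass:lipz_fun} and \ref{ass:strong_convex}. Writing $\log p(\theta,S)=\log p(\theta)+\sum_{i=1}^N\log p(z_i|\theta)$, the $\mu$-strong convexity of each of $-\E_{\theta\sim p_S}\log p(\theta+\delta)$ and $-\E_{\theta\sim p_S}\log p(z_i|\theta+\delta)$ forces $\E_{\theta\sim p_S}\nabla_\theta^2[-\log p(\theta,S)]\succeq(N+1)\mu I$, so the inverse Hessian has operator norm at most $1/((N+1)\mu)$; and $L_1$-Lipschitzness bounds $\|\E_{\theta\sim p_S}\nabla_\theta\log p(z|\theta)\|_2\le L_1$, so the sum over $S'$ has norm at most $|S'|L_1$. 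Combining these and passing from $\|\cdot\|_2$ to $\|\cdot\|_1$ (a $\sqrt{d}$ factor absorbed into the constant) gives $\|\mathcal{I}(S')\|_1\le \sqrt{d}\,|S'|L_1/((N+1)\mu)=\mathcal{O}(|S'|/N)$. The main obstacle is the second step: one must check carefully that the translation change of variables is legitimate and that the chosen prior is genuinely independent of $S$ and $S'$ (so that PAC-Bayes applies to the random training set), after which the remaining estimates are routine consequences of Assumptions 1--4.
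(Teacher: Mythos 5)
Your proposal is correct and follows essentially the same route as the paper: the McAllester PAC-Bayes bound with a standard Laplace prior, a change of variables plus the triangle inequality to bound $\mathrm{KL}(p_S^{-S'}\|P)$ by $\E_{\theta\sim p_S}[\log p_S(\theta)+\|\theta\|_1]+\|\mathcal{I}(S')\|_1+\log 2$, and the strong-convexity/Lipschitz estimate of the influence function (which the paper obtains by citing Lemma \ref{lem:1st_norm}, whose proof is exactly the inverse-Hessian and gradient-sum bound you re-derive). Your explicit attention to the Jacobian of the shift, the data-independence of the prior, and the $\sqrt{d}$ factor in passing from $\|\cdot\|_2$ to $\|\cdot\|_1$ only makes the argument more careful than the paper's.
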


The proof is given in Appendix \ref{app:generalization_proof}.



\begin{corollary}
\label{cor:generalization_increase}
Algorithm \ref{algo:mcmc_unlearn} increases the generalization upper bound in Theorem \ref{thm:generalization} no larger than the order of $\mathcal O(\sqrt{|S'|}/N)$.
\end{corollary}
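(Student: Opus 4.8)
\textbf{Proof proposal for Corollary \ref{cor:generalization_increase}.}

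The plan is to isolate the only term in the generalization upper bound of Theorem \ref{thm:generalization} that depends on $S'$, namely the summand $\|\mathcal{I}(S')\|_1$ inside the square root, and to track how much the whole bound can grow when this term is switched on. Write the bound as $\mathcal{\hat R}_S(p^{-S'}_S) + \sqrt{(A + \|\mathcal{I}(S')\|_1)/(2N-1)}$, where $A = \E_{\theta\sim p_S}[\log p_S(\theta) + \|\theta\|_1] + \log 2 + \log\tfrac{1}{\delta} + \log N + 2$ collects all the $S'$-independent quantities. Compared with the $S' = \emptyset$ case (where $\mathcal{I}(\emptyset) = 0$), the increase in the square-root term is
\begin{gather*}
\sqrt{\frac{A + \|\mathcal{I}(S')\|_1}{2N-1}} - \sqrt{\frac{A}{2N-1}}.
\end{gather*}

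The key step is to bound this difference using the elementary inequality $\sqrt{a+b} - \sqrt{a} \le \sqrt{b}$, valid for $a,b \ge 0$. This gives an increase of at most $\sqrt{\|\mathcal{I}(S')\|_1/(2N-1)}$. Then I invoke the bound $\|\mathcal{I}(S')\|_1 \le \mathcal{O}(|S'|/N)$ already asserted in Theorem \ref{thm:generalization}, so the increase is at most $\sqrt{\mathcal{O}(|S'|/N)/(2N-1)} = \mathcal{O}(\sqrt{|S'|}/N)$, after absorbing constants and using $2N - 1 = \Theta(N)$. The empirical risk term $\mathcal{\hat R}_S(p^{-S'}_S)$ is part of the bound being reported, not a source of additional looseness in the comparison, so it needs no separate treatment here.

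I expect the only real subtlety to be making precise the statement "increases the bound by no more than $\mathcal{O}(\sqrt{|S'|}/N)$": one should fix the comparison baseline (the bound with the same empirical-risk term but with $\mathcal{I}(S')$ replaced by $0$, or equivalently the bound one would obtain by retraining-style reasoning on $S$ alone) and then the argument is the two-line chain above. No new assumptions beyond Assumptions 1--4 are needed, since those already underwrite both Theorem \ref{thm:generalization} and the estimate $\|\mathcal{I}(S')\|_1 \le \mathcal{O}(|S'|/N)$; the corollary is a direct arithmetic consequence. Hence there is no genuine obstacle — the proof is a short concavity-of-square-root estimate combined with the norm bound on the MCMC influence function.
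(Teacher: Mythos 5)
Your proposal is correct and follows essentially the same route as the paper: the paper's proof likewise identifies the increase as $\mathcal{O}(\sqrt{\|\mathcal{I}(S')\|_1/(2N-1)})$ and then substitutes $\|\mathcal{I}(S')\|_1 \leq \mathcal{O}(|S'|/N)$ to obtain $\mathcal{O}(\sqrt{|S'|}/N)$. Your explicit use of $\sqrt{a+b}-\sqrt{a}\leq\sqrt{b}$ just makes precise a step the paper leaves implicit.
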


\begin{proof}
According to Theorem \ref{thm:generalization}, the difference of the generalization upper bound introduced by Algorithm \ref{algo:mcmc_unlearn} is
$\mathcal{O}(\sqrt{ \frac{\|\mathcal{I}(S')\|_1}{2N-1} }) \leq \mathcal{O}( \sqrt{ \frac{\mathcal{O}(|S'|/N)}{2N-1} }) = \mathcal{O}( \frac{\sqrt{|S'|}}{N} )$.
This completes the proof.
\end{proof}

\begin{remark}
Corollary \ref{cor:generalization_increase} indicates that the proposed MCMC unlearning algorithm would not compromise the generalization ability of the inferred distribution.
\end{remark}


\begin{figure}[t]
\hspace{0.8mm}
\begin{subfigure}{\linewidth}
\begin{subfigure}{0.69\linewidth}
    \includegraphics[width=0.32\linewidth]{./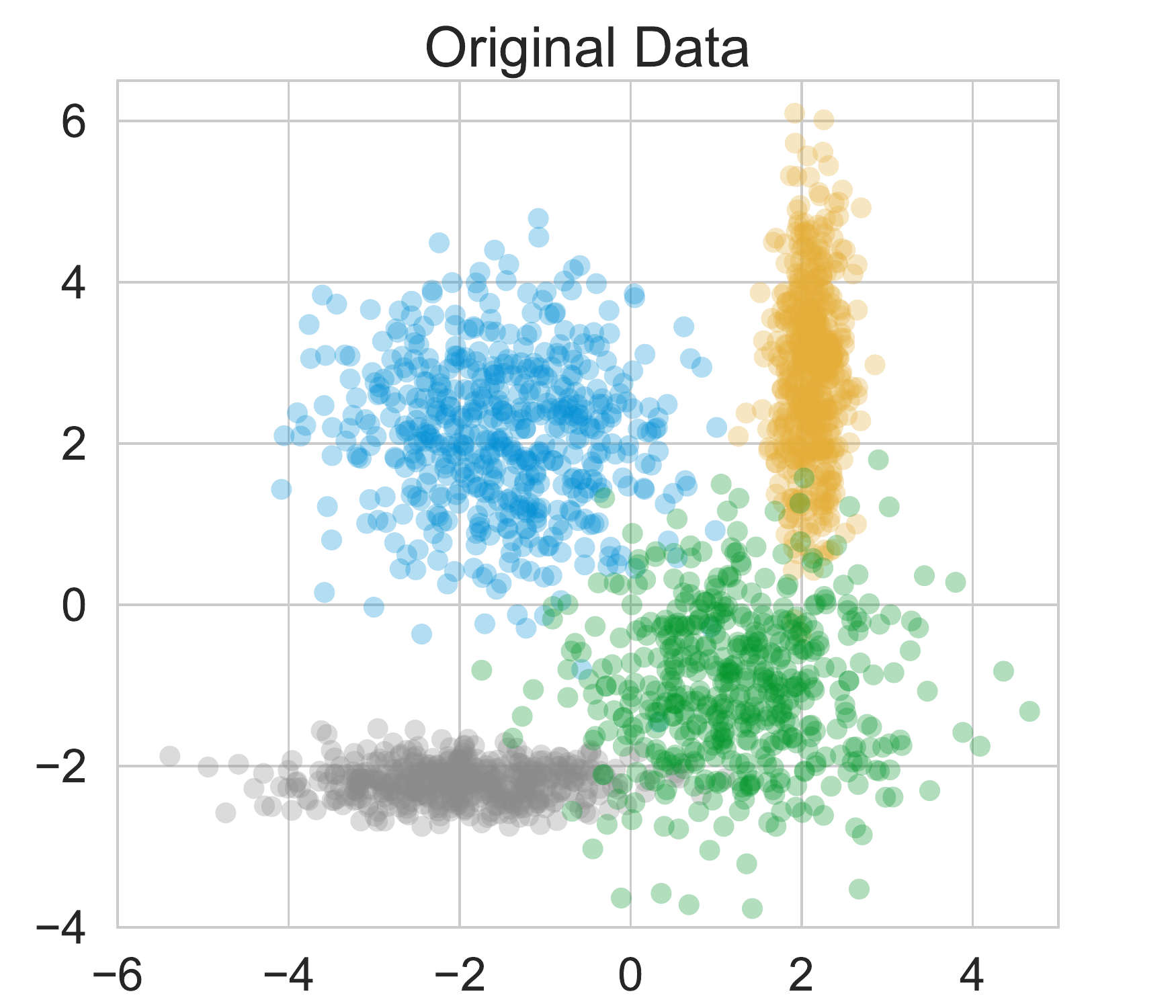}
    \includegraphics[width=0.32\linewidth]{./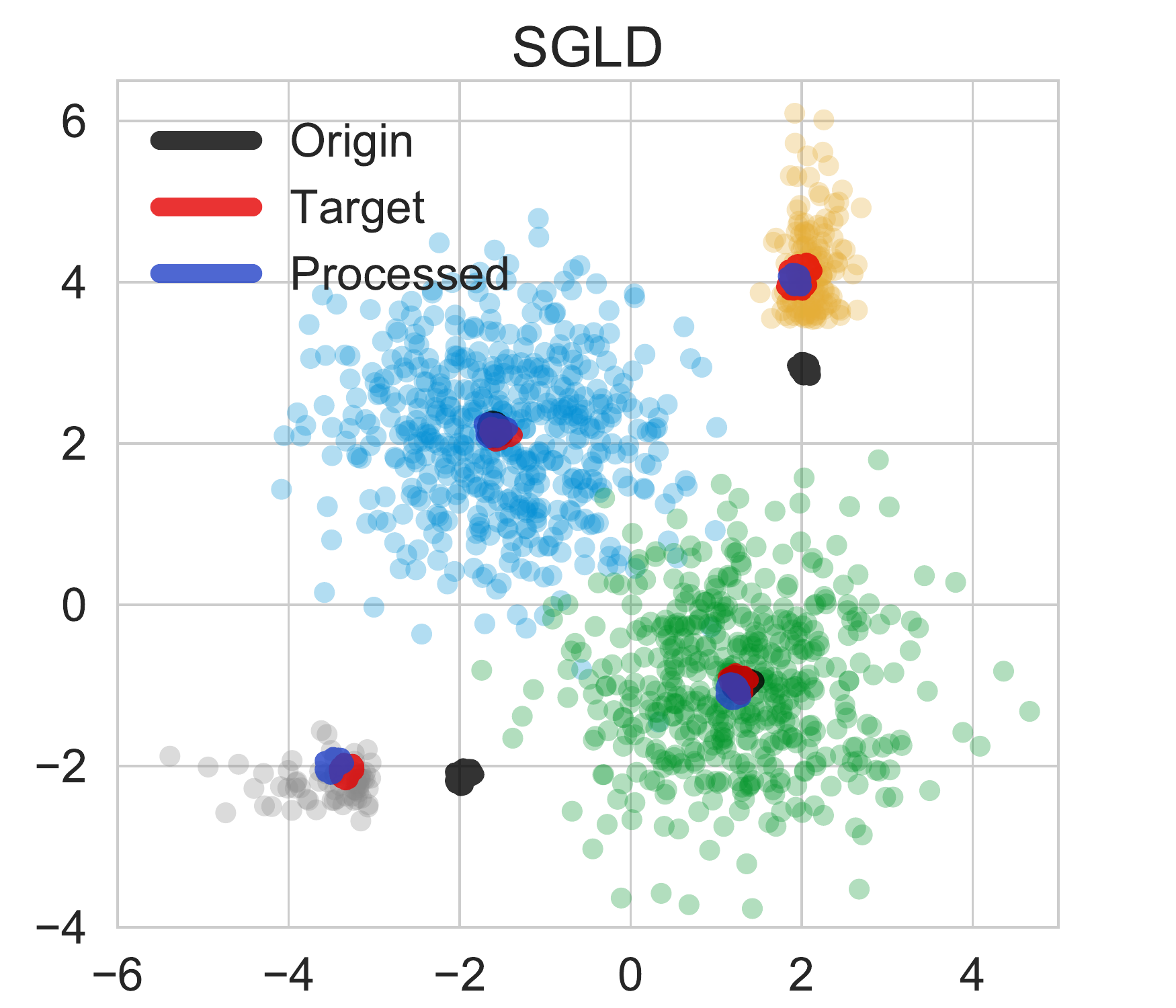}
    \includegraphics[width=0.32\linewidth]{./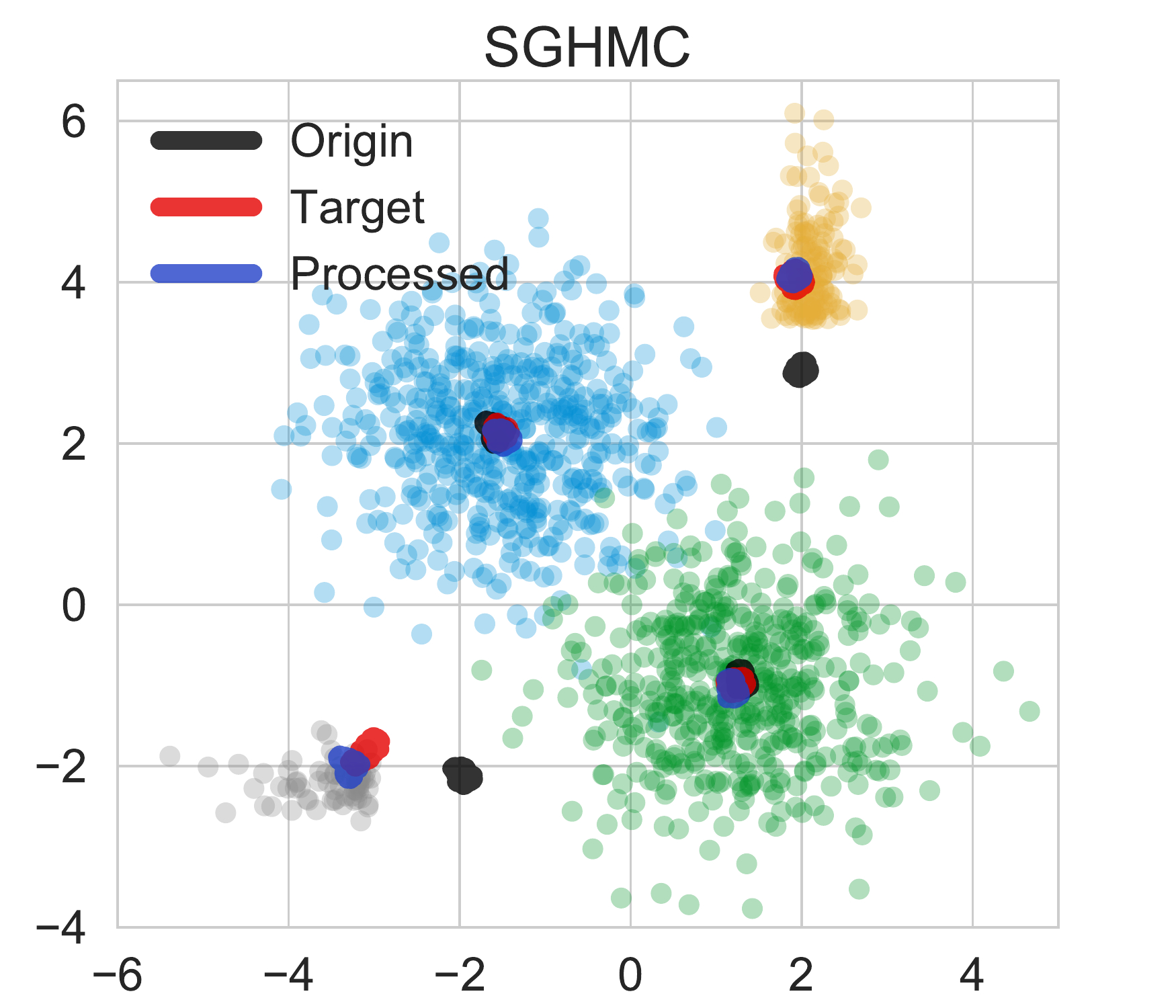}
    \caption{Visualized results of the GMMs experiments. There are four groups of points in the synthetic dataset, where different groups are in different colors.
    {\color{black} The samples that were generated from the original, processed, and target model via MCMC sampling are colored in black, blue, and red, respectively.}
    }
    \label{fig:gmm-visual}
\end{subfigure}
\hspace{4mm}
\begin{subfigure}{0.23\linewidth}
    \vspace{-1.6mm}
    \includegraphics[width=\linewidth]{./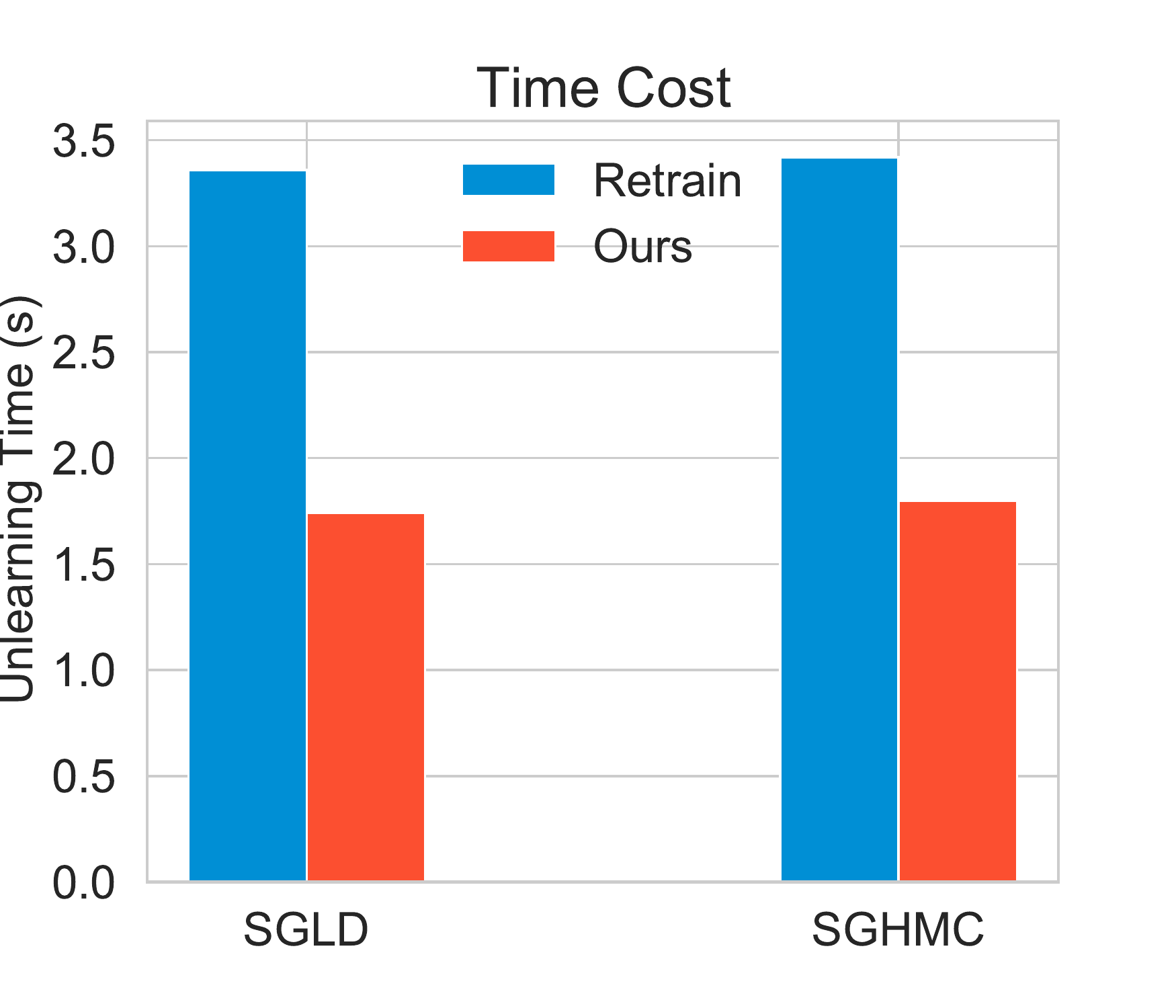}
    \caption{Time costs for processing single deletion requests via retraining and MCMC unlearning.}
    \label{fig:gmm-time}
\end{subfigure}
\end{subfigure}
\caption{Experiment results for GMMs.}
    \label{fig:gmm}
\vspace{-4mm}
\end{figure}

\section{Experiments}
\label{sec:experiments}
In this section, we empirically verify the effectiveness and efficiency of the proposed MCMC unlearning algorithm on the Gaussian mixture models and Bayesian neural networks.

\subsection{Experiments for Gaussian Mixture Models}
\label{exp:gmm}

We first visualize the effect of the MCMC unlearning algorithm on a synthetic clustering dataset that consists of $2,000$ examples, where each examples is two-dimensional and is possibly from $4$ clusters.
The Gaussian mixture models (GMMs) are employed to infer the cluster centers.
For the details of the GMMs used in our experiments, see Appendix \ref{app:exp_gmm_arch}.

\textbf{Experiment design.}
We employ SGLD and SGHMC to infer the GMMs on the clustering dataset.
Then, $400$ points are removed from each of the grey parts and yellow parts, around $40\%$ of the whole dataset at all, by our forgetting algorithms.
We also trained models on only the remaining set with the same MCMC inference settings to show the targets of the forgetting tasks.
For the details of the experiments, see Appendix \ref{app:gmm_exp_details}.

\textbf{Results analysis.}
The visualization results are presented in Fig. \ref{fig:gmm-visual},
which show that after unlearning, the processed models are close to the target models.
Besides, the time costs for processing single data deletion requests are presented in Fig. \ref{fig:gmm-time}, which shows that the proposed MCMC unlearning algorithm is significantly faster than retraining the whole model.
These results demonstrates that the proposed algorithm can effectively and efficiently remove the learned knowledge of specified data.

\begin{table}[t]
\centering
\caption{
{\color{black} Experiment results on CIFAR-10.}
Different metrics are used to evaluate the machine unlearning performance, including the classification errors on the remaining set $S_r$, removed set $S_f$, and test set $S_\mathrm{test}$, the knowledge removal estimators $\hat\varepsilon_M$, {\color{black} and the membership inference attack (MIA) accuracy on $S_f$.}
Every experiment is repeated 5 times.
The results show that the proposed MCMC unlearning algorithm beats baseline methods in almost every experiments settings.
}
\label{tab:exp}
\scriptsize
\begin{tabular}{c c l c c c c c}
\toprule
& Remove & Method & Err. on $S_r$ (\%) & Err. on $S_f$ (\%) & Err. on $S_{\mathrm{test}}$ (\%) & $\hat\varepsilon_M$ ($\times 10^3$) & MIA Acc. (\%) \\
\midrule
\midrule

\multirow{10}{1.2cm}{\centering CIFAR-10 \\ + \\ SGLD}
& \multirow{5}{0.8cm}{\centering 3,000 Examples}
& Retrain &
20.45$\pm$0.95 & 46.69$\pm$1.05 & 31.80$\pm$0.71 & 0.00$\pm$0.00 & 51.85$\pm$1.39 \\
\cmidrule(lr){4-4} \cmidrule(lr){5-5} \cmidrule(lr){6-6} \cmidrule(lr){7-7} \cmidrule(lr){8-8}
& & Origin &
21.71$\pm$0.94 & 18.91$\pm$1.13 & 31.22$\pm$0.53 & 4079.51$\pm$2.07 & 80.11$\pm$3.87 \\
& & IS &
21.81$\pm$0.80 & 28.71$\pm$1.06 & \textbf{31.77$\pm$0.82} & 4228.11$\pm$1.60 & 72.29$\pm$2.07 \\
& & Ours &
\textbf{21.56$\pm$0.96} & \textbf{31.14$\pm$1.75} & 31.76$\pm$0.66 &  \textbf{4070.59$\pm$1.99} & \textbf{69.21$\pm$5.22} \\

\cmidrule(l){2-8}

& \multirow{5}{0.8cm}{\centering 5,000 Examples}
& Retrain &
18.61$\pm$0.90 & 100.00$\pm$0.00 & 36.25$\pm$0.63 & 0.00$\pm$0.00 & 0.00$\pm$0.00 \\
\cmidrule(lr){4-4} \cmidrule(lr){5-5} \cmidrule(lr){6-6} \cmidrule(lr){7-7} \cmidrule(lr){8-8}
& & Origin &
21.82$\pm$0.91 & 19.07$\pm$1.26 & 31.23$\pm$0.53 & 4173.23$\pm$1.90 & 80.62$\pm$4.58 \\
& & IS &
21.17$\pm$0.87 & 54.05$\pm$1.86 & 33.25$\pm$0.77 & 4342.04$\pm$1.64 & 46.49$\pm$4.90 \\
& & Ours &
\textbf{20.73$\pm$0.86} & \textbf{73.96$\pm$4.25} & \textbf{34.81$\pm$0.31} &  \textbf{4151.34$\pm$1.94} & \textbf{26.64$\pm$7.68} \\

\midrule

\multirow{10}{1.2cm}{\centering CIFAR-10 \\ + \\ SGHMC}
& \multirow{5}{0.8cm}{\centering 3,000 Examples}
& Retrain &
20.24$\pm$0.37 & 45.83$\pm$1.61 & 31.80$\pm$0.48 & 0.00$\pm$0.00 & 55.34$\pm$3.00 \\
\cmidrule(lr){4-4} \cmidrule(lr){5-5} \cmidrule(lr){6-6} \cmidrule(lr){7-7} \cmidrule(lr){8-8}
& & Origin &
21.50$\pm$0.40 & 18.83$\pm$1.46 & 31.47$\pm$0.54 & 0.00$\pm$0.00 & 82.85$\pm$1.49 \\
& & IS &
21.61$\pm$0.40 & 28.64$\pm$1.23 & 31.90$\pm$0.53 & 4296.82$\pm$2.02 & 70.79$\pm$3.52 \\
& & Ours &
\textbf{21.31$\pm$0.37} & \textbf{30.33$\pm$2.44} & \textbf{31.75$\pm$0.55} &  \textbf{4137.04$\pm$2.42} & \textbf{69.83$\pm$2.78} \\

\cmidrule(l){2-8}

& \multirow{5}{0.8cm}{\centering 5,000 Examples}
& Retrain &
18.56$\pm$0.38 & 100.00$\pm$0.00 & 36.21$\pm$0.34 & 0.00$\pm$0.00 & 0.00$\pm$0.00 \\
\cmidrule(lr){4-4} \cmidrule(lr){5-5} \cmidrule(lr){6-6} \cmidrule(lr){7-7} \cmidrule(lr){8-8}
& & Origin &
21.61$\pm$0.39 & 18.91$\pm$1.13 & 31.47$\pm$0.52 & 4233.95$\pm$1.73 & 82.70$\pm$1.35 \\
& & IS &
21.03$\pm$0.44 & 54.20$\pm$1.48 & 33.55$\pm$0.36 & 4408.05$\pm$1.21 & 44.20$\pm$1.63 \\
& & Ours &
\textbf{20.54$\pm$0.36} & \textbf{69.56$\pm$4.45} & \textbf{34.67$\pm$0.78} &  \textbf{4212.63$\pm$1.72} & \textbf{29.12$\pm$3.94} \\

\bottomrule
\end{tabular}
\vspace{-4mm}
\end{table}

\subsection{Experiments for Bayesian Neural Networks}
\label{sec:exp_bnn}

\begin{wrapfigure}{r}{0.35\textwidth}
\vspace{-4mm}
    \includegraphics[width=\linewidth]{./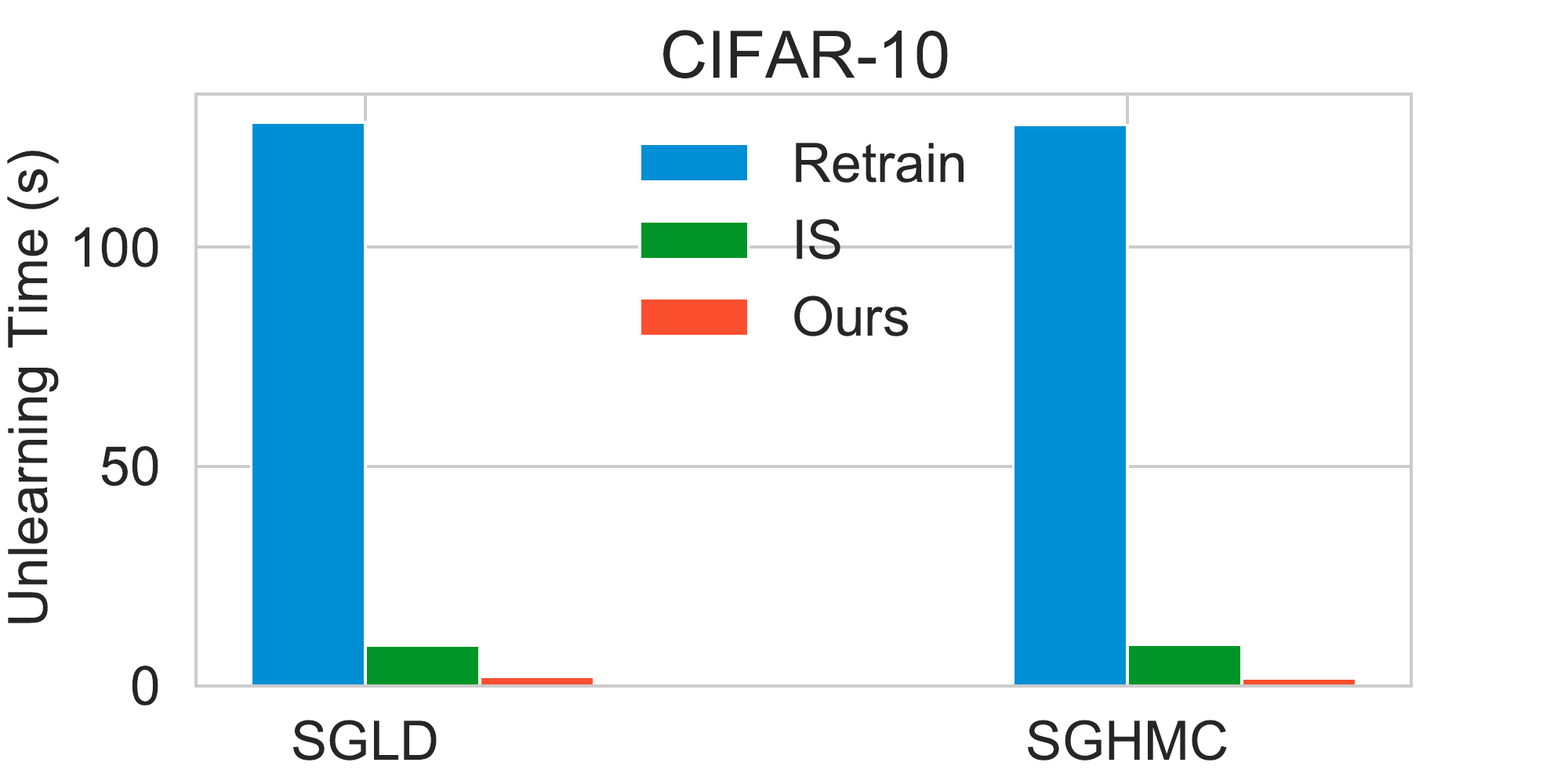}
    
    \caption{Time costs for processing single data deletion requests on BNNs.
    A smaller time cost implies a higher efficiency of the unlearning method.
    }
    \label{fig:bnn_times}
\vspace{-4mm}
\end{wrapfigure}
\color{black}
We then apply the MCMC unlearning algorithm to the Bayesian neural networks on the CIFAR-10 \citep{krizhevsky2009learning} dataset for classification.
Analogous experiments are also conducted on the Fashion-MNIST \citep{xiao2017/online} dataset, please see Appendix \ref{app:bnn_exp_details} for more details.

\color{black}
\textbf{Dataset \& BNN.}
We divide the training set $S$ of CIFAR-10 into two parts, the removed part $S_f$ and the remained part $S_r$.
A certain number of examples are randomly chosen from a single class in the dataset to form the removed training set $S_f$.
The test set is denoted by $S_{\text{test}}$.
A BNN consists of two convolutional layers and two fully-connected layers is adopted in the experiments.
For more details about the dataset and model architecture, see Appendix \ref{app:bnn_exp_details}.

\color{black}

\textbf{Baseline method.}
We compare the proposed MCMC unlearning algorithm with the importance sampling method (IS).
Specifically, when a data deletion request comes, the importance sampling method will process the request via performing MCMC sampling on the remaining data.

\textbf{Evaluation metrics.}
{\color{black} Four} kinds of metrics are used to evaluate the performance of machine unlearning:
(1) \textbf{Classification errors.}
We calculate classification errors the remaining set $S_r$, removed set $S_f$, and test set $S_{\mathrm{test}}$ for different models.
Ideally, after removing data, the three errors of the processed model would approach that of the retrained model.
(2) \textbf{$\varepsilon$-Knowledge removal guarantee.}
We use the knowledge removal estimator $\hat\varepsilon_M$ defined in Definition \ref{def:knowledge_rem_estimator} to estimate the $\varepsilon$ value in the $\varepsilon$-knowledge removal guarantee.
A smaller estimation result would imply a better knowledge removal performance.
Calculating $\hat\varepsilon_M$ requires to fix a series of random seeds $\omega_1,\cdots,\omega_M$.
To this end, we use a series of pre-trained models as the random seeds.
See Appendix \ref{app:bnn_cal_kl} for the detailed calculation procedures.
{\color{black} (3) \textbf{Membership inference attack (MIA) accuracy.}
We employ a white-box MIA \citep{yeom2018privacy} to infer whether the examples from the removed subset $S_f$ come from the training set of the model.
Intuitively, the attack accuracy would decline after unlearning.
Thereby, a small attack accuracy implies a strong knowledge removal performance.
See Appendix \ref{app:bnn_mia} for more details about the MIA and its calculation procedures.}
(4) \textbf{Time costs for unlearning.}
For different unlearning methods, we estimate the average time usages for processing single data deletion requests.
A smaller time cost corresponds to high unlearning efficiency.


\textbf{Experiment designs.}
We first train the BNN on the complete training set $S$ with SGLD and SGHMC, respectively.
Then, the subset $S_f$ is removed iteratively, where a fixed number of data points are removed in each iteration.
We also trained models on only the remaining set $S_r$ to show the targets of the machine unlearning task.
For the details of the experiments, see Appendix \ref{app:bnn_run_exp}.


\textbf{Results analysis.}
The experiment results are presented in Table \ref{tab:exp}.
In all experiments, we observe that the proposed MCMC unlearning algorithm can significantly increase the model error on sample set $S_f$ while making the error on sample sets $S_r$ and $S_\mathrm{test}$ close to that of the retrained one.
Besides, our algorithm can also reduce the $\varepsilon$ value in the knowledge removal guarantee {\color{black} and the MIA accuracy on $S_f$}, which further indicates it can indeed remove specified knowledge from the MCMC models.
{\color{black} In contrast,} the importance sampling method could neither effectively make the classification errors approach the targets, nor achieve stronger $\varepsilon$-knowledge removal.
{\color{black} We also present the time costs for different unlearning methods in Fig. \ref{fig:bnn_times}.}
{One can} found that the MCMC unlearning algorithm is significantly faster than other methods.
All these experiment results demonstrate the effectiveness and efficiency of the proposed algorithm.

{\color{black} Finally, we give a case study about the prediction changes on the test set examples to illustrate the effect of the proposed MCMC unlearning algorithm.
The results are presented in Fig.~\ref{fig:bnn_case_study}, where one can observe that as the data removing continues, the prediction confidence of the model on the removed class significantly decreases, while those on other classes remain the same or increase.
}

\begin{figure}[t]
\centering

\begin{subfigure}{0.48\linewidth}
\def \vardir {label0-4217}
\begin{subfigure}{0.15\linewidth}
    \centering
    \includegraphics[width=\linewidth]{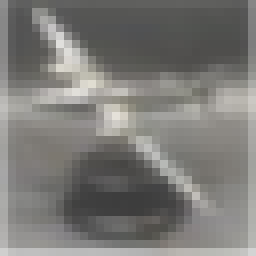}
\end{subfigure}
\begin{subfigure}{0.86\linewidth}
    \centering
    \includegraphics[width=0.23\linewidth]{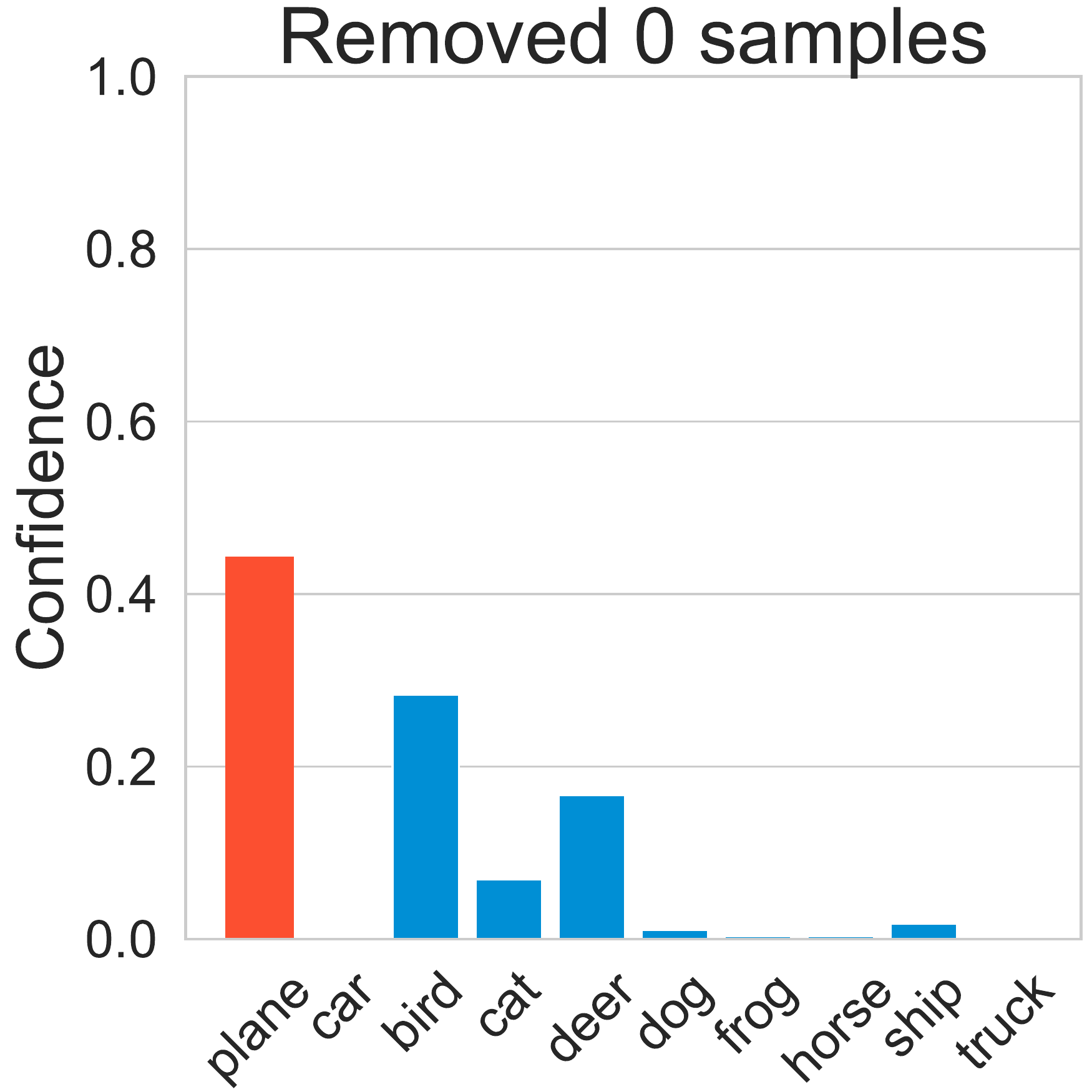}
    \includegraphics[width=0.23\linewidth]{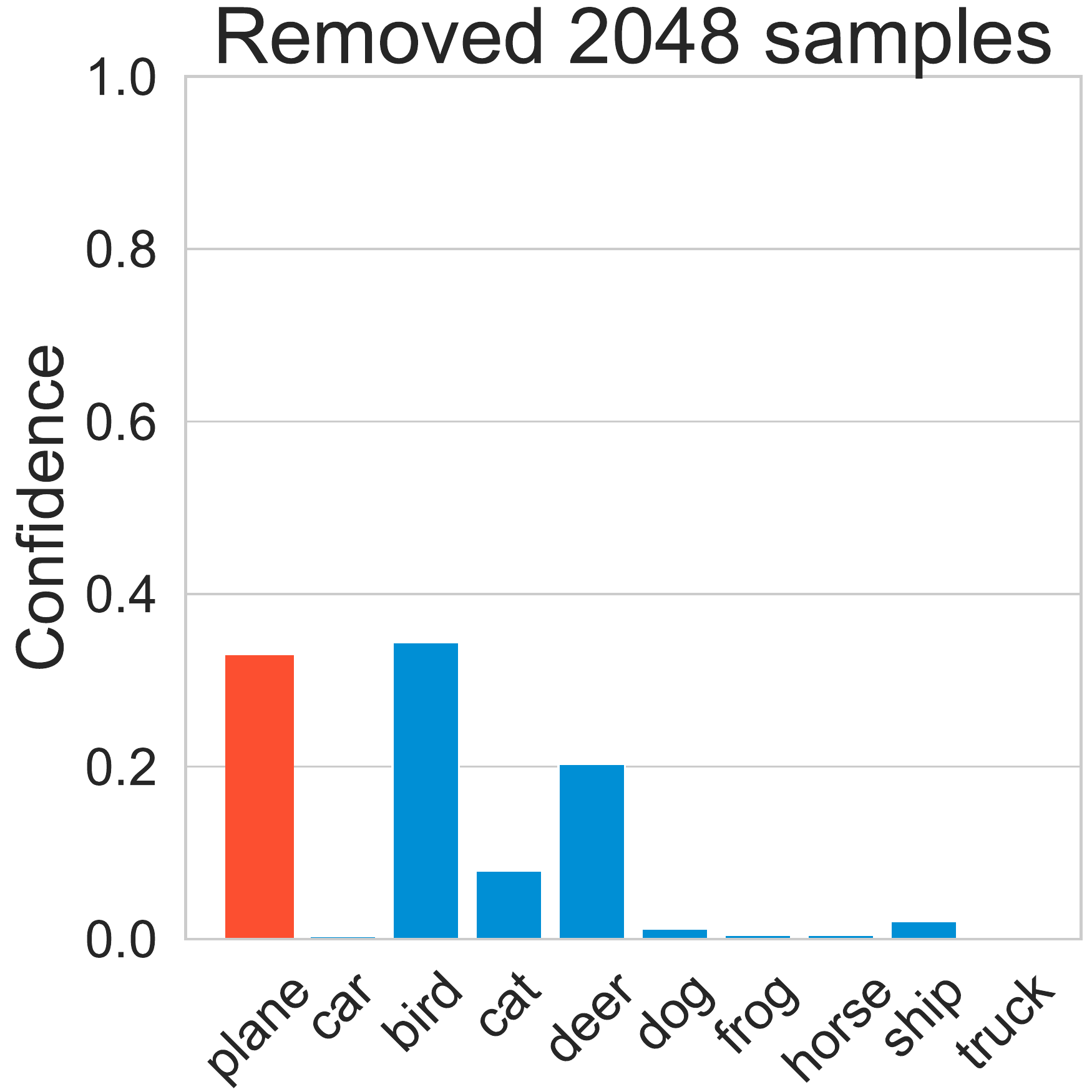}
    \includegraphics[width=0.23\linewidth]{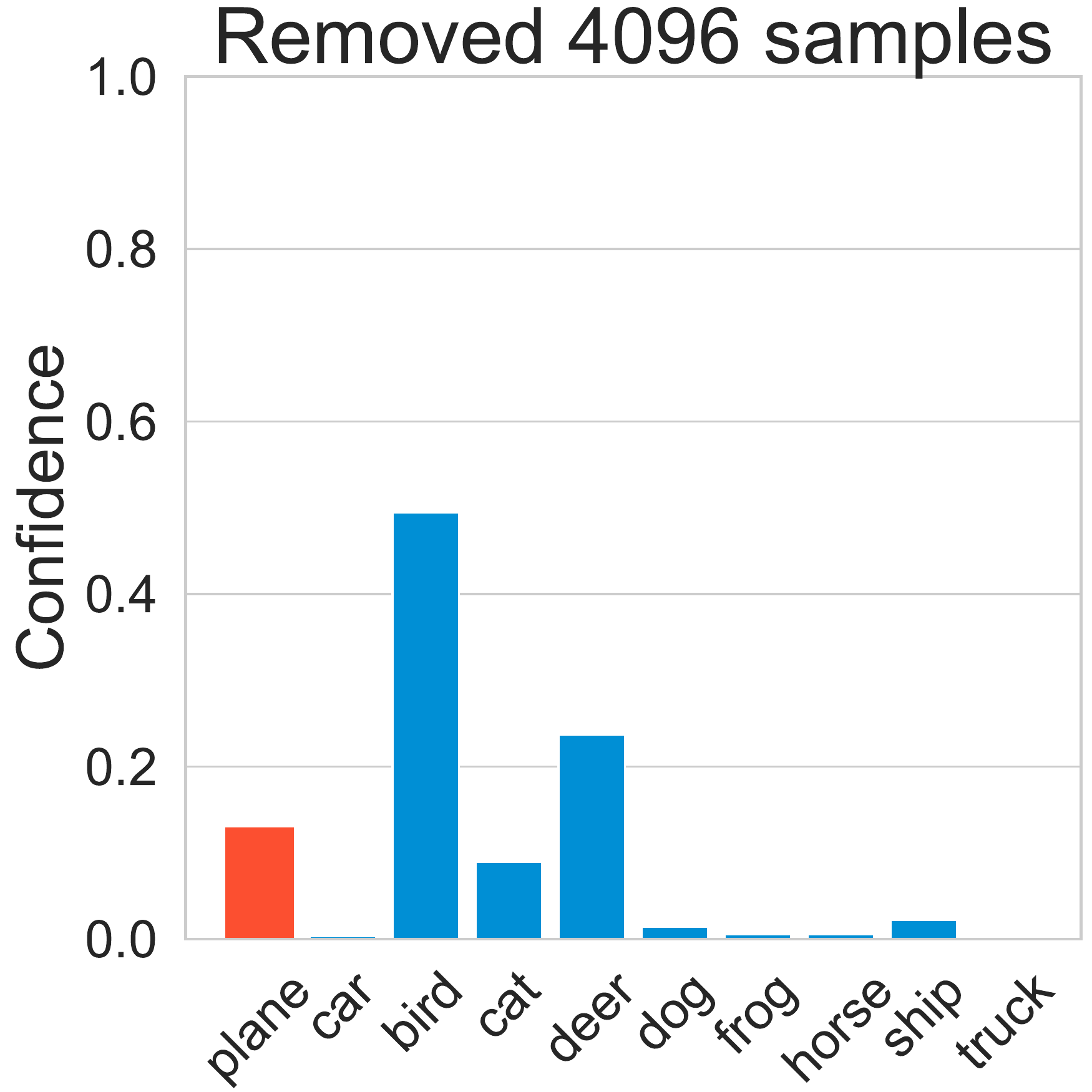}
    \includegraphics[width=0.23\linewidth]{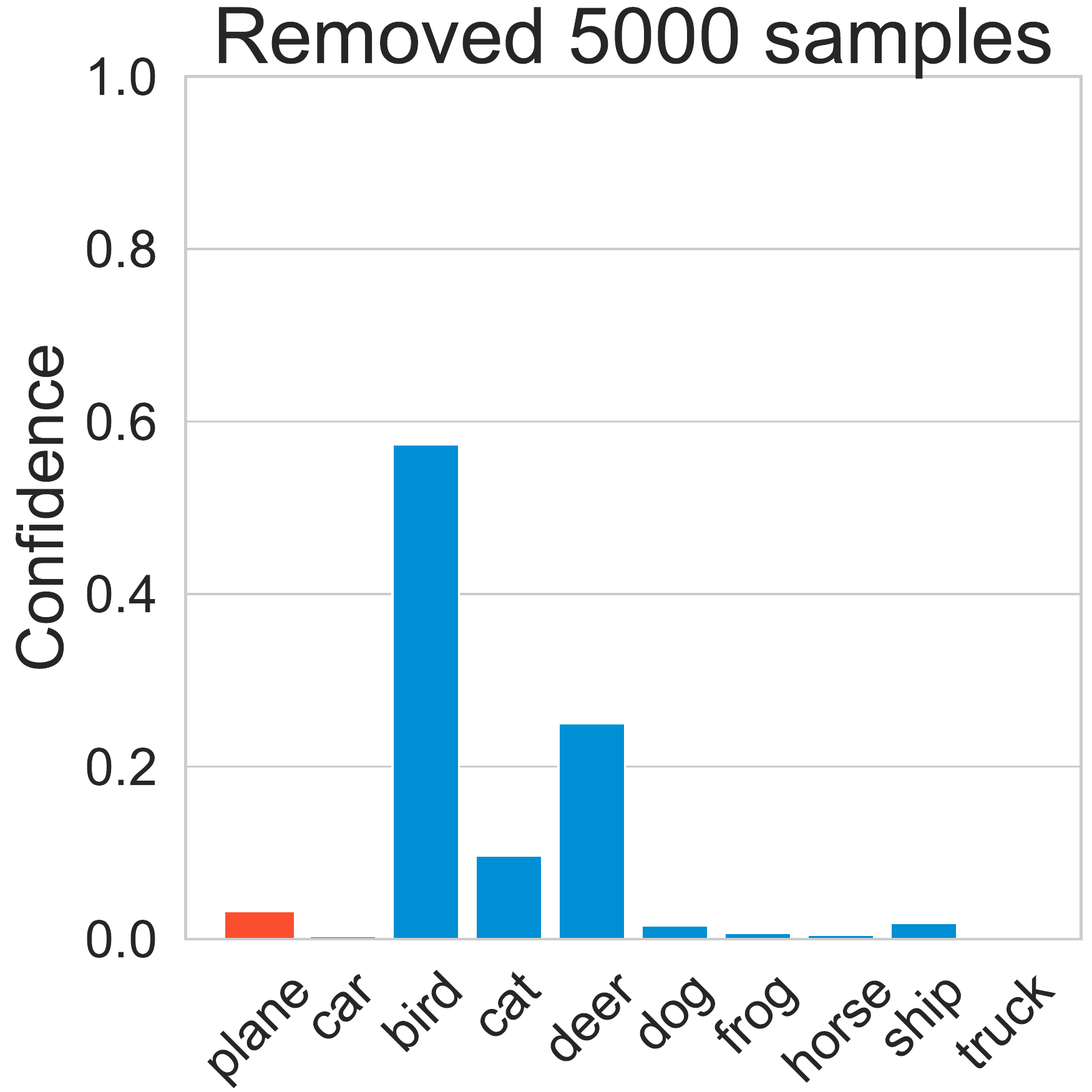}
\end{subfigure}
\subcaption{\color{black}
The prediction confidences changes for a ``plane'' image. 
``Plane'' is also the removed class.}
\end{subfigure}
\hspace{2mm}
\begin{subfigure}{0.48\linewidth}
\def \vardir {label0-6872}
\begin{subfigure}{0.15\linewidth}
    \centering
    \includegraphics[width=\linewidth]{./figures/\vardir/image.png}
\end{subfigure}
\begin{subfigure}{0.86\linewidth}
    \centering
    \includegraphics[width=0.23\linewidth]{./figures/\vardir/rem-0.pdf}
    \includegraphics[width=0.23\linewidth]{./figures/\vardir/rem-2048.pdf}
    \includegraphics[width=0.23\linewidth]{./figures/\vardir/rem-4096.pdf}
    \includegraphics[width=0.23\linewidth]{./figures/\vardir/rem-5000.pdf}
\end{subfigure}
\subcaption{\color{black}
The prediction confidences changes for a ``plane'' image.
``Plane'' is also the removed class.}
\end{subfigure}
\vspace{2mm}

\begin{subfigure}{0.48\linewidth}
\def \vardir {label3-7349}
\begin{subfigure}{0.15\linewidth}
    \centering
    \includegraphics[width=\linewidth]{./figures/\vardir/image.png}
\end{subfigure}
\begin{subfigure}{0.86\linewidth}
    \centering
    \includegraphics[width=0.23\linewidth]{./figures/\vardir/rem-0.pdf}
    \includegraphics[width=0.23\linewidth]{./figures/\vardir/rem-2048.pdf}
    \includegraphics[width=0.23\linewidth]{./figures/\vardir/rem-4096.pdf}
    \includegraphics[width=0.23\linewidth]{./figures/\vardir/rem-5000.pdf}
\end{subfigure}
\subcaption{\color{black}
The prediction confidences changes for a ``cat'' image. 
``Cat'' is not the removed class.}
\end{subfigure}
\hspace{2mm}
\begin{subfigure}{0.48\linewidth}
\def \vardir {label8-492}
\begin{subfigure}{0.15\linewidth}
    \centering
    \includegraphics[width=\linewidth]{./figures/\vardir/image.png}
\end{subfigure}
\begin{subfigure}{0.86\linewidth}
    \centering
    \includegraphics[width=0.23\linewidth]{./figures/\vardir/rem-0.pdf}
    \includegraphics[width=0.23\linewidth]{./figures/\vardir/rem-2048.pdf}
    \includegraphics[width=0.23\linewidth]{./figures/\vardir/rem-4096.pdf}
    \includegraphics[width=0.23\linewidth]{./figures/\vardir/rem-5000.pdf}
\end{subfigure}
\subcaption{\color{black}
The prediction confidences changes for a ``ship'' image. 
``Ship'' is not the removed class.}
\end{subfigure}

\caption{
\color{black} The changes of the prediction confidences for the test set examples of CIFAR-10 along with data removing.
The BNN is trained with SGLD and the knowledge removal is conducted with the proposed MCMC unlearning algorithm.
The predictions for the true labels are colored in red.
}
\label{fig:bnn_case_study}
\vspace{-4mm}
\end{figure}

\section{Conclusion}
\label{sec:conclusion}
{
The right to be forgotten imposes a considerable compliance burden on AI companies.
A company may need to delete the whole model learned from massive resources due to a request to delete a single sample point.
Existing works only appliable to explicit parameterized optimization problem, which however not work for sampling-based Bayesian inference method, {\it i.e.}, MCMC.
In this work, we convert the MCMC unlearning problem to an optimization problem.
Then, an MCMC influence function is designed to characterize the influence of data on the distribution inferred by MCMC.
It then delivers the first MCMC unlearning algorithm.
Theoretical analyses show that the proposed algorithm can indeed reduce the learned knowledge, and would not compromise the generalization ability of the inferred distribution.
Experiments show that the proposed algorithm can remove the influence of specified samples without compromising the knowledge learned on the remained data.
}

\subsubsection*{Acknowledgments}
FH is supported in part by the Major Science and Technology Innovation 2030 ``New Generation Artificial Intelligence'' key project (No. 2021ZD0111700). SF and DT are supported by Australian Research Council Projects FL-170100117, IH-180100002, IC-190100031, and LE-200100049. The authors sincerely appreciate Yue Xu and the anonymous ICLR reviewers for their helpful comments.

\bibliography{mcmc_unlearning}

\begin{thebibliography}{53}
\providecommand{\natexlab}[1]{#1}
\providecommand{\url}[1]{\texttt{#1}}
\expandafter\ifx\csname urlstyle\endcsname\relax
  \providecommand{\doi}[1]{doi: #1}\else
  \providecommand{\doi}{doi: \begingroup \urlstyle{rm}\Url}\fi

\bibitem[Abadi et~al.(2015)Abadi, Agarwal, Barham, Brevdo, Chen, Citro,
  Corrado, Davis, Dean, Devin, Ghemawat, Goodfellow, Harp, Irving, Isard, Jia,
  Jozefowicz, Kaiser, Kudlur, Levenberg, Man\'{e}, Monga, Moore, Murray, Olah,
  Schuster, Shlens, Steiner, Sutskever, Talwar, Tucker, Vanhoucke, Vasudevan,
  Vi\'{e}gas, Vinyals, Warden, Wattenberg, Wicke, Yu, and
  Zheng]{tensorflow2015-whitepaper}
Mart\'{\i}n Abadi, Ashish Agarwal, Paul Barham, Eugene Brevdo, Zhifeng Chen,
  Craig Citro, Greg~S. Corrado, Andy Davis, Jeffrey Dean, Matthieu Devin,
  Sanjay Ghemawat, Ian Goodfellow, Andrew Harp, Geoffrey Irving, Michael Isard,
  Yangqing Jia, Rafal Jozefowicz, Lukasz Kaiser, Manjunath Kudlur, Josh
  Levenberg, Dandelion Man\'{e}, Rajat Monga, Sherry Moore, Derek Murray, Chris
  Olah, Mike Schuster, Jonathon Shlens, Benoit Steiner, Ilya Sutskever, Kunal
  Talwar, Paul Tucker, Vincent Vanhoucke, Vijay Vasudevan, Fernanda Vi\'{e}gas,
  Oriol Vinyals, Pete Warden, Martin Wattenberg, Martin Wicke, Yuan Yu, and
  Xiaoqiang Zheng.
\newblock {TensorFlow}: Large-scale machine learning on heterogeneous systems,
  2015.

\bibitem[Abadi et~al.(2016)Abadi, Chu, Goodfellow, McMahan, Mironov, Talwar,
  and Zhang]{abadi2016deep}
Martin Abadi, Andy Chu, Ian Goodfellow, H~Brendan McMahan, Ilya Mironov, Kunal
  Talwar, and Li~Zhang.
\newblock Deep learning with differential privacy.
\newblock In \emph{Proceedings of the 2016 ACM SIGSAC conference on computer
  and communications security}, pp.\  308--318, 2016.

\bibitem[Agarwal et~al.(2017)Agarwal, Bullins, and Hazan]{agarwal2017second}
Naman Agarwal, Brian Bullins, and Elad Hazan.
\newblock Second-order stochastic optimization for machine learning in linear
  time.
\newblock \emph{The Journal of Machine Learning Research}, 18\penalty0
  (1):\penalty0 4148--4187, 2017.

\bibitem[Ahn et~al.(2012)Ahn, Korattikara, and Welling]{ahn2012bayesian}
Sungjin Ahn, Anoop Korattikara, and Max Welling.
\newblock Bayesian posterior sampling via stochastic gradient {Fisher} scoring.
\newblock In \emph{International Conference on Machine Learning}, pp.\
  1771–1778, 2012.

\bibitem[Baumhauer et~al.(2020)Baumhauer, Sch{\"o}ttle, and
  Zeppelzauer]{baumhauer2020machine}
Thomas Baumhauer, Pascal Sch{\"o}ttle, and Matthias Zeppelzauer.
\newblock Machine unlearning: {Linear} filtration for logit-based classifiers.
\newblock \emph{arXiv preprint arXiv:2002.02730}, 2020.

\bibitem[Bourtoule et~al.(2021)Bourtoule, Chandrasekaran, Choquette-Choo, Jia,
  Travers, Zhang, Lie, and Papernot]{bourtoule2021machine}
Lucas Bourtoule, Varun Chandrasekaran, Christopher~A Choquette-Choo, Hengrui
  Jia, Adelin Travers, Baiwu Zhang, David Lie, and Nicolas Papernot.
\newblock Machine unlearning.
\newblock In \emph{2021 IEEE Symposium on Security and Privacy (SP)}, pp.\
  141--159. IEEE, 2021.

\bibitem[Brophy \& Lowd(2021)Brophy and Lowd]{brophy2021machine}
Jonathan Brophy and Daniel Lowd.
\newblock Machine unlearning for random forests.
\newblock In \emph{International Conference on Machine Learning}, pp.\
  1092--1104. PMLR, 2021.

\bibitem[Cao \& Yang(2015)Cao and Yang]{cao2015towards}
Yinzhi Cao and Junfeng Yang.
\newblock Towards making systems forget with machine unlearning.
\newblock In \emph{2015 IEEE Symposium on Security and Privacy}, pp.\
  463--480. IEEE, 2015.

\bibitem[Carmon et~al.(2018)Carmon, Duchi, Hinder, and
  Sidford]{carmon2018accelerated}
Yair Carmon, John~C Duchi, Oliver Hinder, and Aaron Sidford.
\newblock Accelerated methods for nonconvex optimization.
\newblock \emph{SIAM Journal on Optimization}, 28\penalty0 (2):\penalty0
  1751--1772, 2018.

\bibitem[Chen et~al.(2014)Chen, Fox, and Guestrin]{chen2014stochastic}
Tianqi Chen, Emily Fox, and Carlos Guestrin.
\newblock Stochastic gradient {Hamiltonian Monte Carlo}.
\newblock In \emph{International Conference on Machine Learning}, volume~32 of
  \emph{Proceedings of Machine Learning Research}, pp.\  1683--1691. PMLR,
  2014.

\bibitem[Cook \& Weisberg(1982)Cook and Weisberg]{cook1982residuals}
R~Dennis Cook and Sanford Weisberg.
\newblock \emph{Residuals and Influence in Regression}.
\newblock New York: Chapman and Hall, 1982.

\bibitem[Ding et~al.(2014)Ding, Fang, Babbush, Chen, Skeel, and
  Neven]{ding2014bayesian}
Nan Ding, Youhan Fang, Ryan Babbush, Changyou Chen, Robert~D Skeel, and Hartmut
  Neven.
\newblock Bayesian sampling using stochastic gradient {Thermostats}.
\newblock In \emph{Advances in Neural Information Processing Systems},
  volume~27. Curran Associates, Inc., 2014.

\bibitem[Duane et~al.(1987)Duane, Kennedy, Pendleton, and
  Roweth]{duane1987hybrid}
Simon Duane, Anthony~D Kennedy, Brian~J Pendleton, and Duncan Roweth.
\newblock Hybrid {Monte} {Carlo}.
\newblock \emph{Physics letters B}, 195\penalty0 (2):\penalty0 216--222, 1987.

\bibitem[Garg et~al.(2020)Garg, Goldwasser, and Vasudevan]{garg2020formalizing}
Sanjam Garg, Shafi Goldwasser, and Prashant~Nalini Vasudevan.
\newblock Formalizing data deletion in the context of the right to be
  forgotten.
\newblock \emph{Advances in Cryptology--EUROCRYPT 2020}, 12106:\penalty0 373,
  2020.

\bibitem[Geman \& Geman(1984)Geman and Geman]{geman1984stochastic}
Stuart Geman and Donald Geman.
\newblock Stochastic relaxation, {Gibbs} distributions, and the {Bayesian}
  restoration of images.
\newblock \emph{IEEE Transactions on Pattern Analysis and Machine
  Intelligence}, \penalty0 (6):\penalty0 721--741, 1984.

\bibitem[George \& McCulloch(1993)George and McCulloch]{george1993variable}
Edward~I George and Robert~E McCulloch.
\newblock Variable selection via {Gibbs} sampling.
\newblock \emph{Journal of the American Statistical Association}, 88\penalty0
  (423):\penalty0 881--889, 1993.

\bibitem[Ginart et~al.(2019)Ginart, Guan, Valiant, and Zou]{ginart2019making}
Antonio Ginart, Melody Guan, Gregory Valiant, and James~Y Zou.
\newblock Making {AI} forget you: {Data} deletion in machine learning.
\newblock In \emph{Advances in Neural Information Processing Systems},
  volume~32. Curran Associates, Inc., 2019.

\bibitem[Golatkar et~al.(2020)Golatkar, Achille, and
  Soatto]{golatkar2020eternal}
Aditya Golatkar, Alessandro Achille, and Stefano Soatto.
\newblock Eternal sunshine of the spotless net: {Selective} forgetting in deep
  networks.
\newblock In \emph{Proceedings of the IEEE/CVF Conference on Computer Vision
  and Pattern Recognition}, pp.\  9304--9312, 2020.

\bibitem[Golatkar et~al.(2021)Golatkar, Achille, Ravichandran, Polito, and
  Soatto]{golatkar2021mixed}
Aditya Golatkar, Alessandro Achille, Avinash Ravichandran, Marzia Polito, and
  Stefano Soatto.
\newblock Mixed-privacy forgetting in deep networks.
\newblock In \emph{Proceedings of the IEEE/CVF Conference on Computer Vision
  and Pattern Recognition}, pp.\  792--801, 2021.

\bibitem[Gong et~al.(2021)Gong, Simeone, and Kang]{gong2021bayesian}
Jinu Gong, Osvaldo Simeone, and Joonhyuk Kang.
\newblock Bayesian variational federated learning and unlearning in
  decentralized networks.
\newblock \emph{arXiv preprint arXiv:2104.03834}, 2021.

\bibitem[Guo et~al.(2020)Guo, Goldstein, Hannun, and Van
  Der~Maaten]{guo2020certified}
Chuan Guo, Tom Goldstein, Awni Hannun, and Laurens Van Der~Maaten.
\newblock Certified data removal from machine learning models.
\newblock In \emph{International Conference on Machine Learning}, pp.\
  3832--3842. PMLR, 2020.

\bibitem[Hastings(1970)]{hastings1970monte}
W~Keith Hastings.
\newblock {Monte Carlo Sampling Methods Using Markov Chains and Their
  Applications}.
\newblock 1970.

\bibitem[He \& Tao(2020)He and Tao]{he2020recent}
Fengxiang He and Dacheng Tao.
\newblock Recent advances in deep learning theory.
\newblock \emph{arXiv preprint arXiv:2012.10931}, 2020.

\bibitem[He et~al.(2019)He, Liu, and Tao]{he2019control}
Fengxiang He, Tongliang Liu, and Dacheng Tao.
\newblock Control batch size and learning rate to generalize well: Theoretical
  and empirical evidence.
\newblock In \emph{Advances in Neural Information Processing Systems}, pp.\
  1143--1152, 2019.

\bibitem[Huber(2004)]{huber2004robust}
Peter~J Huber.
\newblock \emph{Robust Statistics}, volume 523.
\newblock John Wiley \& Sons, 2004.

\bibitem[Izzo et~al.(2021)Izzo, Smart, Chaudhuri, and Zou]{izzo2021approximate}
Zachary Izzo, Mary~Anne Smart, Kamalika Chaudhuri, and James Zou.
\newblock Approximate data deletion from machine learning models.
\newblock In \emph{International Conference on Artificial Intelligence and
  Statistics}, pp.\  2008--2016. PMLR, 2021.

\bibitem[Koh \& Liang(2017)Koh and Liang]{koh2017understanding}
Pang~Wei Koh and Percy Liang.
\newblock Understanding black-box predictions via influence functions.
\newblock In \emph{International Conference on Machine Learning}, pp.\
  1885--1894, 2017.

\bibitem[Krizhevsky et~al.(2009)Krizhevsky, Hinton,
  et~al.]{krizhevsky2009learning}
Alex Krizhevsky, Geoffrey Hinton, et~al.
\newblock Learning multiple layers of features from tiny images.
\newblock 2009.

\bibitem[LeCun et~al.(1998)LeCun, Bottou, Bengio, and
  Haffner]{lecun1998gradient}
Yann LeCun, L{\'e}on Bottou, Yoshua Bengio, and Patrick Haffner.
\newblock Gradient-based learning applied to document recognition.
\newblock \emph{Proceedings of the IEEE}, 86\penalty0 (11):\penalty0
  2278--2324, 1998.

\bibitem[Li et~al.(2018)Li, Fosch~Villaronga, and Kieseberg]{li2018humans}
Tiffany Li, Eduard Fosch~Villaronga, and Peter Kieseberg.
\newblock Humans forget, machines remember: {Artificial} intelligence and the
  right to be forgotten.
\newblock \emph{Computer Law \& Security Review}, 34\penalty0 (2):\penalty0
  304, 2018.

\bibitem[Ma et~al.(2015)Ma, Chen, and Fox]{ma2015complete}
Yi-An Ma, Tianqi Chen, and Emily Fox.
\newblock A complete recipe for stochastic gradient {MCMC}.
\newblock In \emph{Advances in Neural Information Processing Systems},
  volume~28. Curran Associates, Inc., 2015.

\bibitem[McAllester(1998)]{mcallester1998some}
David~A McAllester.
\newblock Some {PAC-Bayesian} theorems.
\newblock In \emph{Proceedings of the Annual Conference on Computational
  Learning Theory}, pp.\  230--234, 1998.

\bibitem[McAllester(1999)]{mcallester1999pac}
David~A McAllester.
\newblock {PAC-Bayesian} model averaging.
\newblock In \emph{Proceedings of the Annual Conference on Computational
  Learning Theory}, pp.\  164--170, 1999.

\bibitem[McAllester(2003)]{mcallester2003pac}
David~A McAllester.
\newblock {PAC-Bayesian} stochastic model selection.
\newblock \emph{Machine Learning}, 51\penalty0 (1):\penalty0 5--21, 2003.

\bibitem[Mohri et~al.(2012)Mohri, Rostamizadeh, and
  Talwalkar]{mohri2012foundations}
Mehryar Mohri, Afshin Rostamizadeh, and Ameet Talwalkar.
\newblock \emph{Foundations of machine learning}.
\newblock MIT press, 2012.

\bibitem[Neal(2003)]{neal2003slice}
Radford~M Neal.
\newblock Slice sampling.
\newblock \emph{The Annals of Statistics}, 31\penalty0 (3):\penalty0 705--767,
  2003.

\bibitem[Neal et~al.(2011)]{neal2011mcmc}
Radford~M Neal et~al.
\newblock {MCMC} using {Hamiltonian} dynamics.
\newblock \emph{Handbook of Markov Chain Monte Carlo}, 2\penalty0
  (11):\penalty0 2, 2011.

\bibitem[Nesterov \& Polyak(2006)Nesterov and Polyak]{nesterov2006cubic}
Yurii Nesterov and Boris~T Polyak.
\newblock Cubic regularization of {Newton} method and its global performance.
\newblock \emph{Mathematical Programming}, 108\penalty0 (1):\penalty0 177--205,
  2006.

\bibitem[Nguyen et~al.(2020)Nguyen, Low, and Jaillet]{nguyen2020variational}
Quoc~Phong Nguyen, Bryan Kian~Hsiang Low, and Patrick Jaillet.
\newblock Variational {Bayesian} unlearning.
\newblock \emph{Advances in Neural Information Processing Systems}, 33, 2020.

\bibitem[Paszke et~al.(2017)Paszke, Gross, Chintala, Chanan, Yang, DeVito, Lin,
  Desmaison, Antiga, and Lerer]{paszke2017automatic}
Adam Paszke, Sam Gross, Soumith Chintala, Gregory Chanan, Edward Yang, Zachary
  DeVito, Zeming Lin, Alban Desmaison, Luca Antiga, and Adam Lerer.
\newblock Automatic differentiation in {PyTorch}.
\newblock In \emph{NIPS-W}, 2017.

\bibitem[Patterson \& Teh(2013)Patterson and Teh]{patterson2013stochastic}
Sam Patterson and Yee~Whye Teh.
\newblock Stochastic gradient {Riemannian Langevin Dynamics} on the probability
  simplex.
\newblock In \emph{Advances in Neural Information Processing Systems},
  volume~26. Curran Associates, Inc., 2013.

\bibitem[Robbins \& Monro(1951)Robbins and Monro]{robbins1951stochastic}
Herbert Robbins and Sutton Monro.
\newblock A stochastic approximation method.
\newblock \emph{The Annals of Mathematical Statistics}, pp.\  400--407, 1951.

\bibitem[Shokri et~al.(2017)Shokri, Stronati, Song, and
  Shmatikov]{shokri2017membership}
Reza Shokri, Marco Stronati, Congzheng Song, and Vitaly Shmatikov.
\newblock Membership inference attacks against machine learning models.
\newblock In \emph{2017 IEEE Symposium on Security and Privacy (SP)}, pp.\
  3--18. IEEE, 2017.

\bibitem[Ullah et~al.(2021)Ullah, Mai, Rao, Rossi, and Arora]{ullah2021machine}
Enayat Ullah, Tung Mai, Anup Rao, Ryan~A. Rossi, and Raman Arora.
\newblock Machine unlearning via algorithmic stability.
\newblock In \emph{Proceedings of Thirty Fourth Conference on Learning Theory},
  volume 134 of \emph{Proceedings of Machine Learning Research}, pp.\
  4126--4142. PMLR, 2021.

\bibitem[Walker(2007)]{walker2007sampling}
Stephen~G Walker.
\newblock Sampling the dirichlet mixture model with slices.
\newblock \emph{Communications in Statistics—Simulation and
  Computation{\textregistered}}, 36\penalty0 (1):\penalty0 45--54, 2007.

\bibitem[Wang et~al.(2020)Wang, Xu, and Tao]{9229197}
Chaoyue Wang, Chang Xu, and Dacheng Tao.
\newblock Self-supervised pose adaptation for cross-domain image animation.
\newblock \emph{IEEE Transactions on Artificial Intelligence}, 1\penalty0
  (1):\penalty0 34--46, 2020.
\newblock \doi{10.1109/TAI.2020.3031581}.

\bibitem[Wang et~al.(2009)Wang, Kulkarni, and Verd{\'u}]{wang2009divergence}
Qing Wang, Sanjeev~R Kulkarni, and Sergio Verd{\'u}.
\newblock Divergence estimation for multidimensional densities via
  $k$-nearest-neighbor distances.
\newblock \emph{IEEE Transactions on Information Theory}, 55\penalty0
  (5):\penalty0 2392--2405, 2009.

\bibitem[Welling \& Teh(2011)Welling and Teh]{welling2011bayesian}
Max Welling and Yee~W Teh.
\newblock Bayesian learning via stochastic gradient {Langevin} dynamics.
\newblock In \emph{International Conference on Machine Learning}, pp.\
  681--688, 2011.

\bibitem[Xiao et~al.(2017)Xiao, Rasul, and Vollgraf]{xiao2017/online}
Han Xiao, Kashif Rasul, and Roland Vollgraf.
\newblock Fashion-{MNIST}: a novel image dataset for benchmarking machine
  learning algorithms, 2017.

\bibitem[Yeom et~al.(2018)Yeom, Giacomelli, Fredrikson, and
  Jha]{yeom2018privacy}
Samuel Yeom, Irene Giacomelli, Matt Fredrikson, and Somesh Jha.
\newblock Privacy risk in machine learning: {Analyzing} the connection to
  overfitting.
\newblock In \emph{2018 IEEE 31st Computer Security Foundations Symposium
  (CSF)}, pp.\  268--282. IEEE, 2018.

\bibitem[Zhang et~al.(2020{\natexlab{a}})Zhang, Li, Zhang, Chen, and
  Wilson]{zhang2020cyclical}
Ruqi Zhang, Chunyuan Li, Jianyi Zhang, Changyou Chen, and Andrew~Gordon Wilson.
\newblock Cyclical stochastic gradient {MCMC} for {Bayesian} deep learning.
\newblock In \emph{International Conference on Learning Representations},
  2020{\natexlab{a}}.

\bibitem[Zhang et~al.(2020{\natexlab{b}})Zhang, Wang, and
  Tao]{NEURIPS2020_9a118833}
Youjian Zhang, Chaoyue Wang, and Dacheng Tao.
\newblock Video frame interpolation without temporal priors.
\newblock In \emph{Advances in Neural Information Processing Systems},
  volume~33, pp.\  13308--13318, 2020{\natexlab{b}}.

\bibitem[Zhou et~al.(2018)Zhou, Xu, and Gu]{zhou2018stochastic}
Dongruo Zhou, Pan Xu, and Quanquan Gu.
\newblock Stochastic variance-reduced cubic regularized newton methods.
\newblock In \emph{International Conference on Machine Learning}, pp.\
  5990--5999. PMLR, 2018.

\end{thebibliography}
\bibliographystyle{iclr2022_conference}

\newpage

\appendix

\section{Proofs}
\label{app:proofs}

This section collects all the proofs in this paper.
To avoid technicalities, we assume that all functions are differentiable throughout this paper.
Furthermore, the order of differentiation and integration is assumed to be interchangeable.

For simplicity, we denote that
\begin{itemize}
\item $h(\delta,z) := -\E_{\theta \sim p_S} \log p(z|\theta+\delta)$,
\item $f(\delta) := -\E_{\theta \sim p_S} \log p(\theta+\delta)$,
\item $F(\delta, S) := \sum_{z \in S} h(\delta,z) + f(\delta) = \sum_{z \in S} -\E_{\theta \sim p_S} \log p(z|\theta+\delta) - \E_{\theta \sim p_S} \log p(\theta+\delta)$.
\end{itemize}

\subsection{Proof of Theorem \ref{thm:influence}}
\label{app:influence_proof}

To prove Theorem \ref{thm:influence}, we first define the following function,
\begin{gather*}
    F_{-S',\tau}(\delta,S) = F(\delta,S) + \tau \sum_{z \in S'} h(\delta,z),
\end{gather*}
where $\tau \in [-1,0]$.

Under Assumption \ref{ass:strong_convex}, it can be shown that $F_{-S',\tau}(\delta,S)$ is strongly convex on the space $V_3$ defined in Assumption \ref{ass:strong_convex}.

\begin{lemma}
\label{lem:eps_convex}
Suppose Assumption \ref{ass:strong_convex} holds. Then, for any $\tau \in [-1,0]$, $F_{-S',\tau}(\delta,S)$ is strongly convex on the space $V_3$.
\end{lemma}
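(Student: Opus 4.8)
\textbf{Proof proposal for Lemma~\ref{lem:eps_convex}.}
The plan is to show that $F_{-S',\tau}(\delta,S)$ is a nonnegative linear combination of functions each of which is $\mu$-strongly convex on $V_3$, and then invoke the elementary fact that such a combination is again strongly convex. Recall that, by definition,
\begin{gather*}
F_{-S',\tau}(\delta,S) = f(\delta) + \sum_{z \in S} h(\delta,z) + \tau \sum_{z \in S'} h(\delta,z)
= f(\delta) + \sum_{z \in S \setminus S'} h(\delta,z) + (1+\tau) \sum_{z \in S'} h(\delta,z),
\end{gather*}
where $f(\delta) = -\E_{\theta\sim p_S}\log p(\theta+\delta)$ and $h(\delta,z) = -\E_{\theta\sim p_S}\log p(z|\theta+\delta)$. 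The key observation is that since $\tau \in [-1,0]$, the coefficient $1+\tau$ lies in $[0,1] \ge 0$, and all the coefficients of the $h(\delta,z)$ terms for $z \in S \setminus S'$ are equal to $1 \ge 0$.

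First I would record, by Assumption~\ref{ass:strong_convex}, that both $f$ and each $h(\cdot,z)$ are $\mu$-strongly convex on $V_3$; equivalently, $f(\delta) - \tfrac{\mu}{2}\|\delta\|^2$ and $h(\delta,z) - \tfrac{\mu}{2}\|\delta\|^2$ are convex on $V_3$. Next I would use the standard closure properties of convexity: a sum of convex functions is convex, and a nonnegative scalar multiple of a convex function is convex. Applying these to the decomposition above, $F_{-S',\tau}(\delta,S) - c\,\tfrac{\mu}{2}\|\delta\|^2$ is convex on $V_3$, where $c = 1 + |S\setminus S'| + (1+\tau)|S'| \ge 1$ is the total weight. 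Hence $F_{-S',\tau}(\delta,S)$ is $c\mu$-strongly convex, and in particular at least $\mu$-strongly convex, on $V_3$ for every $\tau \in [-1,0]$.

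I do not anticipate a genuine obstacle here; the only point requiring a moment's care is the sign of the coefficient on the $S'$ terms — one must use $\tau \ge -1$ so that $1+\tau \ge 0$ and the combination stays a \emph{nonnegative} combination, which is exactly where the restriction $\tau \in [-1,0]$ in the definition of $F_{-S',\tau}$ is used. (Had $\tau < -1$ been allowed, the coefficient $1+\tau$ would be negative and strong convexity could fail.) If one wants the cleanest statement, it suffices to note $F_{-S',\tau}$ is $\mu$-strongly convex on $V_3$, discarding the extra factor $c$, since that is all the downstream influence analysis needs for invertibility of the Hessian.
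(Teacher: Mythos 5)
Your proof is correct and follows essentially the same route as the paper: both rewrite $F_{-S',\tau}(\delta,S)=F(\delta,S-S')+(1+\tau)\sum_{z\in S'}h(\delta,z)$ and use $\tau\ge -1$ to make the $S'$ coefficient nonnegative, so the whole expression is a nonnegative combination of $\mu$-strongly convex functions. Your version is in fact slightly more careful, since it tracks the explicit strong-convexity modulus $c\mu$ rather than just asserting the conclusion.
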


\begin{proof}[Proof of Lemma \ref{lem:eps_convex}]
We rearrange the function $F_{-S',\tau}$ as follows,
\begin{gather*}
F_{-S',\tau}(\delta,S)=F(\delta,S-S') + (1+\tau) \sum_{z \in S'} h(\delta,z).
\end{gather*}
Apparently, $F(\delta,S-S')$ is strongly convex on $V_3$. Since $\tau\in[-1,0]$, we have that $1+\tau\geq 0$. Thus, $(1+\tau)\sum_{z\in S'} h(\delta,z)$ is either strongly convex on $V_3$ or equal to zero. Therefore, $F_{-S',\tau}(\delta,S)$ is strongly convex on $V_3$.

The proof is completed.
\end{proof}

We then prove that when Assumption \ref{ass:local_mini} holds, there exists a continuous mapping that ``connects'' {the origin point $\bm 0 \in \R^d$} and the target shifting scale $\delta_S^{-S'}$.

\begin{lemma}
\label{lem:del_fun_cont}
Suppose Assumption \ref{ass:local_mini} holds. Then, there exists a continuous function $\hat\delta:[-1,0]\to V$ such that for any $\tau\in[-1,0]$, $\hat\delta(\tau)$ is the global minimizer of the function $F_{-S',\tau}(\delta,S)$ on space $V$.
\end{lemma}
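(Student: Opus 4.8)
\textbf{Proof proposal for Lemma \ref{lem:del_fun_cont}.}

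The plan is to construct the function $\hat\delta(\tau)$ pointwise and then establish its continuity via the implicit function theorem combined with the uniqueness guaranteed by strong convexity. First I would invoke Assumption \ref{ass:local_mini}: for each fixed $\tau \in [-1,0]$, the function $F_{-S',\tau}(\delta,S)$ has at least one local minimizer in $V$. Combining this with Lemma \ref{lem:eps_convex}, which tells us that $F_{-S',\tau}(\cdot,S)$ is strongly convex on $V_3 \supseteq V$, any such local minimizer in $V$ is in fact the unique global minimizer of $F_{-S',\tau}(\cdot,S)$ restricted to $V$ (strong convexity forces uniqueness of the minimizer, and a local minimizer of a strongly convex function on a convex neighborhood coincides with that unique point). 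This defines $\hat\delta(\tau)$ unambiguously for every $\tau \in [-1,0]$, and since the minimizer lies in the interior $V$, the first-order optimality condition $\nabla_\delta F_{-S',\tau}(\hat\delta(\tau),S) = 0$ holds.

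Next I would establish continuity of $\tau \mapsto \hat\delta(\tau)$. The cleanest route is the implicit function theorem applied to the map $G(\delta,\tau) := \nabla_\delta F_{-S',\tau}(\delta,S) = \nabla_\delta F(\delta,S) + \tau \sum_{z\in S'} \nabla_\delta h(\delta,z)$. This $G$ is continuously differentiable in $(\delta,\tau)$ (the paper assumes all functions differentiable and, by Assumption \ref{ass:lip_grad_hess}, the relevant gradients are Lipschitz hence the Hessians exist and are well-behaved on $V_2$), and its Jacobian with respect to $\delta$ is $\nabla^2_\delta F_{-S',\tau}(\delta,S)$, which by Lemma \ref{lem:eps_convex} (strong convexity) is positive definite, hence invertible, at $(\hat\delta(\tau),\tau)$. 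The implicit function theorem then yields a continuously differentiable local solution branch near each $\tau$; by the global uniqueness of $\hat\delta(\tau)$ established above, these local branches must agree with $\hat\delta$, so $\hat\delta$ is continuous (indeed $C^1$) on $[-1,0]$. Alternatively one could argue continuity directly: if $\tau_n \to \tau$, then $\{\hat\delta(\tau_n)\}$ lies in the bounded set $V$, any convergent subsequence has a limit $\delta^\ast$ satisfying $G(\delta^\ast,\tau)=0$ by continuity of $G$, and strong convexity forces $\delta^\ast = \hat\delta(\tau)$, so the whole sequence converges.

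The main obstacle I anticipate is a subtle one about the boundary of $V$: the implicit function theorem and the uniqueness argument both work cleanly only if $\hat\delta(\tau)$ stays in the \emph{interior} of $V$ (so that the first-order condition is valid and no minimizer "escapes" to $\partial V$ as $\tau$ varies). Assumption \ref{ass:local_mini} is phrased so as to place a local minimizer "in $V$", and one should read $V$ as an open neighborhood of the origin so that interiority is automatic; I would make this explicit at the start of the proof. A secondary technical point is verifying that $\hat\delta(0)=0$ — this is needed downstream for the first-order approximation in Section \ref{sec:method}, and it follows because at $\tau=0$ we have $F_{-S',0}(\delta,S) = F(\delta,S)$, whose gradient at $\delta = 0$ is $\nabla_\delta F(0,S)$; one needs this to vanish, which holds since $p_S$ is (by the standing assumption $\hat p_S = p_S$) precisely the posterior on $S$, making $\delta=0$ a stationary point of the associated KL objective. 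I would note this but, since the lemma as stated only claims existence and continuity of $\hat\delta$, the bulk of the argument is the strong-convexity-plus-IFT combination above.
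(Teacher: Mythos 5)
Your proposal is correct and follows essentially the same route as the paper's own proof: existence and uniqueness of the minimizer on $V$ from Assumption \ref{ass:local_mini} combined with the strong convexity of Lemma \ref{lem:eps_convex}, and continuity via the implicit function theorem applied to the first-order condition, using positive definiteness of the Hessian for invertibility. Your added remarks on reading $V$ as an open neighborhood (so the stationarity condition is valid) and on verifying $\hat\delta(0)=0$ are sensible points of care that the paper leaves implicit, but they do not change the argument.
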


\begin{proof}[Proof of Lemma \ref{lem:del_fun_cont}]
By applying Lemma \ref{lem:eps_convex} and Assumption \ref{ass:local_mini}, we have that the global minimizer of the function $F_{-S',\tau}(\gamma,S)$ uniquely exists in $V$.
Therefore, one can define the mapping $\hat\delta$ as below,
\begin{gather*}
    \hat\delta(\tau) = \arg\min_{\delta \in V} F_{-S',\tau}(\delta,S),
\end{gather*}
where $\tau \in [-1,0]$.

We then prove the continuity of $\hat\delta$.
Notice that $\hat\delta(\tau)$ is the solution of the following equation,
\begin{gather}
    \nabla_{\delta}F(\delta,S) + \tau \sum_{z \in S'} \nabla_{\delta} h(\delta,z) = 0.
    \label{equ:prob_diff}
\end{gather}
Since $F_{-S',\tau}(\delta,S)$ is strongly convex, thus its Hessian matrix $\nabla^2_\delta F_{-S',\tau}(\hat\delta(\tau),S)$ is invertible.
Combining with the implicit function theorem, we have that $\hat\delta$ is continuous on the interval $[-1,0]$.

The proof is completed.
\end{proof}

When $\tau$ takes $0$ and $-1$, the function $\hat\delta(\tau)$ becomes $\bm 0 \in \R^d$ and $\delta^{-S'}_{S}$, respectively.
Therefore, $\delta^{-S'}_{S}$ can be expanded into Taylor series as follows,
\begin{gather}
    \delta^{-S'}_{S} = \hat\delta(-1)
    = -\frac{\partial\hat\delta(0)}{\partial\tau} + \frac{1}{2} \cdot \frac{\partial^2\hat\delta(\xi)}{\partial\tau^2},
    \label{equ:taylor}
\end{gather}
where $\xi\in[-1,0]$, $\frac{1}{2} \frac{\partial^2\hat\delta(\xi)}{\partial\tau^2}$ is the Lagrange form of the remainder.

We prove that the Lagrange remainder term becomes negligible as the training set size $N$ goes to infinity.
To do this, we first prove the following Lemma.
\begin{lemma}
\label{lem:1st_norm}
Suppose Assumptions 1-4 hold.
The mapping $\hat\delta$ is as defined in Lemma \ref{lem:del_fun_cont}.
Then, we have that
\begin{gather*}
    \frac{\partial\hat\delta(\tau)}{\partial\tau}
    =-\nabla^{-2}_\delta F(\hat\delta(\tau),S) \sum_{z\in S'} \nabla_\delta h(\hat\delta(\tau),z)^T,
\end{gather*}
and for any $\tau \in [-1,0]$,
$\| \frac{\partial\hat\delta(\tau)}{\partial\tau} \|_2 \leq \mathcal O(|S'|/N)$.
\end{lemma}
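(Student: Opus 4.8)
The plan is to differentiate the first-order optimality condition for $\hat\delta(\tau)$ and then estimate the resulting expression using the structural assumptions. First I would recall that, by Lemma \ref{lem:del_fun_cont}, $\hat\delta(\tau)$ is the (unique) minimizer of $F_{-S',\tau}(\delta,S)$ on $V$, so it satisfies the stationarity equation
\begin{gather*}
\nabla_\delta F(\hat\delta(\tau),S) + \tau \sum_{z \in S'} \nabla_\delta h(\hat\delta(\tau),z) = 0 .
\end{gather*}
Differentiating both sides with respect to $\tau$ via the chain rule (as in the implicit function theorem, whose hypotheses are met because $\nabla^2_\delta F_{-S',\tau}(\hat\delta(\tau),S)$ is invertible by Lemma \ref{lem:eps_convex} and Assumption \ref{ass:strong_convex}) gives
\begin{gather*}
\Big(\nabla^2_\delta F(\hat\delta(\tau),S) + \tau \sum_{z\in S'} \nabla^2_\delta h(\hat\delta(\tau),z)\Big) \frac{\partial\hat\delta(\tau)}{\partial\tau} + \sum_{z\in S'} \nabla_\delta h(\hat\delta(\tau),z)^T = 0 .
\end{gather*}
Solving for $\partial\hat\delta(\tau)/\partial\tau$ and observing that at $\tau$ the matrix in parentheses is exactly $\nabla^2_\delta F_{-S',\tau}(\hat\delta(\tau),S)$; but the stated formula uses $\nabla^{-2}_\delta F(\hat\delta(\tau),S)$ rather than the Hessian of $F_{-S',\tau}$, so I would either absorb the $\tau\sum_{z\in S'}\nabla^2_\delta h$ term into the $\mathcal O(|S'|/N)$ error (since $|S'|\le N$ and, after normalizing $F$ by $N$, that correction is lower order) or note that the paper's $F$ in the displayed identity is meant up to this same-order correction. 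The cleanest route is to write the exact identity with the full Hessian of $F_{-S',\tau}$ and then remark that replacing it by $\nabla^{-2}_\delta F$ changes the result only by the claimed order.

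For the norm bound, the key estimates are: (i) $F(\delta,S) = \sum_{z\in S} h(\delta,z) + f(\delta)$ is a sum of $N$ terms each with $\mu$-strongly convex Hessian on $V_3$ (Assumption \ref{ass:strong_convex}), so $\nabla^2_\delta F(\hat\delta(\tau),S) \succeq N\mu I$ — more precisely, $F$ is $N\mu$-strongly convex after accounting for the prior term — hence $\|\nabla^{-2}_\delta F(\hat\delta(\tau),S)\|_2 \le \frac{1}{N\mu}$ up to constants, i.e. $\mathcal O(1/N)$; and (ii) each $\nabla_\delta h(\hat\delta(\tau),z)$ is bounded in norm by $L_1$ by the Lipschitz continuity of $\delta \mapsto -\E_{\theta\sim p_S}\log p(z|\theta+\delta)$ (Assumption \ref{ass:lipz_fun}), so $\|\sum_{z\in S'}\nabla_\delta h(\hat\delta(\tau),z)\|_2 \le |S'| L_1$. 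Combining, $\|\partial\hat\delta(\tau)/\partial\tau\|_2 \le \frac{|S'| L_1}{N\mu} = \mathcal O(|S'|/N)$, uniformly in $\tau\in[-1,0]$ since all the bounds hold on $V \subseteq V_2\cap V_3$ where $\hat\delta(\tau)$ lives by Assumption \ref{ass:local_mini}.

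The main obstacle I anticipate is making the strong-convexity scaling precise: Assumption \ref{ass:strong_convex} states $\mu$-strong convexity for the \emph{individual} functions $-\E_{\theta\sim p_S}\log p(z|\theta+\delta)$ and $-\E_{\theta\sim p_S}\log p(\theta+\delta)$, so the sum $F(\delta,S)$ over $N$ data points is $(N\mu)$-strongly convex (or $((N-|S'|)\mu)$-strongly convex after the $\tau$-reweighting, which is still $\Theta(N)$ since $|S'|\le N$ and typically $|S'|\ll N$), giving the crucial factor of $1/N$ in the Hessian inverse — this is exactly what produces the $\mathcal O(|S'|/N)$ rate rather than $\mathcal O(|S'|)$. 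I would need to state carefully that the prior contributes only an $\mathcal O(1)$ term to the Hessian, which does not affect the $\Theta(N)$ lower bound. The differentiation-under-the-integral step (swapping $\partial_\tau$ with $\E_{\theta\sim p_S}$) is routine under the blanket smoothness assumption stated at the top of the appendix, so I would not dwell on it.
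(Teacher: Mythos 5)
Your proposal is correct and follows essentially the same route as the paper's proof: differentiate the stationarity equation $\nabla_\delta F(\hat\delta(\tau),S) + \tau\sum_{z\in S'}\nabla_\delta h(\hat\delta(\tau),z)=0$ in $\tau$, invert the (strongly convex, hence positive definite) Hessian of $F_{-S',\tau}$, and combine the $\Theta(N)$ lower bound on that Hessian with the $L_1$ bound on each $\nabla_\delta h$ to get $\mathcal O(|S'|/N)$. You also correctly flag that the exact identity involves the Hessian of $F_{-S',\tau}$ rather than of $F$ alone — the paper's own proof derives the former (its Eq. for $\partial\hat\delta/\partial\tau$ carries the $\tau\sum_{z\in S'}\nabla^2_\delta h$ term) even though the lemma's display omits it, which is harmless since the two coincide at $\tau=0$ where the influence function is evaluated.
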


\begin{proof} 
We first calculate $\frac{\partial\hat\delta(\tau)}{\partial\tau}$ based on Eq. (\ref{equ:prob_diff}) in the proof of Lemma \ref{lem:del_fun_cont}.
Calculate the derivatives of the both sides of Eq. (\ref{equ:prob_diff}) with respect to $\tau$, we have that
\begin{gather}
    \nabla^2_\delta F(\hat\delta(\tau),S)\cdot\frac{\partial\hat\delta(\tau)}{\partial\tau} + \sum_{z\in S'} \nabla_\delta h(\hat\delta(\tau),z)^T + \tau \cdot \sum_{z\in S'} \nabla^2_\delta h(\hat\delta(\tau),z) \cdot \frac{\partial\hat\delta(\tau)}{\partial\tau} = 0.
    \label{thm:1st_norm:equ:diff_equ}
\end{gather}
According to Lemma \ref{lem:eps_convex}, $F_{-S',\tau}(\delta,S)$ is strongly convex on $V$. Thus, the following Hessian matrix
\begin{gather*}
    \nabla^2_\delta F_{-S',\tau}(\hat\delta(\tau),S)
    =\nabla^2_\delta F(\hat\delta(\tau),S)
    +\tau \sum_{z \in S'} \nabla^2_\delta h(\hat\delta(\tau),z)
\end{gather*}
is positive definite, hence invertible. Combining with Eq. (\ref{thm:1st_norm:equ:diff_equ}), we have that
\begin{align}
    \frac{\partial\hat\delta(\tau)}{\partial\tau}
    =& -\left(\nabla^2_\delta F(\hat\delta(\tau),S) + \tau \sum_{z\in S'} \nabla^2_\delta h(\hat\delta(\tau),z)\right)^{-1}
    \cdot \sum_{z \in S'} \nabla_\delta h(\hat\delta(\tau),z)^T.
    \label{thm:1st_norm:equ:diff}
\end{align}

We then upper bound the norm of $\frac{\partial\hat\delta(\tau)}{\partial\tau}$. Based on Eq. (\ref{thm:1st_norm:equ:diff}), we have that
\begin{gather}
    \left\| \frac{\partial\hat\delta(\tau)}{\partial\tau} \right\|_2
    \leq \left\| \left(\frac{1}{N}\nabla^2_\delta F(\hat\delta(\tau),S) + \frac{\tau}{N} \sum_{z \in S'} \nabla^2_\delta h(\hat\delta(\tau),z)\right)^{-1} \right\|_2
    \cdot \left\| \frac{1}{N} \sum_{z \in S'} \nabla_\delta h(\hat\delta(\tau),z) \right\|_2.
    \label{thm:1st_norm:equ:upp_bd}
\end{gather}

We first consider the first term of the right-hand side of Eq. (\ref{thm:1st_norm:equ:upp_bd}).
By Assumption \ref{ass:strong_convex}, both $f(\delta)$ and $h(\delta,z)$ are $\mu$-strongly convex on the space $V \subset \mathrm{supp}(\delta)$. Thus, we have that
\begin{align}
    &\frac{1}{N}\nabla^2_\delta F(\hat\delta(\tau),S) + \frac{\tau}{N} \sum_{z \in S'} \nabla^2_\delta h(\hat\delta(\tau),z) \nonumber \\
    &\quad = \frac{1}{N}\sum_{i=1}^N \nabla^2_\delta h(\hat\delta(\tau),z_i) + \frac{1}{N}\nabla^2_\delta f(\hat\delta(\tau)) + \frac{\tau}{N} \sum_{z \in S'} \nabla^2_\delta h(\hat\delta(\tau),z) \nonumber \\
    &\quad \succeq \left( \frac{1}{N} \sum_{z\in S-S'} \mu + \frac{\mu}{N} + \frac{1+\tau}{N} \sum_{z \in S'} \mu \right) I \nonumber \\
    &\quad \succeq \frac{N - |S'|}{N} \mu \cdot I.
    \label{thm:1st_norm:eq:mat_low_bd}
\end{align}
Let $\lambda_{\min}$ denotes the smallest eigenvalue of the following matrix
\begin{gather*}
\left(\frac{1}{N}\nabla^2_\delta F(\hat\delta(\tau),S) + \frac{\tau}{N} \sum_{z\in S'} \nabla^2_\delta h(\hat\delta(\tau),z)\right).
\end{gather*}
Then, the Eq. (\ref{thm:1st_norm:eq:mat_low_bd}) implies that
$\lambda_{\min} \geq \frac{N-|S'|}{N} \mu$.
Hence, we have the following,
\begin{gather}
\left\| \left(\frac{1}{N}\nabla^2_\delta F(\hat\delta(\tau),S) + \frac{\tau}{N} \sum_{z \in S'} \nabla^2_\delta h(\hat\delta(\tau),z)\right)^{-1} \right\|_2
= \frac{1}{\lambda_{\min}}
\leq \frac{N}{(N-|S'|) \mu} = \mathcal O(1).
\label{thm:1st_norm:equ:upp_term_1}
\end{gather}

We then upper bound the second term of the right-hand side of Eq. (\ref{thm:1st_norm:equ:upp_bd}).
By applying Assumption \ref{ass:lipz_fun}, we have that
\begin{gather}
    \left\| \frac{1}{N} \sum_{z\in S'} \nabla_\delta h(\hat\delta(\tau),z) \right\|_2
    \leq \frac{L_1 \cdot |S'|}{N} = \mathcal{O}\left(\frac{|S'|}{N}\right)
\label{thm:1st_norm:equ:upp_term_2}
\end{gather}

Finally, inserting eqs. (\ref{thm:1st_norm:equ:upp_term_1}) (\ref{thm:1st_norm:equ:upp_term_2}) into Eq. (\ref{thm:1st_norm:equ:upp_bd}), we eventually have that
\begin{gather*}
    \left\| \frac{\partial\hat\delta(\tau)}{\partial\tau} \right\|_2
    \leq \mathcal{O}(1) \cdot \mathcal{O}\left(\frac{|S'|}{N}\right) = \mathcal{O}\left(\frac{|S'|}{N}\right).
\end{gather*}

The proof is completed.
\end{proof}

Then, the following Lemma is obtained based on Lemma \ref{lem:1st_norm}, which demonstrates the strictness of the approximation in Eq. (\ref{equ:taylor}).
 
\begin{lemma}
\label{lem:2nd_norm}
Suppose Assumptions 1-4 hold. The induced mapping $\hat\delta$ is as defined in Lemma \ref{lem:del_fun_cont}. Then, for any $\tau\in[-1,0]$, we have that
$\| \frac{\partial^2\hat\delta(\tau)}{\partial\tau^2} \|_2 \leq \mathcal O(|S'|^2/N^2)$.
\end{lemma}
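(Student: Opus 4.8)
The plan is to differentiate the first-derivative formula from Lemma~\ref{lem:1st_norm} once more with respect to $\tau$ and then bound the resulting expression term by term, exactly mirroring the structure of the proof of Lemma~\ref{lem:1st_norm}. Recall from Eq.~(\ref{thm:1st_norm:equ:diff}) that
\begin{gather*}
\frac{\partial\hat\delta(\tau)}{\partial\tau}
= -\left(\nabla^2_\delta F_{-S',\tau}(\hat\delta(\tau),S)\right)^{-1}
\cdot \sum_{z \in S'} \nabla_\delta h(\hat\delta(\tau),z)^T,
\end{gather*}
so $\partial^2\hat\delta/\partial\tau^2$ is obtained by the product rule applied to the inverse-Hessian factor and to the gradient-sum factor. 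Using the identity $\partial_\tau (A^{-1}) = -A^{-1}(\partial_\tau A)A^{-1}$, the derivative of the inverse-Hessian factor produces $\|A^{-1}\|_2^2 \cdot \|\partial_\tau[\nabla^2_\delta F_{-S',\tau}(\hat\delta(\tau),S)]\|_2$, while the derivative of $\sum_{z\in S'}\nabla_\delta h(\hat\delta(\tau),z)^T$ contributes a Hessian-times-$\partial_\tau\hat\delta$ term plus possibly a $\sum_{z\in S'}\nabla^2_\delta h$ term arising from the explicit $\tau$ in $F_{-S',\tau}$.

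First I would write out $\partial_\tau[\nabla^2_\delta F_{-S',\tau}(\hat\delta(\tau),S)]$ explicitly: it equals $\nabla^3_\delta F(\hat\delta(\tau),S)[\partial_\tau\hat\delta] + \sum_{z\in S'}\nabla^2_\delta h(\hat\delta(\tau),z) + \tau\sum_{z\in S'}\nabla^3_\delta h(\hat\delta(\tau),z)[\partial_\tau\hat\delta]$. Then I would estimate each piece: the inverse-Hessian norm $\|A^{-1}\|_2$ is $\mathcal{O}(1)$ by Eq.~(\ref{thm:1st_norm:equ:upp_term_1}) (after normalizing by $N$, but the unnormalized inverse Hessian is $\mathcal{O}(1/N)$); the third-derivative terms $\nabla^3_\delta F$ are $\mathcal{O}(N)$ in operator norm by the Lipschitz-Hessian Assumption~\ref{ass:lip_grad_hess} (each of the $N$ summands has bounded third derivative, $L_3$); the single sums $\sum_{z\in S'}\nabla^2_\delta h$ and $\sum_{z\in S'}\nabla^3_\delta h$ are $\mathcal{O}(|S'|)$ by the smoothness and Lipschitz-Hessian assumptions; and crucially $\|\partial_\tau\hat\delta\|_2 \leq \mathcal{O}(|S'|/N)$ from Lemma~\ref{lem:1st_norm}. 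Multiplying out: the inverse-Hessian-derivative contribution is roughly $\mathcal{O}(1/N)\cdot\mathcal{O}(N)\cdot\mathcal{O}(|S'|/N)\cdot\mathcal{O}(|S'|) = \mathcal{O}(|S'|^2/N)$ when I also account for the unnormalized gradient-sum factor $\sum_{z\in S'}\nabla_\delta h$ being $\mathcal{O}(|S'|)$ — wait, I need to be careful with the $N$-powers here, so let me track them via the normalized quantities. Working with $\frac{1}{N}$-normalized Hessians (each $\mathcal{O}(1)$ and with $\mathcal{O}(1)$ inverse by Eq.~(\ref{thm:1st_norm:equ:upp_term_1})) and the normalized gradient sum $\frac{1}{N}\sum_{z\in S'}\nabla_\delta h$ which is $\mathcal{O}(|S'|/N)$ by Eq.~(\ref{thm:1st_norm:equ:upp_term_2}), and the normalized third-derivative $\frac{1}{N}\nabla^3_\delta F$ which is $\mathcal{O}(1)$, every factor is controlled and the dimensional bookkeeping yields $\|\partial^2_\tau\hat\delta\|_2 = \mathcal{O}(1)\cdot\mathcal{O}(|S'|/N)\cdot\mathcal{O}(|S'|/N) + \mathcal{O}(|S'|/N)\cdot\mathcal{O}(|S'|/N) = \mathcal{O}(|S'|^2/N^2)$.

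The main obstacle I anticipate is the careful bookkeeping of the powers of $N$ versus $|S'|$ across the two applications of the product rule, since there are three or four separate contributions (one from differentiating the inverse Hessian, two or three from differentiating the gradient/Hessian sums) and each must be shown to be $\mathcal{O}(|S'|^2/N^2)$ or smaller. The key structural fact that makes everything work is that differentiating an unnormalized sum over $S$ keeps it of order $N$ (cancelled by the $1/N$ from the inverse Hessian), while differentiating a sum over $S'$ keeps it of order $|S'|$, and each appearance of $\partial_\tau\hat\delta$ injects an extra factor of $|S'|/N$. A secondary technical point is ensuring the third derivatives exist and are bounded: this follows from the Lipschitz-Hessian part of Assumption~\ref{ass:lip_grad_hess} together with the standing assumption (stated at the top of the appendix) that all functions are differentiable and differentiation/integration interchange, so $\nabla^3$ is well-defined with operator norm bounded by $L_3$ per summand on the neighborhood $V$. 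Assembling these bounds gives the claimed $\|\partial^2\hat\delta(\tau)/\partial\tau^2\|_2 \leq \mathcal{O}(|S'|^2/N^2)$.
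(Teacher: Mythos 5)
Your proposal is correct and follows essentially the same route as the paper: differentiating the first-order expression for $\partial_\tau\hat\delta$ via the product rule and the inverse-matrix derivative identity is algebraically equivalent to the paper's second implicit differentiation of the stationarity equation (\ref{equ:prob_diff}), and your term-by-term bounds (the $\mathcal{O}(1)$ normalized inverse Hessian, the Lipschitz-Hessian control of the third derivatives, and the $\mathcal{O}(|S'|/N)$ bound on $\partial_\tau\hat\delta$ from Lemma \ref{lem:1st_norm}) are exactly the ingredients the paper uses to reach $\mathcal{O}(|S'|^2/N^2)$.
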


\begin{proof} 
We first calculate $\frac{\partial^2\delta(\tau)}{\partial\tau^2}$ based on Eq. (\ref{equ:prob_diff}). Similar to the proof of Lemma \ref{lem:1st_norm}, we calculate the second-order derivatives of the both sides of Eq. (\ref{equ:prob_diff}) with respect to $\tau$ and have that
\begin{align*}
    &\sum_{i=1}^N B(\tau,z_i) + A(\tau)
    + \left(\sum_{i=1}^N \nabla^2_\delta h(\hat\delta(\tau),z_i) + \nabla^2_\delta f(\hat\delta(\tau))\right)\cdot\frac{\partial^2 \hat\delta(\tau)}{\partial\tau^2} \\
    &+ 2 \cdot \sum_{z\in S'} \nabla^2_\delta h(\hat\delta(\tau),z) \cdot \frac{\partial\hat\delta(\tau)}{\partial\tau}
    + \tau \sum_{z \in S'} B(\tau,z)
    + \tau \sum_{z\in S'} \nabla^2_\delta h(\hat\delta(\tau),z) \cdot\frac{\partial^2 \hat\delta(\tau)}{\partial\tau^2}  = 0,
\end{align*}
which means
\begin{align}
    \frac{\partial^2 \hat\delta(\tau)}{\partial\tau^2}
    =& -\left(\frac{1}{N}\sum_{i=1}^N \nabla^2_\delta F(\hat\delta(\tau),S) + \frac{\tau}{N} \nabla^2_\delta \sum_{z\in S'} h(\hat\delta(\tau),z)\right)^{-1} \nonumber \\
    &\cdot \left(\frac{1}{N}\sum_{i=1}^N B(\tau,z_i) + \frac{\tau}{N} \sum_{z\in S'} B(\tau,z) + \frac{1}{N} A(\tau) + \frac{2}{N} \sum_{z\in S'} \nabla^2_\delta h(\hat\delta(\tau),z) \cdot \frac{\partial\hat\delta(\tau)}{\partial\tau} \right),
    \label{thm:2nd_norm:equ:eq_1}
\end{align}
in which the invertibility of
$\left(\frac{1}{N}\sum_{i=1}^N \nabla^2_\delta F(\hat\delta(\tau),S) + \frac{\tau}{N} \sum_{z \in S'} \nabla^2_\delta h(\hat\delta(\tau),z)\right)$
is guaranteed by Eq. (\ref{thm:1st_norm:eq:mat_low_bd}), $A(\tau), B(\tau,z) \in \R^{K\times 1}$, and for $i = 1, \cdots, K$, we have the following,
\begin{gather}
    A(\tau)_i = {\frac{\partial\hat\delta(\tau)}{\partial\tau}}^T \cdot \nabla^2_\delta \left( \frac{\partial f(\hat\delta(\tau))}{\partial \delta_i} \right) \cdot \frac{\partial\hat\delta(\tau)}{\partial\tau}, \label{thm:2nd_norm:equ:A_elem} \\
    B(\tau,z)_i = {\frac{\partial\hat\delta(\tau)}{\partial\tau}}^T \cdot \nabla^2_\delta \left( \frac{\partial h(\hat\delta(\tau),z)}{\partial \delta_i} \right) \cdot \frac{\partial\hat\delta(\tau)}{\partial\tau}.
\end{gather}

We then upper bound the norm of
$\frac{\partial^2 \hat\delta(\tau)}{\partial\tau^2}$.
Based on Eq. (\ref{thm:2nd_norm:equ:eq_1}), we have that
\begin{align}
    &\left\| \frac{\partial^2 \hat\delta(\tau)}{\partial\tau^2} \right\|_2 \nonumber \\
    & \leq \left\| \left(\frac{1}{N}\sum_{i=1}^N \nabla^2_\delta F(\hat\gamma(\tau),S) + \frac{\tau}{N} \sum_{z\in S'} \nabla^2_\delta h(\hat\delta(\tau),z)\right)^{-1} \right\|_2 \nonumber \\
    & \ \ \ \ \cdot \frac{1}{N} \left(\sum_{i=1}^N \|B(\tau,z_i)\|_2 + \tau \sum_{z\in S'} \|B(\tau,z)\|_2 + \|A(\tau)\|_2
    + 2 \left\| \sum_{z\in S'} \nabla^2_\delta h(\hat\delta(\tau),z)\cdot \frac{\partial\hat\delta(\tau)}{\partial\tau} \right\|_2 \right) \label{thm:2nd_norm:equ:upp_bd_mid} \\
    & \leq \mathcal{O}\left( \frac{1}{N} \left(\sum_{i=1}^N \|B(\tau,z_i)\|_2 + \tau \sum_{z\in S'} \|B(\tau,z)\|_2 + \|A(\tau)\|_2
    + 2 \left\| \sum_{z\in S'} \nabla^2_\delta h(\hat\delta(\tau),z)\cdot \frac{\partial\hat\delta(\tau)}{\partial\tau} \right\|_2 \right) \right),
    \label{thm:2nd_norm:equ:upp_bd}
\end{align}
where Eq. (\ref{thm:2nd_norm:equ:upp_bd}) is obtained by inserting Eq. (\ref{thm:1st_norm:equ:upp_term_1}) (in the proof of Lemma \ref{lem:1st_norm}) into Eq. (\ref{thm:2nd_norm:equ:upp_bd_mid}).
Thus, the remaining task is to upper bound the norms of $A(\tau)$, $B(\tau,z)$ and $\nabla^2_\delta \sum_{z\in S'} h(\hat\delta(\tau),z) \cdot \frac{\partial\hat\delta(\tau)}{\partial\tau}$.

We first upper bound $\|A(\tau)\|_2$. Applying Lemma \ref{lem:1st_norm}, we have that
$\|\frac{\partial\hat\delta(\tau)}{\partial\tau}\|_2 \leq \mathcal{O}(|S'|/N)$.
Applying the Lipschitz Hessian condition in Assumption \ref{ass:lip_grad_hess}, we have that $\| \nabla^2_\delta ( \frac{\partial f(\hat\delta(\tau))}{\partial \delta_i} \|_2$ is also bounded by a real constant.
Therefore, we have that
\begin{gather}
    \|A(\tau)\|_2
    \leq \sum_{i=1}^K \left\| \nabla^2_\delta \left( \frac{\partial f(\hat\delta(\tau))}{\partial \delta_i} \right) \right\|_2 \cdot \left\| \frac{\partial\hat\delta(\tau)}{\partial\tau} \right\|_2^2
    \leq \sum_{i=1}^K \mathcal{O}(1) \cdot \mathcal{O}\left(\frac{|S'|^2}{N^2}\right)
    = \mathcal{O}\left(\frac{|S'|^2}{N^2}\right).
    \label{thm:2nd_norm:equ:upp_term_A}
\end{gather}

For $B(\tau,z)$, we similarly have that
\begin{gather}
    \|B(\tau,z)\|_2 \leq \mathcal{O}\left(\frac{|S'|^2}{N^2}\right).
    \label{thm:2nd_norm:equ:upp_term_B}
\end{gather}

To upper bound the norm of $\sum_{z\in S'} \nabla^2_\delta h(\hat\delta(\tau),z) \cdot \frac{\partial\hat\delta(\tau)}{\partial\tau}$, we apply Lemma \ref{lem:1st_norm}, Assumption \ref{ass:lip_grad_hess}, and have that
\begin{gather}
    \left\| \sum_{z\in S'} \nabla^2_\delta h(\hat\delta(\tau),z) \cdot \frac{\partial\hat\delta(\tau)}{\partial\tau} \right\|_2
    \leq \sum_{z\in S'} \left\| \nabla^2_\delta h(\hat\delta(\tau),z) \right\|_2  \left\| \frac{\partial\hat\delta(\tau)}{\partial\tau} \right\|_2
    \leq \mathcal{O}\left(\frac{|S'|^2}{N}\right).
    \label{thm:2nd_norm:equ:upp_term_3}
\end{gather}

Inserting eqs. (\ref{thm:2nd_norm:equ:upp_term_A}), (\ref{thm:2nd_norm:equ:upp_term_B}) and (\ref{thm:2nd_norm:equ:upp_term_3}), into Eq. (\ref{thm:2nd_norm:equ:upp_bd}), we eventually have that
\begin{align*}
    &\left\| \frac{\partial^2 \hat\delta(\tau)}{\partial\tau^2} \right\|_2 \\
    &\quad \leq \mathcal{O}\left(\frac{1}{N} \left(
        \sum_{i=1}^N \mathcal{O}\left(\frac{|S'|^2}{N^2}\right) + \tau \cdot \mathcal{O}\left(\frac{|S'|^3}{N^2}\right) + \mathcal{O}\left(\frac{|S'|^2}{N^2}\right) + 2 \cdot \mathcal{O}\left(\frac{|S'|^2}{N}\right)
    \right) \right)
    = \mathcal{O}\left(\frac{|S'|^2}{N^2}\right).
\end{align*}

The proof is completed.
\end{proof}

Finally, combining all the results, we prove Theorem \ref{thm:influence}.
\begin{proof}[Proof of Theorem \ref{thm:influence}]
The proof is completed by applying Lemmas \ref{lem:1st_norm} and \ref{lem:2nd_norm} into Eq. (\ref{equ:taylor}).
\end{proof}

\subsection{Proof of Theorem \ref{thm:knowledge_removal}}
\label{app:knowledge_removal_proof}

This section presents the proof of Theorem \ref{thm:knowledge_removal}.

\begin{proof}[Proof of Theorem \ref{thm:knowledge_removal}]
By expanding the KL-divergence $\mathrm{KL}(p_S(\cdot+\mathcal{I}(S')) \| p_{S-S'})$ into Taylor series around the local region of $\delta = \delta^{-S'}_S$, we have
\begin{align}
&\mathrm{KL}(p_S(\cdot+\mathcal{I}(S')) \| p_{S-S'}) \nonumber \\
&\ \ = \mathrm{KL}(p_S(\cdot-\delta^{-S'}_S\|p_{S-S'})
+ (\mathcal{I}(S')+\delta^{-S'}_S)^T \cdot \nabla_\delta^2 \mathrm{KL}(p_S(\cdot-\xi)\|p_{S-S'}) \cdot (\mathcal{I}(S')+\delta^{-S'}_S) \nonumber \\
&\ \ = \mathrm{KL}(p_S(\cdot-\delta^{-S'}_S\|p_{S-S'})
- (\mathcal{I}(S')+\delta^{-S'}_S)^T
\cdot \nabla_\delta^2 \left[ F(\xi,S) - \sum_{z\in S'} h(\xi,z) \right]
\cdot (\mathcal{I}(S')+\delta^{-S'}_S),
\label{eq:thm:knowledge_1}
\end{align}
where the first-order term equals $0$ according to the first-order optimal condition.
By applying Theorem \ref{thm:influence} and Assumption \ref{ass:lip_grad_hess}, we further have
\begin{align}
&\left\| (\mathcal{I}(S')+\delta^{-S'}_S)^T
\cdot \nabla_\delta^2 \left[ F(\xi,S) - \sum_{z\in S'} h(\xi,z) \right]
\cdot (\mathcal{I}(S')+\delta^{-S'}_S) \right\|_2 \nonumber \\
&\quad \leq
\left\| \nabla_\delta^2 \left[ F(\xi,S) - \sum_{z\in S'} h(\xi,z) \right] \right\|_2
\cdot \left\| \mathcal{I}(S')+\delta^{-S'}_S \right\|_2^2 \nonumber \\
&\quad \leq \mathcal{O}\left( (N-|S'|+1)L_2 \cdot \mathrm{diam}(V) \right) \cdot \mathcal{O}\left(\frac{|S'|^2}{N^2}\right)^2
\leq \mathcal{O}\left( \frac{|S'|^4}{N^3} \right).
\label{eq:thm:knowledge_2}
\end{align}

By combining eqs. (\ref{eq:thm:knowledge_1}) and (\ref{eq:thm:knowledge_2}), the proof is completed.
\end{proof}

\subsection{Proof of Theorem \ref{thm:generalization}}
\label{app:generalization_proof}

This section presents the proof of Theorem \ref{thm:generalization}.


We derive generalization bound for the proposed algorithm under the PAC-Bayesian framework \citep{mcallester1998some,mcallester1999pac,mcallester2003pac}. The framework can provide guarantees for randomized predictors ({\it e.g.}, the Bayesian predictors).

Specifically, let $Q$ a distribution on the parameter space $\Theta$, $P$ denotes the prior distribution over the parameter space $\Theta$. Then, the expected risks $\mathcal{R}(Q)$ is bounded in terms of the empirical risk $\mathcal{\hat R}(Q,S)$ and KL-divergence $\mathrm{KL}(Q\|P)$ by the following result from PAC-Bayes.

\begin{lemma}[cf. \citet{mcallester2003pac}, Theorem 1]
\label{lem:pac_bayes_gen}
For any real $\delta \in (0,1)$, with probability at least $1-\delta$, we have the following inequality for all distributions $Q$:
\begin{gather}
    \mathcal{R}(Q) \leq \mathcal{\hat R}(Q,S) + \sqrt{\frac{\mathrm{KL}(Q\|P)+\log \frac{1}{\delta} + \log N + 2}{2N-1}}. \label{lem:equ:pac_bayes_gen}
\end{gather}
\end{lemma}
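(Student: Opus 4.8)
Lemma~\ref{lem:pac_bayes_gen} is precisely the PAC-Bayesian bound of \citet{mcallester2003pac}, so the quickest route is simply to cite it; for completeness I would reproduce the now-standard argument, which I plan to organize into four steps: a pointwise (per-hypothesis) concentration estimate, the Donsker--Varadhan change-of-measure inequality, Markov's inequality over the data, and a final relaxation of the binary relative entropy to an explicit square-root form.

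First I would fix a single parameter $\theta$ and note that $\mathcal{\hat R}(\theta,S)=\frac1N\sum_{i=1}^N\ell(\theta,z_i)$ is an average of $N$ i.i.d.\ random variables in $[0,1]$ with mean $\mathcal{R}(\theta)$. Writing $\mathrm{kl}(a\|b)$ for the relative entropy between the Bernoulli laws with parameters $a$ and $b$, a reduction to the Bernoulli case (by an extreme-point/convexity argument) followed by a direct binomial moment computation yields a sub-exponential bound of the form $\E_S\bigl[e^{(2N-1)\,\mathrm{kl}(\mathcal{\hat R}(\theta,S)\|\mathcal{R}(\theta))}\bigr]\le e^{2}N$; this is the origin of the constants $2N-1$ and $\log N+2$ appearing in the statement. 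Then, for any distribution $Q$ on $\Theta$, the Donsker--Varadhan variational formula gives
\[ (2N-1)\,\E_{\theta\sim Q}\,\mathrm{kl}(\mathcal{\hat R}(\theta,S)\|\mathcal{R}(\theta)) \;\le\; \mathrm{KL}(Q\|P)+\log\E_{\theta\sim P}\,e^{(2N-1)\,\mathrm{kl}(\mathcal{\hat R}(\theta,S)\|\mathcal{R}(\theta))}, \]
and Jensen's inequality (joint convexity of $\mathrm{kl}$) lower-bounds the left-hand side by $(2N-1)\,\mathrm{kl}(\mathcal{\hat R}(Q,S)\|\mathcal{R}(Q))$.

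Next I would control $Z(S):=\E_{\theta\sim P}\,e^{(2N-1)\,\mathrm{kl}(\mathcal{\hat R}(\theta,S)\|\mathcal{R}(\theta))}$ uniformly in $Q$. Because the prior $P$ is chosen before seeing $S$, Fubini together with the pointwise moment bound gives $\E_S[Z(S)]\le e^{2}N$, so Markov's inequality yields $\log Z(S)\le\log N+2+\log\tfrac1\delta$ with probability at least $1-\delta$; combined with the previous display this gives, simultaneously for all $Q$,
\[ \mathrm{kl}(\mathcal{\hat R}(Q,S)\|\mathcal{R}(Q)) \;\le\; \frac{\mathrm{KL}(Q\|P)+\log\tfrac1\delta+\log N+2}{2N-1}. \]
Finally, since $\mathrm{kl}(q\|p)\le\varepsilon$ implies $p\le q+\sqrt{\varepsilon}$ (a weak consequence of Pinsker's inequality), the previous display yields $\mathcal{R}(Q)\le\mathcal{\hat R}(Q,S)+\sqrt{(\mathrm{KL}(Q\|P)+\log\tfrac1\delta+\log N+2)/(2N-1)}$ for all $Q$ with probability at least $1-\delta$, which is the claim. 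The genuinely delicate part is the pointwise moment estimate in the second step: one must first pass from general $[0,1]$-valued losses down to the Bernoulli case and then extract the correct sub-exponential moment of a binomial, which is exactly where the non-obvious constants $2N-1$ and $\log N+2$ come from; the change-of-measure, Markov, and Pinsker steps are then the routine PAC-Bayes machinery.
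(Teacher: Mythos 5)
The paper does not prove this lemma at all: it is imported verbatim from \citet{mcallester2003pac} (Theorem~1), so your first instinct --- cite it and move on --- is exactly what the paper does, and that alone would suffice here. Your reconstruction of the standard argument is structurally right (pointwise moment bound, Donsker--Varadhan, Markov over the sample, Jensen, then a Pinsker-type relaxation), but the key quantitative step you state is false, and it is precisely the step you flag as ``delicate.''

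The claim $\E_S\bigl[e^{(2N-1)\,\mathrm{kl}(\mathcal{\hat R}(\theta,S)\|\mathcal{R}(\theta))}\bigr]\le e^{2}N$ does not hold: the binary relative entropy only admits an exponential moment at rate up to about $N$, not $2N-1$. Concretely, take a Bernoulli loss with $\mathcal{R}(\theta)=1/2$; the event $\{\mathcal{\hat R}(\theta,S)=0\}$ has probability $2^{-N}$ and on it $\mathrm{kl}(0\|1/2)=\log 2$, so this single event already contributes $2^{-N}e^{(2N-1)\log 2}=2^{N-1}$ to the expectation, which is exponentially large in $N$. The rate-$N$ version is the correct one ($\E_S[e^{N\,\mathrm{kl}}]\le 2\sqrt{N}$, due to Maurer/Langford--Seeger), and it would still let you finish --- it yields $\mathcal{R}(Q)\le\mathcal{\hat R}(Q,S)+\sqrt{(\mathrm{KL}(Q\|P)+\log(2\sqrt{N}/\delta))/(2N)}$, which is tighter than and hence implies the stated bound --- but it is not where McAllester's constants $2N-1$ and $\log N+2$ come from. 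Those arise by running your exact pipeline on the \emph{squared deviation} $\Delta(\theta)^2=(\mathcal{\hat R}(\theta,S)-\mathcal{R}(\theta))^2$ instead of $\mathrm{kl}$: Hoeffding gives $\Pr[\Delta\ge\varepsilon]\le 2e^{-2N\varepsilon^2}$, integration of the tail gives $\E_S[e^{(2N-1)\Delta^2}]\le 1+2(2N-1)\le e^{2}N$, Donsker--Varadhan plus Jensen bounds $(\mathcal{R}(Q)-\mathcal{\hat R}(Q,S))^2\le\E_{\theta\sim Q}\Delta^2$, and the square root appears directly without any Pinsker step at the end. So either swap $\mathrm{kl}$ for $\Delta^2$ throughout, or keep $\mathrm{kl}$ but drop the exponent to $N$ and accept that you are proving a strictly stronger inequality that implies the lemma.
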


Based on Lemma \ref{lem:pac_bayes_gen}, we prove the generalization bounds in Theorem \ref{thm:generalization}.

\begin{proof}[proof of Theorem \ref{thm:generalization}]
Let the prior distribution $P$ be a Laplace distribution $\mathrm{lap}(0,1)$, $p^{-S'}_S = p_S(\cdot+\mathcal{I}(S'))$ be the distribution processed by the MCMC unlearning algorithm. 
Then, one can calculate the KL-divergence $\mathrm{KL}(p^{-S'}_S\|P)$ as follows (where we assume that $\Theta=\R^d$),
\begin{align}
    \mathrm{KL}(p^{-S'}_S\|P)
    &=\int_\Theta \log\left(\frac{p^{-S'}_S(\theta)}{p(\theta)}\right) p^{-S'}_S(\theta) \mathrm{d}\theta \nonumber \\
    &=\int_\Theta \log\left(\frac{p_S(\theta)}{p(\theta-\mathcal{I}(S'))}\right) p_S(\theta) \mathrm{d}\theta \nonumber \\
    &= \int_\Theta \left[ \log p_S(\theta) + \left\|\theta-\mathcal{I}(S') \right\|_1 + \log 2 \right] p_S(\theta) \mathrm{d}\theta \nonumber \\
    &\leq \E_{\theta\sim p_S} \log p_S(\theta) + \E_{\theta\sim p_S} \|\theta\|_1 + \|\mathcal{I}(S')\|_1 + \log 2.
    \label{thm:equ:mcmc_kl}
\end{align}
Inserting Eq. (\ref{thm:equ:mcmc_kl}) into Eq. (\ref{lem:equ:pac_bayes_gen}) in Lemma \ref{lem:pac_bayes_gen}, we then obtain the PAC-Bayesian generalization bound.

Eventually, by applying Lemma \ref{lem:1st_norm}, we have that $\|\mathcal{I}(S')\|_1 \leq \mathcal{O}(|S'|/N)$.

The proof is completed.
\end{proof}

\section{Efficient Calculation of MCMC Influence Function}
\label{app:inf_implement}

A major computing burden in the MCMC unlearning algorithm is calculating the product of $H^{-1}v$,
where $H$ is the Hessian matrix of some vector-valued function $f(x)$ and $v$ is a constant vector.
When the function $f(x)$ is of high dimension, the calculation would have a considerably high computational cost.
We follow \citet{agarwal2017second} and \citet{koh2017understanding} to apply a divide-and-conquer strategy to address the issue.
This strategy relies on calculating the Hessian-vector product $Hv$.

\textbf{Hessian-vector product (HVP).}
We first discuss how to efficiently calculate $Hv$.
The calculation of $Hv$ can be decomposed into two steps: (1) calculate $\frac{\partial f(x)}{\partial x}$ and then (2) calculate $\frac{\partial}{\partial x}\left( \frac{\partial f(x)}{\partial x} \cdot v \right)$.
It is worth noting that $\frac{\partial f(x)}{\partial x} \in \mathbb R^{1 \times d}$ and $v \in \mathbb R^{d \times 1}$, where $d > 0$ is the dimension of data. Thus, $\left(\frac{\partial f(x)}{\partial x} \cdot v\right)$ is a scalar value. Calculating its gradient $\frac{\partial}{\partial x}\left( \frac{\partial f(x)}{\partial x} \cdot v \right)$ has a very low computational cost on platform PyTorch \citep{paszke2017automatic} or TensorFlow \citep{tensorflow2015-whitepaper}.

\textbf{Calculating $H^{-1}v$.}
When the norm $\|H\| \leq 1$, the matrix $H^{-1}$ can be expanded by the Taylor's series as $ H^{-1}=\sum_{i=0}^\infty (I-H)^i$. Define that $H_j^{-1} = \sum_{i=0}^j (I-H)^i$. Then, we have the following recursive equation,
\begin{equation*}
H_j^{-1}v = v + (I-H)H_{j-1}^{-1}v.
\end{equation*}
\citet{agarwal2017second} prove that when $j \rightarrow \infty$, we have $\E[H_j^{-1}] \rightarrow H^{-1}$.
Therefore, we employ $H^{-1}_j v$ to approximate $H^{-1}v$.

Moreover, to secure the condition $\|H\| \leq 1$ stands, we scale $H$ to $cH$ by a scale $c \in \R^{+}$, such that $\|cH\| \leq 1$. Then, we approximate $(cH)^{-1}$. Eventually, we have that $H^{-1}=c(cH)^{-1}$. We can plug it to the applicable equations above.

\section{Experiments Details for Gaussian Mixture Models}
\label{app:gmm_exp_details}

This section provides the additional experiments details for the Gaussian mixture models.

\subsection{Gaussian Mixture Models}
\label{app:exp_gmm_arch}
We employ Gaussian mixture models (GMMs) to infer the cluster centers on the synthetic dataset.
A GMM assumes that each example is drawn from $K$ Gaussian distributions centered at $\mu_1, \cdots, \mu_K$, respectively.
Then, the hierarchical structure of GMM is as follows: (1) draw a clustering center from the uniform distribution over $\{\mu_{c_1}, \ldots, \mu_{c_K}\}$; and (2) sample $z_i$ from a Gaussian distribution centering at $\mu_{c_i}$.
That is,
\begin{align*}
    \mu_k &\sim \mathcal{N}(0, \sigma^2 I), \\
    c_i &\sim \mathrm{categorical}\left(\frac{1}{K}, \cdots, \frac{1}{K}\right), \\
    Z_i &\sim \mathcal{N}(\mu_{c_i}, I),
\end{align*}
where $1\leq k \leq K$, $1 \leq i \leq n$, $\mu_k \in \R^d$, $c_i \in \{1,\cdots,K\}$, $Z_i \in \R^d$, and the hyperparameter $\sigma \in \R$ is the prior standard deviation. 
In our experiments, the factor $K$ is set as $4$, and the prior standard deviation $\sigma$ is set as $1$.


\subsection{Experiments Details}

The experiments for GMM have two main phases:

\textbf{Training phase.}
Every GMM is trained for $4,000$ iterations. The batch size is set as $64$.
For both SGLD and SGHMC, the learning rate schedule is set as $4 \cdot t^{-0.5005} / N$, where $t$ is the training iteration step and $N$ is the number of the training set size.
Besides, the momentum factor $\alpha$ of SGHMC is set as $0.9$.


\textbf{Unlearning phase.}
We remove a batch of $4$ datums each time.
When calculating the inversed-Hessian-vector product $H^{-1}v$ in the influence functions (see Section \ref{app:inf_implement}), the recursive calculation number $j$ is set as $32$, and the scaling factor $c$ is set as $1/N^\prime$, where $N^\prime$ is the number of the current remained training examples. Notice that $N^\prime$ will gradually decrease as the forgetting process continues.
Moreover, we employ the Monte Carlo method to calculate the expectations in the MCMC influence function.
Specifically, we repeatedly draw sample $\theta$ for $5$ times, calculate the matrix or vector in the MCMC influence function, and average the results to approach the expectations.

\section{Experiments Details for Bayesian Neural Networks}
\label{app:bnn_exp_details}

This section provides the additional experiments details for Bayesian neural networks.

\subsection{Datasets}
We employ two image datasets, Fashion-MNIST \citep{xiao2017/online} and CIFAR-10 \citep{krizhevsky2009learning}, in our experiments.
Fashion-MNIST consists of grey-scale images from $10$ classes, where each class contains $6,000$ training examples and $1,000$ test examples.
Besides, CIFAR-10 consists of color images from $10$ classes, where each class contains $5,000$ training examples and $1,000$ test examples.
No data augmentation is used in the experiments.

Furthermore, it is worth noting that the models used for Fashion-MNIST and CIFAR-10 are first pretrained on MNIST \citep{lecun1998gradient} and CIFAR-100 \citep{krizhevsky2009learning}, respectively.
Please see Appendices \ref{app:bnn_cal_kl} and \ref{app:bnn_run_exp} for more details.

\subsection{Bayesian Neural Networks}
This section introduces the model architectures used in our experiments and the implementation details for training BNNs.

\subsubsection{Model Architectures}
For Fashion-MNIST, we use a LeNet \citep{lecun1998gradient} architecture that consists of two convolutional layers followed by three fully-connected layers.
The convolutional layers use $5\times 5$ convolutions, each followed by a $2\times 2$ max-pooling layer and a ReLU activation function.
The channel numbers of the two convolutional layers are $6$ and $16$, respectively.
Besides, each fully-connected layer is also followed by a ReLU activation function, while the hidden layer channels are $120$ and $84$, respectively.

for CIFAR-10, we follow \citet{abadi2016deep} to use a network architecture that consists of two convolutional layers followed by two fully-connected layers.
Each convolutional layer uses $5\times 5$ convolution, with a channel number of $64$, followed by a $2\times 2$ max-pooling layer and a ReLU activation function.
Besides, each fully-connected layer is also followed by a ReLU activation function, while the hidden layer channels are $384$.

\subsubsection{Model Training}
In all the experiments, we use an isotropic Gaussian distribution $\mathcal{N}(0,\sigma_0^2 I)$ as the prior of the BNNs, in which the hyperparameter $\sigma_0$ is set as $0.1$.

Traditional SG-MCMC methods would inject random noise to BNNs during training.
However, it is found that the noise injection in the early training iterations hurts the convergence of BNNs \citep{zhang2020cyclical}.
To alleviate such a problem, we follow \citet{zhang2020cyclical} to first avoid the noise injection in the early training iterations and then resume SG-MCMC as usual in the rest of the training.

\subsection{Estimating the $\varepsilon$-Knowledge Removal Guarantee}
\label{app:bnn_cal_kl}
We use the knowledge removal estimator $\hat\varepsilon_M$ defined in Definition \ref{def:knowledge_rem_estimator} to estimate the $\varepsilon$ value in the $\varepsilon$-knowledge removal guarantee.
According to the definition of $\hat\varepsilon_M$, calculating it requires first fixing a series of random seeds, and then calculating the KL-divergence between two MCMC distributions.

For the random seeds, we employ the pretraining technique \citep{golatkar2020eternal} to fix them.
Specifically, for the experiments on Fashion-MNIST, all the models used are first pre-trained on MNIST;
for the experiments on CIFAR-10, all the models used are first pre-trained on CIFAR-100.

Besides, for the KL-divergence calculation, we follow \citet{wang2009divergence} to estimate the KL-divergence in a $k$-NN manner, in which $100$ samples are drawn from each of the processed and retrained distributions for the estimation.

{\color{black}
\subsection{Membership Inference Attack}
\label{app:bnn_mia}

Membership inference attack (MIA) \citep{shokri2017membership, yeom2018privacy} is a privacy attack that aims to infer whether a given example is in the training set based on the output of the model.
In our experiments, we adopt a white-box threshold-based MIA \citep{yeom2018privacy} to assess the unlearning performance.
The calculation consists of two steps:
(1) learn an optimal MIA threshold on the remained training set $S_r$ and the test set $S_{\mathrm{test}}$,
and (2) calculate MIA accuracy with the learned threshold on the removed training set $S_f$.

\textbf{Learn the optimal threshold.}
Suppose the ``to-be-inferred'' example $(x,y)$ comes from $S_r$ and $S_{\mathrm{test}}$ with equal probabilities.
Then, the attack accuracy of MIA with a threshold $\rho \in [0,1]$ on the model $f_\theta$ is calculated as follows,
\begin{gather*}
\mathrm{Acc}(\rho, S_r, S_{\mathrm{test}}) = 0.5 \times
\left( \frac{\sum_{(x,y) \in S_r} \bm{1}[f_{\theta}(x)_y \geq \rho]}{\vert S_r \vert} + \frac{\sum_{(x,y) \in S_{\text{test}}} \bm{1}[f_{\theta}(x)_y < \rho]}{\vert S_{\text{test}} \vert} \right),
\end{gather*}
where $f_\theta(x)_y$ is the output confidence for label $y$ and $\bm{1}[\cdot]$ is the indicator function.
Then, the optimal threshold $\rho_{\text{optim}}$ is obtained via calculating the maximizer of the above attack accuracy, {\it i.e.},
\begin{gather*}
\rho_{\mathrm{optim}} = \argmax_{\rho} \mathrm{Acc}(\rho, S_r, S_{\mathrm{test}}).
\end{gather*}

\textbf{Calculate MIA accuracy on $S_f$.}
With the optimal threshold $\rho_{\mathrm{optim}}$ that learned on $S_r$ and $S_{\mathrm{test}}$, the MIA accuracy on the removed set $S_f$ is calculated as follows,
\begin{gather*}
\mathrm{Acc}(\rho_{\mathrm{optim}}, S_f) =
\frac{\sum_{(x,y) \in S_f} \bm{1}[f_{\theta}(x)_y \geq \rho_{\mathrm{optim}}]}{\vert S_f \vert}.
\end{gather*}
Intuitively, after removing the knowledge learned from the removed subset $S_f$, the MIA accuracy $\mathrm{Acc}(\rho_{\mathrm{optim}}, S_f)$ would decline.
As a result, a low MIA accuracy on the removed subset $S_f$ would imply a strong knowledge removal performance.
}

\subsection{Experiments Details}
\label{app:bnn_run_exp}

This section provides the omitted experiment details, including the settings of the hyperparameters and the procedures of the experiments.



\subsubsection{Experiments on Fashion-MNIST}

\textbf{Pretraining phase.}
In every experiment, we pretrain a non-Bayesian model on MNIST via SGD for $2,000$ iterations, with a batch size of $128$, a learning rate of $0.1/N$, where $N$ is the training set size, and a momentum factor of $0.9$.


\textbf{Training phase.}
Every BNN is trained for $10,000$ iterations. The batch size is set as $128$.
For both SGLD and SGHMC, we first train the model without noise injection in the first $1,000$ iterations.
In this stage, the learning rate is fixed to $0.01/N$.
Then, we resume the traditional SGLD and SGHMC in the rest of the training.
In this stage, the learning rate schedule is set as $0.01 \cdot t^{-0.5005} / N$, where $t$ is the training iteration step.
Besides, the momentum factor $\alpha$ of SGHMC is set as $0.9$.

\textbf{Unlearning phase.}
We remove a batch of $64$ datums each time.

For the MCMC unlearning algorithm, when calculating the inversed-Hessian-vector product $H^{-1}v$ in the MCMC influence function (see Appendix \ref{app:inf_implement}), the recursive calculation number $j$ is set as $64$, the scaling factor $c$ is set as $0.05/N^\prime$, in which $N'$ is the number of the current remained training examples.
Notice that $N^\prime$ will gradually decrease as the forgetting process continues.
Besides, we employ the Monte Carlo method to calculate the expectations in the MCMC influence function.
Specifically, each time we draw a sample $\theta$ via the MCMC sampler and calculate the matrix and vector in the MCMC influence function based on $\theta$ to estimate the desired expectation.

Besides, for the importance sampling method, we process each data deletion request by performing MCMC sampling for $1,000$ times on the remaining data.

\subsubsection{Experiments on CIFAR-10}

\textbf{Pretraining phase.}
In every experiment, we pretrain a non-Bayesian model on CIFAR-100 via SGD for $10,000$ iterations, with a batch size of $128$, a learning rate of $0.1/N$, where $N$ is the training set size, and a momentum factor of $0.9$.

\textbf{Training phase.}
Every BNN is trained for $20,000$ iterations. The batch size is set as $128$.
For both SGLD and SGHMC, we first train the model without noise injection in the first $5,000$ iterations.
In this stage, the learning rate is fixed to $0.01/N$.
Then, we resume the traditional SGLD and SGHMC in the rest of the training.
In this stage, the learning rate schedule is set as $0.01 \cdot t^{-0.5005} / N$, where $t$ is the training iteration step.
Besides, the momentum factor $\alpha$ of SGHMC is set as $0.9$.

\textbf{Unlearning phase.}
We remove a batch of $64$ datums each time.

For the MCMC unlearning algorithm, When calculating the inversed-Hessian-vector product $H^{-1}v$ in the MCMC influence functions (see Appendix \ref{app:inf_implement}), the recursive calculation number $j$ is set as $64$, the scaling factors $c$ is set as $0.08/N^\prime$, where $N'$ is the number of the current remained training examples.
Notice that $N^\prime$ will gradually decrease as the forgetting process continues.
Besides, we employ the Monte Carlo method to calculate the expectations in the MCMC influence function.
Specifically, each time we draw a sample $\theta$ via the MCMC sampler and calculate the matrix and vector in the MCMC influence function based on $\theta$ to estimate the desired expectation.

Besides, for the importance sampling method, we process each data deletion request by performing MCMC sampling for $1,000$ times on the remaining data.

\color{black}
\subsection{Experiment Results on Fashion-MNIST}

This section presents all the experiment results of BNNs on the Fashion-MNIST dataset.
The evaluations of unlearning performance are shown as Table \ref{tab:exp_fmnist}, while the time costs for unlearning is shown in Fig. \ref{fig:bnn_times_fmnist}
The results further justify the effectiveness of the proposed algorithm.

\begin{figure}[h]
\centering
\includegraphics[width=0.5\linewidth]{./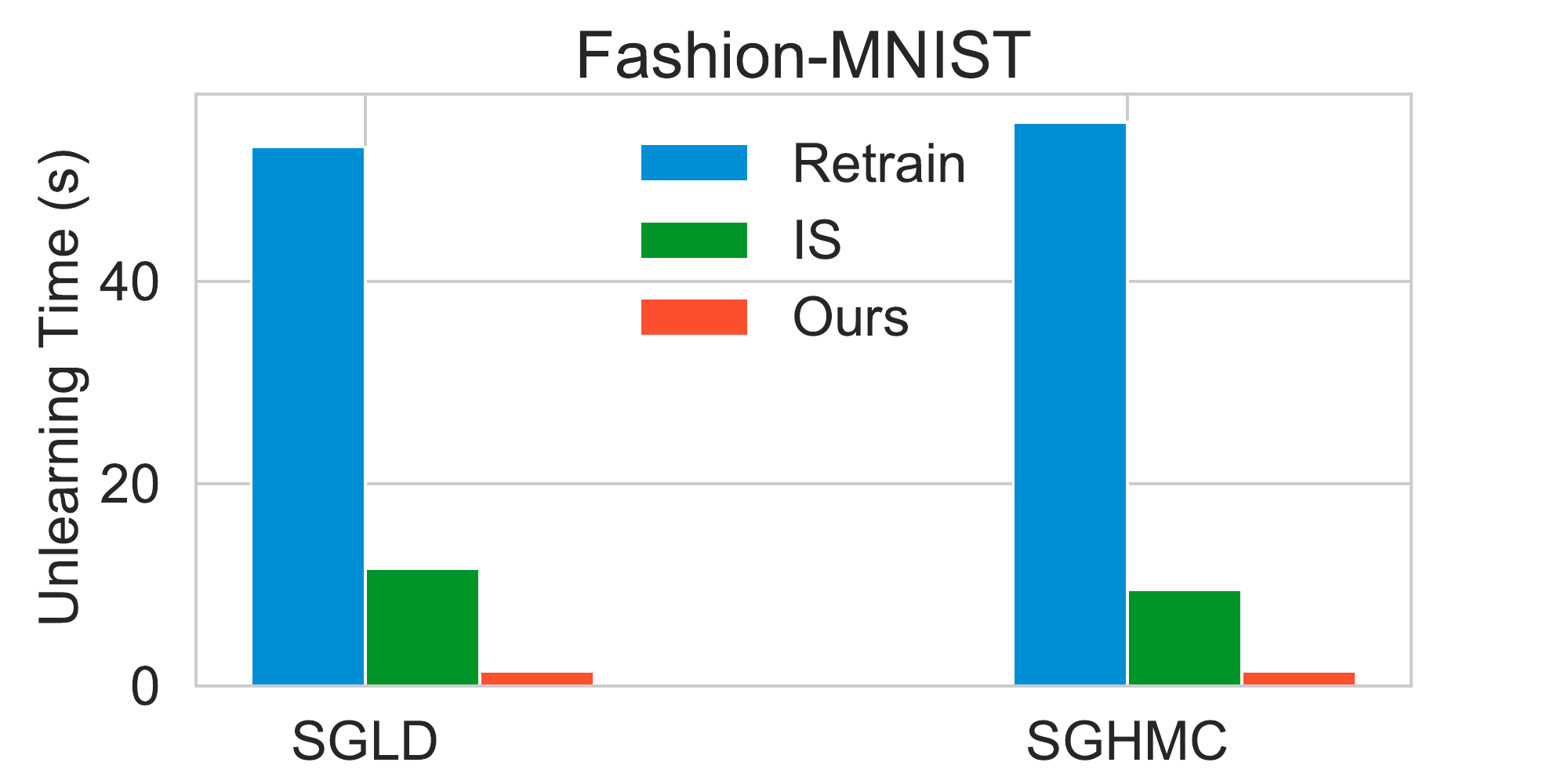}
    \caption{Time costs for processing single data deletion requests on BNNs.
    }
    \label{fig:bnn_times_fmnist}
\end{figure}

\begin{table}[h]
\centering
\caption{
Experiment results on Fashion-MNIST.
Different metrics are used to evaluate the machine unlearning performance, including the classification errors on the remaining set $S_r$, removed set $S_f$, and test set $S_\mathrm{test}$, the knowledge removal estimators $\hat\varepsilon_M$, {\color{black} and the membership inference attack (MIA) accuracy on $S_f$.}
Every experiment is repeated 5 times.
}
\label{tab:exp_fmnist}
\scriptsize
\begin{tabular}{c c l c c c c c}
\toprule
& Remove & Method & Err. on $S_r$ (\%) & Err. on $S_f$ (\%) & Err. on $S_{\mathrm{test}}$ (\%) & $\hat\varepsilon_M$ ($\times 10^3$) & MIA Acc. (\%) \\
\midrule
\midrule

\multirow{10}{1.2cm}{\centering Fashion MNIST \\ + \\ SGLD}
& \multirow{5}{0.8cm}{\centering 4,000 Examples}
& Retrain &
12.81$\pm$0.49 & 37.62$\pm$0.76 & 15.63$\pm$0.59 & 0.00$\pm$0.00 & 16.12$\pm$5.79 \\
\cmidrule(lr){4-4} \cmidrule(lr){5-5} \cmidrule(lr){6-6} \cmidrule(lr){7-7} \cmidrule(lr){8-8}
& & Origin &
13.49$\pm$0.81 & 17.39$\pm$0.41 & 14.73$\pm$0.80 & 351.00$\pm$4.98 & 10.86$\pm$8.02 \\
& & IS &
\textbf{12.67$\pm$0.70} & 34.93$\pm$2.16 & 15.30$\pm$0.77 & 365.97$\pm$3.04 & 13.97$\pm$11.37 \\
& & Ours &
12.93$\pm$0.72 & \textbf{35.84$\pm$1.85} & \textbf{15.57$\pm$0.77} &  \textbf{350.30$\pm$4.82} & \textbf{10.36$\pm$2.31} \\

\cmidrule(l){2-8}

& \multirow{5}{0.8cm}{\centering 6,000 Examples}
& Retrain &
11.21$\pm$0.57 & 100.00$\pm$0.00 & 21.05$\pm$0.49 & 0.00$\pm$0.00 & 0.00$\pm$0.00 \\
\cmidrule(lr){4-4} \cmidrule(lr){5-5} \cmidrule(lr){6-6} \cmidrule(lr){7-7} \cmidrule(lr){8-8}
& & Origin &
13.38$\pm$0.85 & 17.08$\pm$0.51 & 14.73$\pm$0.80 & 364.63$\pm$2.45 & 7.20$\pm$5.93 \\
& & IS &
\textbf{11.10$\pm$0.66} & 87.30$\pm$5.82 & 19.90$\pm$0.87 & 378.11$\pm$1.45 & 4.52$\pm$2.31 \\
& & Ours &
11.39$\pm$0.68 & \textbf{98.13$\pm$1.61} & \textbf{20.59$\pm$0.99} &  \textbf{362.55$\pm$2.47} & \textbf{1.25$\pm$0.67} \\

\midrule

\multirow{10}{1.2cm}{\centering Fashion MNIST \\ + \\ SGHMC}
& \multirow{5}{0.8cm}{\centering 4,000 Examples}
& Retrain &
14.57$\pm$1.11 & 38.90$\pm$3.42 & 17.34$\pm$1.11 & 0.00$\pm$0.00 & 5.58$\pm$5.81 \\
\cmidrule(lr){4-4} \cmidrule(lr){5-5} \cmidrule(lr){6-6} \cmidrule(lr){7-7} \cmidrule(lr){8-8}
& & Origin &
15.21$\pm$1.30 & 17.58$\pm$1.68 & 16.40$\pm$1.17 & 362.60$\pm$8.24 & 12.03$\pm$12.61 \\
& & IS &
14.27$\pm$1.22 & 36.54$\pm$1.73 & 16.97$\pm$1.16 & 375.62$\pm$5.10 & 13.76$\pm$13.12 \\
& & Ours &
\textbf{14.56$\pm$1.24} & \textbf{37.48$\pm$2.02} & \textbf{17.29$\pm$1.08} &  \textbf{361.72$\pm$7.87} & \textbf{11.28$\pm$13.25} \\

\cmidrule(l){2-8}

& \multirow{5}{0.8cm}{\centering 6,000 Examples}
& Retrain &
12.99$\pm$0.94 & 100.00$\pm$0.00 & 22.61$\pm$0.69 & 0.00$\pm$0.00 & 0.00$\pm$0.00 \\
\cmidrule(lr){4-4} \cmidrule(lr){5-5} \cmidrule(lr){6-6} \cmidrule(lr){7-7} \cmidrule(lr){8-8}
& & Origin &
15.16$\pm$1.31 & 17.27$\pm$1.80 & 16.40$\pm$1.17 & 366.98$\pm$6.24 & 8.32$\pm$12.07 \\
& & IS &
12.52$\pm$1.04 & 87.48$\pm$7.22 & 20.94$\pm$1.44 & 382.34$\pm$3.48 & 4.20$\pm$1.84 \\
& & Ours &
\textbf{12.93$\pm$1.16} & \textbf{98.23$\pm$1.04} & \textbf{22.39$\pm$0.85} & \textbf{364.99$\pm$6.00} & \textbf{1.59$\pm$0.51} \\

\bottomrule
\end{tabular}
\end{table}

\begin{table}[h]
\centering
\caption{
\color{black} The results of prediction differences on CIFAR-10 and Fashion-MNIST. The prediction differences between the retrained model and the processed model are calculated on the remaining set $S_r$, removed set $S_f$, and test set $S_{\mathrm{test}}$, respectively.
}
\label{tab:exp_pred_diff_c10}
\scriptsize
\begin{tabular}{c c l c c c}
\toprule
& Remove & Method & Pred. diff. on $S_r$ & Pred. diff. on $S_f$ & Pred. diff. on $S_{\mathrm{test}}$ \\
\midrule
\midrule

\multirow{7}{1.2cm}{\centering CIFAR-10 \\ + \\ SGLD}
& \multirow{3}{0.8cm}{\centering 3,000 Examples}
& Origin &
0.18$\pm$0.00 & 0.50$\pm$0.02 & 0.21$\pm$0.01 \\
& & IS &
0.18$\pm$0.00 & 0.37$\pm$0.01 & 0.20$\pm$0.01 \\
& & Ours &
\textbf{0.17$\pm$0.00} & \textbf{0.35$\pm$0.02} & \textbf{0.19$\pm$0.01} \\

\cmidrule(l){2-6}

& \multirow{3}{1cm}{\centering 5,000 Examples}
& Origin &
0.23$\pm$0.00 & 1.42$\pm$0.02 & 0.37$\pm$0.01 \\
& & IS &
0.21$\pm$0.00 & 0.92$\pm$0.03 & 0.29$\pm$0.01 \\
& & Ours &
\textbf{0.19$\pm$0.00} & \textbf{0.67$\pm$0.05} & \textbf{0.25$\pm$0.01} \\

\midrule

\multirow{7}{1.2cm}{\centering CIFAR-10 \\ + \\ SGHMC}
& \multirow{3}{0.8cm}{\centering 3,000 Examples}
& Origin &
0.19$\pm$0.00 & 0.50$\pm$0.01 & 0.22$\pm$0.00 \\
& & IS &
0.19$\pm$0.00 & 0.38$\pm$0.01 & 0.21$\pm$0.00 \\
& & Ours &
\textbf{0.18$\pm$0.00} & \textbf{0.35$\pm$0.01} & \textbf{0.20$\pm$0.01} \\

\cmidrule(l){2-6}

& \multirow{3}{1cm}{\centering 5,000 Examples}
& Origin &
0.23$\pm$0.00 & 1.44$\pm$0.02 & 0.37$\pm$0.00 \\
& & IS &
0.22$\pm$0.00 & 0.94$\pm$0.02 & 0.31$\pm$0.00 \\
& & Ours &
\textbf{0.20$\pm$0.00} & \textbf{0.73$\pm$0.05} & \textbf{0.27$\pm$0.00} \\

\midrule
\midrule

\multirow{7}{1.2cm}{\centering Fashion MNIST \\ + \\ SGLD}
& \multirow{3}{0.8cm}{\centering 4,000 Examples}
& Origin &
0.13$\pm$0.02 & 0.37$\pm$0.02 & 0.15$\pm$0.02 \\
& & IS &
\textbf{0.11$\pm$0.02} & 0.22$\pm$0.01 & \textbf{0.12$\pm$0.02} \\
& & Ours &
\textbf{0.11$\pm$0.02} & \textbf{0.21$\pm$0.02} & \textbf{0.12$\pm$0.02} \\

\cmidrule(l){2-6}

& \multirow{3}{1cm}{\centering 6,000 Examples}
& Origin &
0.17$\pm$0.01 & 1.45$\pm$0.02 & 0.30$\pm$0.01 \\
& & IS &
\textbf{0.13$\pm$0.01} & 0.56$\pm$0.05 & 0.18$\pm$0.01 \\
& & Ours &
\textbf{0.13$\pm$0.01} & \textbf{0.33$\pm$0.07} & \textbf{0.15$\pm$0.01} \\

\midrule

\multirow{7}{1.5cm}{\centering Fashion MNIST \\ + \\ SGHMC}
& \multirow{3}{0.8cm}{\centering 4,000 Examples}
& Origin &
0.15$\pm$0.03 & 0.43$\pm$0.03 & 0.18$\pm$0.03 \\
& & IS &
\textbf{0.14$\pm$0.03} & \textbf{0.26$\pm$0.05} & \textbf{0.15$\pm$0.03} \\
& & Ours &
\textbf{0.14$\pm$0.03} & \textbf{0.26$\pm$0.05} & \textbf{0.15$\pm$0.03} \\

\cmidrule(l){2-6}

& \multirow{3}{1cm}{\centering 6,000 Examples}
& Origin &
0.18$\pm$0.02 & 1.44$\pm$0.02 & 0.30$\pm$0.02 \\
& & IS &
0.14$\pm$0.02 & 0.55$\pm$0.07 & 0.18$\pm$0.02 \\
& & Ours &
\textbf{0.13$\pm$0.02} & \textbf{0.35$\pm$0.05} & \textbf{0.16$\pm$0.02} \\

\bottomrule
\end{tabular}
\end{table}

\subsection{Evaluation on Prediction Differences}

In this section, we adopt the metric named {\it prediction differences} to better quantify the difference between the retrained and processed models.
For two BNNs $p_1(\cdot)$ and $p_2(\cdot)$ learned by MCMC, their prediction difference on a given dataset $S$ is defined as
$\mathbb{E}_{\theta_1 \sim p_1} \mathbb{E}_{\theta_2 \sim p_2} \frac{1}{|S|} \sum_{(x_i,y_i) \in S} \| f_{\theta_1}(x_i) - f_{\theta_2}(x_i) \|_1$,
where $f_{\theta_1}(x_i)$ and $f_{\theta_2}(x_i)$ are two prediction confidence vectors for the example $x_i$.
Intuitively, a smaller prediction difference indicates a smaller deviation between the retrained and processed models.

We calculate the prediction difference between the retrained model and the processed model on three different datasets, $S_r$, $S_f$, and $S_{\mathrm{test}}$.
The experiment results on CIFAR-10 and Fashion-MNIST are shown in Table \ref{tab:exp_pred_diff_c10}.
The results show that the proposed MCMC unlearning algorithm can effectively reduce the prediction difference, which further justify the effectiveness of the proposed method.

\end{document}